\documentclass{article}

\usepackage{microtype}
\usepackage{graphicx}
\usepackage{booktabs} %

\usepackage{hyperref}

\usepackage[accepted]{icml2023}

\usepackage{amsmath}
\usepackage{amssymb}
\usepackage{mathtools}
\usepackage{amsthm}

\usepackage[capitalize,noabbrev]{cleveref}

\usepackage{caption}
\usepackage{subcaption}
\usepackage{stfloats}
\usepackage{listings}
\allowdisplaybreaks

\theoremstyle{plain}
\newtheorem{theorem}{Theorem}[section]

\newtheorem{lemma}[theorem]{Lemma}

\theoremstyle{definition}

\theoremstyle{remark}

\newenvironment{example}
  {\pushQED{\qed}\examplex}
  {\popQED\endexamplex}

\usepackage[textsize=tiny]{todonotes}

\DeclareMathOperator*{\F}{\mathcal{F}}
\DeclareMathOperator*{\G}{\mathcal{G}}

\DeclareMathOperator{\GP}{GP}

\newcommand{\ke}{k_\mathrm{EPGP}}
\newcommand{\ks}{k_\mathrm{S-EPGP}}

\newcommand{\CC}{\mathbb{C}}

\newcommand{\EE}{\mathbb{E}}
\newcommand{\RR}{\mathbb{R}}
\newcommand{\PP}{\mathbb{P}}

\newcommand{\xx}{\mathbf{x}}
\newcommand{\yy}{\mathbf{y}}
\newcommand{\zz}{\mathbf{z}}
\newcommand{\pphi}{\mathbf{\phi}}

\lstset{
    basicstyle=\ttfamily
}

\icmltitlerunning{GP Priors for Systems of Linear PDEs with Constant Coefficients}

\begin{document}

\twocolumn[
\icmltitle{Gaussian Process Priors for Systems of Linear Partial Differential Equations with Constant Coefficients}

\icmlsetsymbol{equal}{*}

\begin{icmlauthorlist}
\icmlauthor{Marc Härkönen}{mpi,fano}
\icmlauthor{Markus Lange-Hegermann}{owl}
\icmlauthor{Bogdan Raiță}{gt}
\end{icmlauthorlist}

\icmlaffiliation{mpi}{Max Planck Institute for Mathematics in the Sciences, Leipzig, Germany}
\icmlaffiliation{fano}{Fano Labs, Hong Kong SAR, China}
\icmlaffiliation{owl}{Institute Industrial IT, OWL University of Applied Sciences and Arts, Lemgo, Germany}
\icmlaffiliation{gt}{Georgetown University, Washington, D.C., USA.
}

\icmlcorrespondingauthor{Marc Härkönen}{marc.harkonen@gmail.com}

\icmlkeywords{Gaussian Processes, Nonlinear algebra, Partial Differential Equations}

\vskip 0.3in
]

\printAffiliationsAndNotice{}  %

\begin{abstract}
  Partial differential equations (PDEs) are important tools to model physical systems and including them into machine learning models is an important way of incorporating physical knowledge. %
  Given any system of linear PDEs with constant coefficients, we propose a family of Gaussian process (GP) priors, which we call EPGP, such that all realizations are exact solutions of this system.
  We apply the Ehrenpreis-Palamodov fundamental principle, which works as a non-linear Fourier transform, to construct GP kernels mirroring standard spectral methods for GPs.
  Our approach can infer probable solutions of linear PDE systems from any data such as noisy measurements, or pointwise defined initial and boundary conditions.
  Constructing EPGP-priors is algorithmic, generally applicable, and comes with a sparse version (S-EPGP) that learns the relevant spectral frequencies and works better for big data sets.
  We demonstrate our approach on three families of systems of PDEs, the heat equation, wave equation, and Maxwell's equations, where we improve upon the state of the art in computation time and precision, in some experiments by several orders of magnitude.
\end{abstract}

\section{Introduction}

Gaussian processes (GPs) \cite{RW} are a major tool in probabilistic machine learning and serve as the default functional prior in Bayesian statistics.
GPs are specified by a mean function and a covariance function.
The covariance function in particular can be constructed flexibly to allow various kinds of priors \cite{TLRB_gp} and learning hyperparameters in GPs allows to interpret data \cite{duvenaud2014automatic,steinruecken2019automatic,BernsTowards2020}.
They serve as stable regression models in applications with few data points and provide calibrated variances of predictions.
In particular, they can serve as simulation models for functions that are costly to evaluate, e.g.\ in Bayesian optimization \cite{hernandez2022designing} or active learning \cite{zimmer2018safe}.
Furthermore, GPs are often the models of choice to encode mathematical information in a prior or if mathematical results should be extracted from a model.
One example is the estimation of derivatives from data by differentiating the covariance function  \cite{swain2016inferring,harrington2016differential}.

These techniques using derivatives have been generalized to construct GPs with realizations in the solution set of specific systems of linear partial differential equations (PDEs) with constant coefficients \cite{MacedoCastro2008,scheuerer2012covariance,Wahlstrom13modelingmagnetic,MagneticFieldGP,jidling2018probabilistic,sarkka2011linear}.
These constructions interpret such a solution set as the image of some latent functions under a linear operator matrix.
Assuming a GP prior for these latent function leads to a GP prior for the solution set of the system of PDEs.
\citet{LinearlyConstrainedGP} pointed out that these constructions of GP priors had striking similarities and suggested an approach for a general construction, after which \citet{LH_AlgorithmicLinearlyConstrainedGaussianProcesses} reinterpreted this approach in terms of Gröbner bases and made it algorithmic.
One limitation was that the method could only work on a subclass of systems of linear PDEs with constant coefficients: the so-called \emph{controllable} (or \emph{parametrizable}) systems.
The restriction to such controllable systems was lifted for systems of ordinary differential equations (ODEs) in \cite{besginow2022constraining}.

In this paper, we develop an algebraic and algorithmic construction of GP priors inside the solution set of \emph{any} given system of (ordinary or partial) linear differential equations with constant coefficients, eliminating previous restrictions to special forms of equations, controllable systems, or ODEs.
Our construction is built upon the classical Ehrenpreis-Palamodov fundamental principle (see Section~\ref{section_EP}) and recent algorithms for the construction of Noetherian multipliers used in this theorem \cite{CCHKL,homs21primary,cidruiz2021primary,chen22primary,manssour21linear}.

\begin{figure*}[t]
  \vskip 0.2in
  \centering
  \begin{subfigure}[b]{0.3\textwidth}
    \centering
    \includegraphics[width=\textwidth]{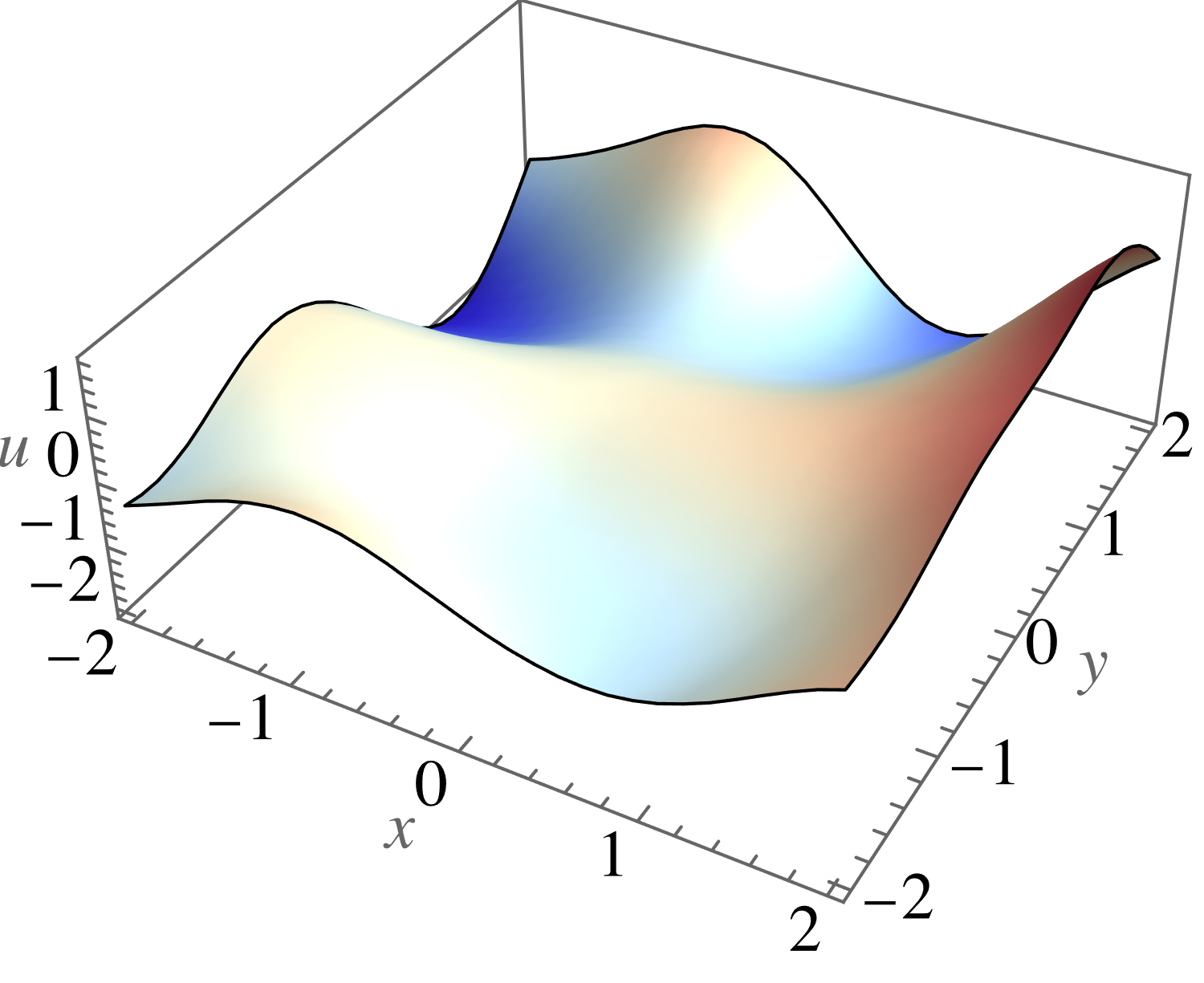}
    \caption{$t = 0$}
  \end{subfigure}
  \hfill
  \begin{subfigure}[b]{0.3\textwidth}
    \centering
    \includegraphics[width=\textwidth]{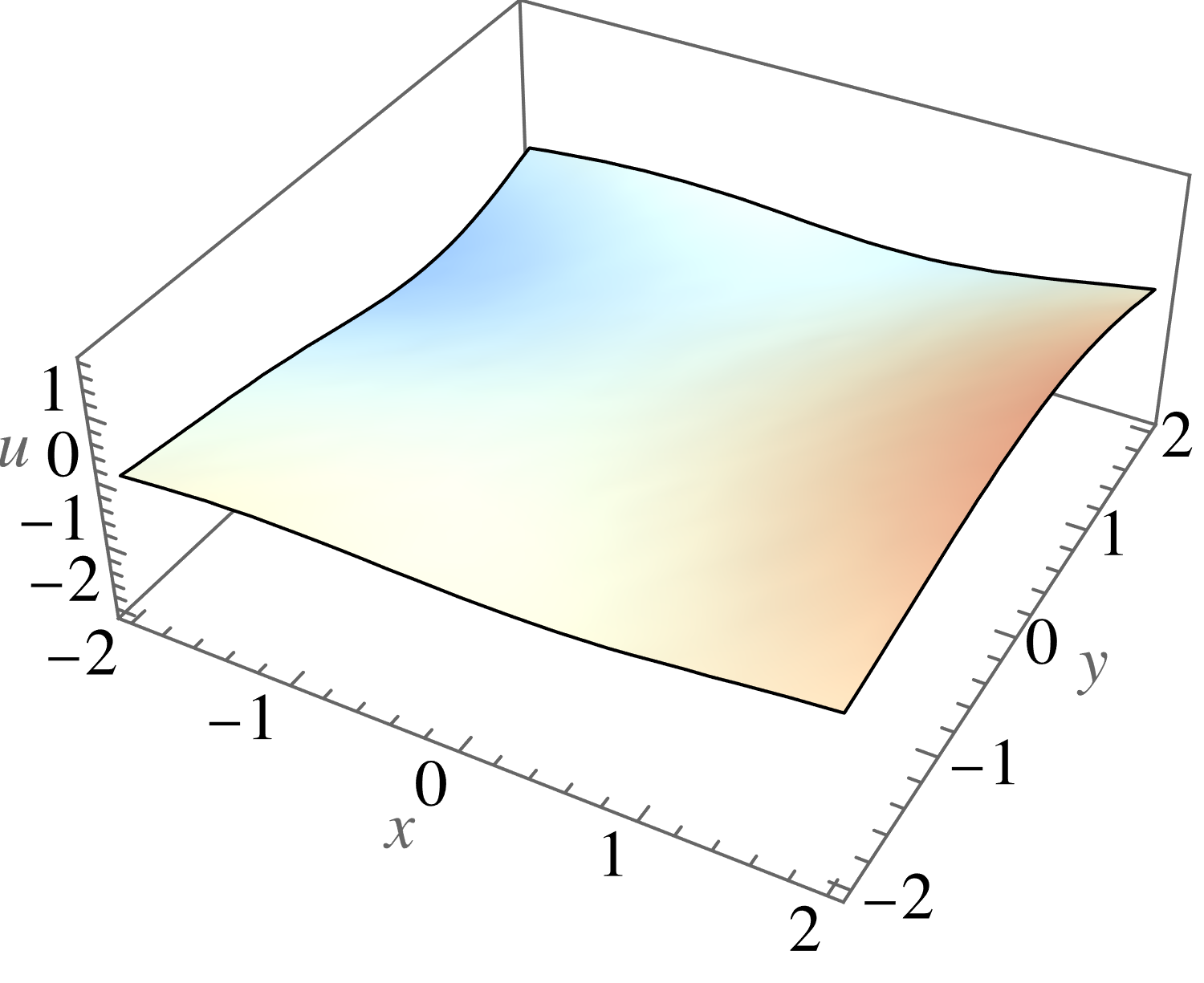}
    \caption{$t = 0.5$}
  \end{subfigure}
  \hfill
  \begin{subfigure}[b]{0.3\textwidth}
    \centering
    \includegraphics[width=\textwidth]{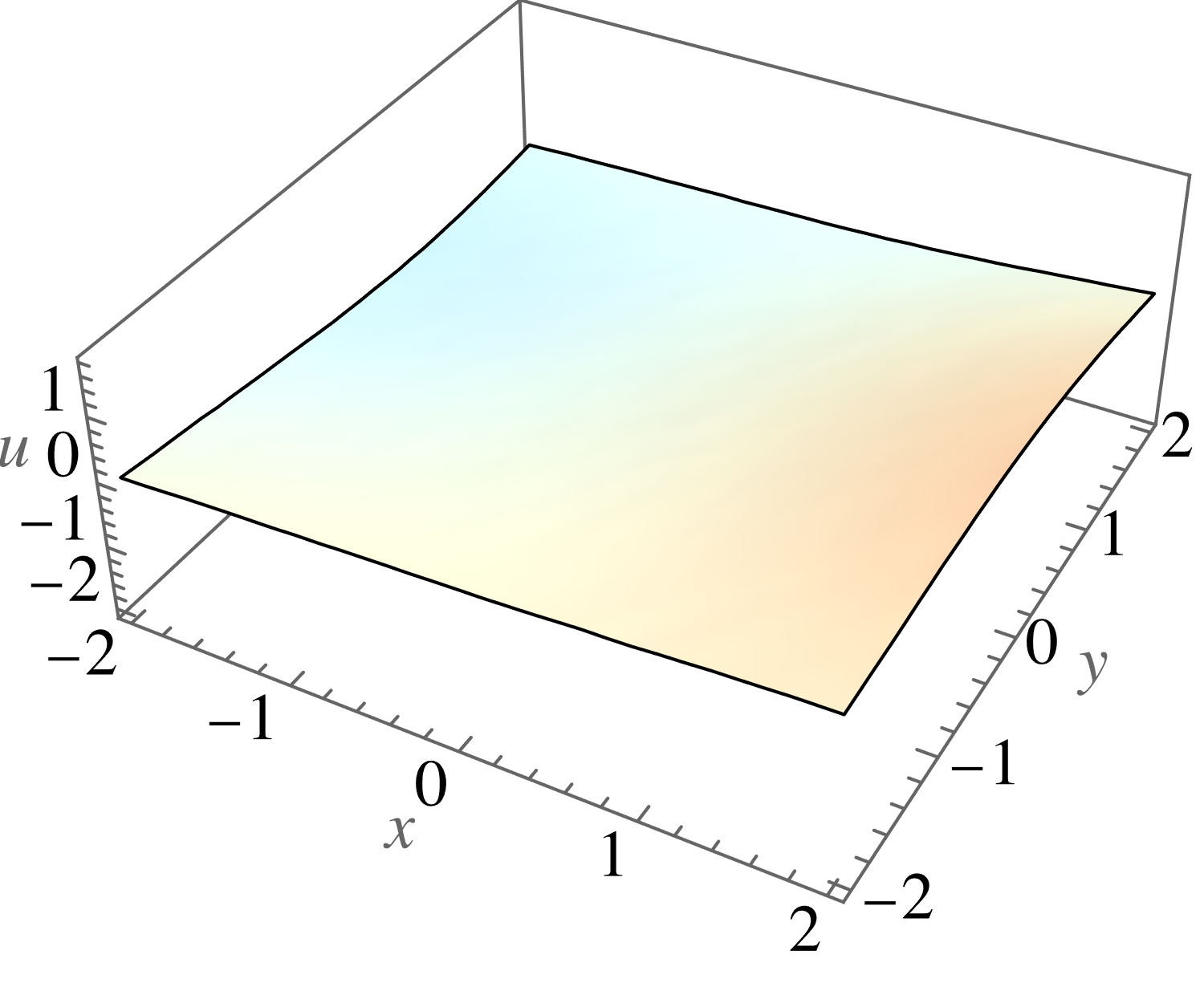}
    \caption{$t = 1$}
  \end{subfigure}
  \caption{Sample of a function $u(x,y,t)$ describing the temperature of a 2-dimensional material over time. The sample is obtained using the EPGP covariance kernel solving the heat equation in 2D. We observe heat dissipating as time progresses.}
  \label{fig:2dheat_sample}
  \vskip -0.2in
\end{figure*}

The major contributions of this paper are as follows:
\begin{enumerate}
    \item We \emph{vastly generalize} previously isolated methods to model systems of ODEs and PDEs from data, such that only the restrictions of linearity and constant coefficients remain and reinterpret the previously existing methods in our framework.
    All previous approaches are special cases: they yield \emph{precisely the same} covariance function as EPGP, in the rare cases where they are applicable.
    \item We do not distinguish between different types of PDEs, e.g. elliptic, hyperbolic, or being of a certain order.
    In particular, we construct GP kernels for \emph{any} linear system of linear PDEs $Af=0$ with constant coefficients.
    \item We demonstrate our approach on various PDE systems, in particular the \emph{homogeneous} Maxwell equations (see \Cref{ssec:maxwell}). 
    \emph{None} of the previously mentioned GP methods is applicable for \emph{any} of the examples studied in Section~\ref{sec:examples}.
    \item We demonstrate high accuracy of our approach, clearly improving upon Physics Informed Neural Network (PINN) methods in several examples (see \Cref{sec:examples}).
\end{enumerate}

Hence, this paper allows the application of machine learning techniques for a vast class of differential equations ubiquitous in physics and numerical analysis.
In particular, we propose a symbolic framework for turning physical knowledge from differential equations into a form usable in machine learning.
The symbolic approach allows us to sample and regress on \emph{exact} solutions of the PDE system, making our methods not merely physics \emph{informed}, but truly physics \emph{constrained}.
Therefore, our GPs result in more precise regression models, since they do not need to use information present in the data or additional collocation points to learn or fit to the differential equation, and instead combine the full information content of the data with differential equations. 

For code and videos corresponding to this paper we refer to the \href{https://arxiv.org/abs/2212.14319}{arXiv}, \href{https://github.com/haerski/epgp}{github}, or \href{https://mathrepo.mis.mpg.de/EPGP/index.html}{mathrepo}.

\section{Gaussian processes (GPs)}

A \emph{Gaussian process (GP)} $g \sim \GP(\mu,k)$ defines a probability distribution on the evaluations of functions $\Omega\to\RR^\ell$, where $\Omega\subseteq\RR^n$, such that function values $g(x_1),\ldots,g(x_m)$
at any points $x_1,\ldots,x_m\in\Omega$ are jointly Gaussian.
A GP $g$ is specified by a \emph{mean function} $\mu:\Omega\to\RR^\ell:x\mapsto \EE[g(x)]$, often a-priori chosen to be zero, and a positive semidefinite%
, smooth \emph{covariance function}
\begin{align*}
  k: \Omega\times\Omega &\longrightarrow \RR^{\ell\times\ell}_{\succeq0} \\
  (x,x') &\longmapsto \EE\left[(g(x)-\mu(x))(g(x')-\mu(x'))^T\right]\mbox{.}
\end{align*}
Then, any finite set of evaluations $[g(x_1), \dotsc, g(x_m)]$ follows the multivariate Gaussian distribution with mean $[\mu(x_1),\dotsc,\mu(x_m)]$ and covariance $\Sigma_{i,j} = k(x_i, x_j)$.
Due to the properties of Gaussian distributions, the posterior is again a GP and can be computed in closed form via linear algebra \cite{RW}.

GPs interplay nicely with linear operators, the foundation of the constructions of \cite{MacedoCastro2008,scheuerer2012covariance,Wahlstrom13modelingmagnetic,MagneticFieldGP,jidling2018probabilistic,sarkka2011linear,LinearlyConstrainedGP,LH_AlgorithmicLinearlyConstrainedGaussianProcesses,besginow2022constraining,LH_AlgorithmicLinearlyConstrainedGaussianProcessesBoundaryConditions,langehegermann2022boundary}:

\begin{lemma}%
\label{lemma_pushforward_gaussian}
	Let $g \sim \GP(\mu(x),k(x,x'))$ with realizations in some function space $\F^\ell$, $\F=C^\infty(\Omega)$, and $B:\F^\ell\to\F^{\ell''}$ a linear, continuous operator.
	Then, the \emph{pushforward} $B_*g$ of $g$ under $B$ is a GP with
	\begin{align}\label{eq_tansformation}
	B_*g \sim \GP(B\mu(x),Bk(x,x')(B')^T)\mbox{ ,}
	\end{align}
	where $B'$ denotes the operation of $B$ on functions with argument $x'$.
\end{lemma}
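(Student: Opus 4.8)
The plan is to verify the defining property of a Gaussian process for $B_*g$ directly: that every finite collection of evaluations is jointly Gaussian, and then to read off the mean and covariance. First I would reduce the statement to finite-dimensional marginals. Fix points $x_1,\ldots,x_m\in\Omega$ and consider the random vector whose blocks are $(B_*g)(x_i)=(Bg)(x_i)\in\RR^{\ell''}$. Each scalar coordinate of this vector has the form $\lambda(g):=e_r^T (Bg)(x_i)$, where $e_r$ is a standard basis vector. Because point evaluation $h\mapsto h(x_i)$ is continuous on $\F=C^\infty(\Omega)$ and $B$ is continuous by hypothesis, each such $\lambda$ is a continuous linear functional on $\F^\ell$.

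The core step is then the claim that a continuous linear functional applied to a Gaussian process produces a Gaussian random variable, and that jointly the vector of all such functionals is Gaussian. I would establish this by approximation: continuity of $\lambda$ lets me write $\lambda(g)$ as an $L^2$-limit of finite linear combinations $\sum_j c_j\, g_{s_j}(y_j)$ of scalar evaluations of $g$ at points $y_j$, using that finite-dimensional evaluation functionals are dense and that the covariance $k$ controls the relevant $L^2$ norms. Each such finite combination is Gaussian, since $[g(y_1),\ldots,g(y_N)]$ is jointly Gaussian by assumption, and Gaussianity is preserved under $L^2$ (equivalently, distributional) limits. Applying the same argument simultaneously to all coordinates of all the $(B_*g)(x_i)$ shows the entire vector is Gaussian, so $B_*g$ is a GP.

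It remains to identify the mean and covariance. For the mean, linearity and continuity of $B$ allow interchanging $B$ with the expectation, giving $\EE[(B_*g)(x)]=\EE[(Bg)(x)]=(B\,\EE[g])(x)=(B\mu)(x)$. For the covariance, I would expand $\EE\!\left[((B_*g)(x)-B\mu(x))((B_*g)(x')-B\mu(x'))^T\right]$ and pull the operator $B$ out on the left (acting on the $x$-argument) and $B'$ out on the right (acting on the $x'$-argument, transposed), yielding $B\,k(x,x')\,(B')^T$ exactly as claimed; the block structure of the resulting $\ell''\times\ell''$ matrix is just bookkeeping.

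The hard part will be the rigorous interchange of the operator $B$ and the expectation, i.e.\ justifying $\EE[Bg]=B\,\EE[g]$ and its second-moment analogue. This is precisely where continuity of $B$ is essential: one argues that a continuous linear operator commutes with Bochner integration (the expectation being a vector-valued integral), or, staying inside the finite-dimensional reduction above, that the approximating finite evaluation functionals already satisfy the identity and that continuity permits passing to the limit. Every remaining step is routine linear-algebra and linearity bookkeeping.
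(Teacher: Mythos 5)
Your proposal is correct in outline but follows a genuinely different route from the paper. You argue directly on finite-dimensional marginals: each coordinate of $(B_*g)(x_i)$ is a continuous linear functional $\delta_{x_i}\circ B$ applied to $g$, you approximate such functionals by finite linear combinations of point evaluations of $g$, and you pass Gaussianity through the limit. The paper instead makes a detour through Gaussian \emph{measures} on $\F=C^\infty(\Omega)$: it proves that the span of $\{\delta_x\}$ is weakly-* dense in $\F^*$ and that the relevant sigma-algebras coincide, establishes a bijective correspondence between GPs and Gaussian measures (Theorem~\ref{thm:GP-GM}), proves the pushforward property for Gaussian measures by a one-line characteristic-functional computation (Lemma~\ref{lem:PF-GM}), and then transfers back. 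Both arguments hinge on the same density fact (Lemma~\ref{lem:density_F*}), but the measure-theoretic route buys a cleaner treatment of the limiting step: the identity $\EE_{B_*\gamma}[\exp(\sqrt{-1}L)]=\EE_\gamma[\exp(\sqrt{-1}L\circ B)]$ holds exactly, with no approximation, so Gaussianity of $L(Bg)$ for \emph{every} $L\in(\F^{\ell''})^*$ is immediate. Your route is more elementary but leaves one step that needs care: weak-* density of the evaluations only gives convergence $L_\alpha(f)\to\lambda(f)$ for each fixed $f$ (and along a net, not necessarily a sequence, since $\F^*$ with the weak-* topology is not metrizable); upgrading this to the $L^2(\PP)$ convergence you invoke, or even just to convergence of the means and covariances of the approximating Gaussian vectors, requires an additional argument (e.g.\ the standard fact that a.s.\ or distributional limits of Gaussians are Gaussian with convergent first and second moments, or the density of a weak-* dense subspace of $\F^*$ in the Cameron--Martin closure $\F^*_\gamma$). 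Your final interchange $\EE[Bg]=B\,\EE[g]$ and its second-moment analogue is justified in the paper by the same Bochner-integral commutation you propose, so neither approach is more rigorous there.
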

The proof we give  in \Cref{appendix_proof_lemma_pushforward_gaussian} in fact works for linear continuous operators $B\colon F^\ell\rightarrow \G^{\ell''}$ for spaces $\F,\,\G$ which embed continuously in the space $C(\Omega)$ of continuous functions. The analogous result holds in $L^p$ spaces, which is out of scope of the current paper.
For complex valued GPs, one replaces the transpose by the Hermitian transpose.

\section{The Ehrenpreis-Palamodov fundamental principle}\label{section_EP}

Consider the familiar case of a linear ODE with constant coefficients, e.g.\ $f'''(x)-3f'(x)+2f(x)=0$.
The solution space of the ODE is determined by its \emph{characteristic polynomial} $z^3-3z+2$ via its roots and their multiplicities.
In this case $z^3-3z+2$ factors into $(z-1)^2\cdot(z-(-2))$, so all solutions are linear combinations of the three functions $e^{1x}$, $x\cdot e^{1x}$ and $e^{-2x}$.
We call functions of the form $D(x)\cdot e^{zx}$ \emph{exponential-polynomial} functions whenever $D(x)$ is a polynomial and $z$ is a constant.
This idea generalizes to systems of ODEs and PDEs: instead of taking linear combinations of exponential-polynomial functions over the finitely many zeros of the characteristic polynomial, one takes a weighted integral of exponential-polynomial functions over a (potentially multi-dimensional) characteristic variety\footnote{A variety is defined as the zero set of a system of polynomials.}.
This generalization is formalized in the Ehrenpreis-Palamodov fundamental principle, \Cref{thm:ehrenpreis-palamodov}.

More formally, let $\Omega$ be a compact, convex subset of $\RR^n$.
Consider systems of $\ell$ equations with smooth functions $f \colon \Omega \to \CC^{\ell''}$ as potential solutions. %
We encode such a system of PDEs as an $(\ell \times {\ell''})$ matrix $A$ with entries in the polynomial ring $R = \CC[\partial_1,\dotsc,\partial_n]$ in $n$ variables.
Here the symbol $\partial_i$ denotes the operator $\frac{\partial}{\partial x_i}$ and a monomial $\partial^\alpha = \partial_1^{\alpha_1}\dotsb\partial_n^{\alpha_n}$ denotes the operator $\frac{\partial^{|\alpha|}}{\partial x_1^{\alpha_1}\dotsb\partial x_n^{\alpha_n}}$.
For example, for $\ell = 3, {\ell''} = 2, n = 2$, the PDE system
\begin{align*}
  Af
  =
  \begin{bmatrix}\partial_1 & -\partial_1^2\partial_2 \\ \partial_2 & 1 \\ 0 & -\partial_1 + 3 \partial_2 \end{bmatrix} f
  = 0
\end{align*}
translates to a system of 3 homogeneous equations
\begin{align*}
 \frac{\partial f_1}{\partial x_1} - \frac{\partial^3 f_2}{\partial x_1^2 \partial x_2} = \frac{\partial f_1}{\partial x_2} + f_2  = - \frac{\partial f_2}{\partial x_1} + 3 \frac{\partial f_2}{\partial x_2} = 0.
\end{align*}
Its solutions are vector valued functions $f(x_1,x_2) = (f_1(x_1,x_2), f_2(x_1,x_2))^T$.
Another example is the 2 dimensional heat equation, where $A$ is the $1 \times 1$ matrix $A = [\partial_x^2 + \partial_y^2 - \partial_t]$.
Its solutions are scalar functions $u(x,y,t)$, such as the one displayed in \Cref{fig:2dheat_sample}.

The famed Ehrenpreis-Palamodov fundamental principle asserts that all solutions to the PDEs represented by $A$ can be written as suitable \emph{integrals} of exponential-polynomial solutions, each of which corresponds to roots and multiplicities of the polynomial module generated by rows of $A$.

\begin{theorem}{\citep{EHRENPREIS,PALAMODOV,HORMANDER,BJORK}}\label{thm:ehrenpreis-palamodov}
  Let $A \in R^{\ell \times {\ell''}}$ and let $\Omega \subseteq \RR^n$ be a convex, compact set.
  There exist algebraic varieties $\{V_1,\dotsc,V_s\}$ and ${\ell''}$-tuples of polynomials $\{D_{i,1}(\xx, \zz), \dotsc, D_{i,m_i}(\xx,\zz)\}_{i=1,\dotsc,s}$
  such that any smooth solution $f \colon \Omega \to \RR^{\ell''}$ to the equation $Af = 0$ can be written as
  \begin{align}\label{eq:ehrenpreis-palamodov}
    f(\xx) &= \sum_{i=1}^s \sum_{j=1}^{m_i} \int_{V_i} D_{i,j}(\xx, \zz) e^{\langle \xx, \zz \rangle}%
    \,d\mu_{i,j}(\zz)%
  \end{align}
  for a suitable choice of measures $\mu_{i,j}$.%
\end{theorem}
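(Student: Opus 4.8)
The plan is to reduce the analytic statement to an algebraic one about the polynomial module attached to $A$, and then to exploit the duality between spaces of smooth solutions and finitely generated modules over $R = \CC[\partial_1,\dots,\partial_n] \cong \CC[z_1,\dots,z_n]$. Concretely, the transpose $A^T$ defines an $R$-linear map $R^{\ell} \to R^{\ell''}$, and I would work with the cokernel module $\mathcal{M} = R^{\ell''}/A^T R^{\ell}$, which records the presentation $R^{\ell} \xrightarrow{A^T} R^{\ell''} \to \mathcal{M} \to 0$. Applying $\Hom_R(-,\mathcal{E})$ for a suitable function space $\mathcal{E}$ identifies the solution space $\{f : Af = 0\}$ with $\Hom_R(\mathcal{M},\mathcal{E})$. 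The first step is then to record that, under the Fourier–Laplace transform identifying $\partial_j$ with multiplication by $z_j$, applying $A(\partial_\xx)$ to an exponential $e^{\langle \xx,\zz\rangle}$ multiplies it by the matrix $A(\zz)$, and more generally $A(\partial_\xx)\bigl( D(\xx,\zz)\,e^{\langle \xx,\zz\rangle}\bigr)$ is again exponential–polynomial with symbol obtained by differentiating $A(\zz)D$ in $\zz$. This observation converts the PDE constraint into a vanishing constraint for polynomials on the characteristic variety.

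The second step is to produce the data $\{V_i\}$ and $\{D_{i,j}\}$ by primary decomposition. I would take a primary decomposition of $\mathcal{M}$, whose associated primes $\mathfrak{p}_1,\dots,\mathfrak{p}_s$ cut out the varieties $V_i = V(\mathfrak{p}_i)$. To each primary component I attach its Noetherian operators (Palamodov multipliers) $D_{i,1},\dots,D_{i,m_i}$: differential operators with polynomial coefficients, viewed here as polynomials $D_{i,j}(\xx,\zz)$, with the defining property that a polynomial vector lies in the module if and only if every $D_{i,j}(\xx,\partial_\zz)$ applied to it vanishes identically along $V_i$. Using the symbol computation from the first step, this property is exactly what makes each $D_{i,j}(\xx,\zz)\,e^{\langle \xx,\zz\rangle}$, for $\zz \in V_i$, an exact solution of $Af=0$; hence every integral on the right-hand side of \eqref{eq:ehrenpreis-palamodov} solves the system, which gives the easy inclusion.

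The hard part will be completeness: showing that every smooth solution on the convex compact $\Omega$ is captured by such integrals, i.e.\ surjectivity of the representation map. Here I would pass to the topological dual and transport the problem through the Fourier–Laplace transform into a division problem, where a functional annihilating all the elementary solutions must, after transform, be divisible by $\mathcal{M}$, with the divisibility controlled precisely by the Noetherian operators on each $V_i$. Establishing this requires the quantitative division theorems and \L{}ojasiewicz-type lower bounds of Ehrenpreis–Palamodov–H\"ormander, together with the machinery of analytically uniform spaces (or Palamodov's homological formulation via $\operatorname{Ext}$ and injective resolutions) to guarantee that the formal algebraic decomposition lifts to convergent integral representations with genuine measures $\mu_{i,j}$, the convexity of $\Omega$ entering to control supports of the transforms. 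This analytic surjectivity --- matching the algebraic primary decomposition to an honest spectral synthesis of solutions --- is the essential obstacle, and is the content for which I would ultimately appeal to the cited works of Ehrenpreis, Palamodov, H\"ormander, and Bj\"ork.
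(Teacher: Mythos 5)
The paper offers no proof of this theorem: it is quoted verbatim from the classical literature (Ehrenpreis, Palamodov, H\"ormander, Bj\"ork), so there is no internal argument to compare yours against. Your outline is an accurate roadmap of the standard proof in those references --- the module $\mathcal{M}=R^{\ell''}/A^TR^{\ell}$, the symbol computation showing each $D_{i,j}(\xx,\zz)e^{\langle \xx,\zz\rangle}$ with $\zz\in V_i$ solves the system (the easy inclusion), and the correct identification of completeness as the genuinely hard step resting on division theorems, \L{}ojasiewicz bounds, and analytically uniform spaces --- and, like the paper itself, you ultimately defer that analytic core to the same cited works rather than proving it. The only imprecision worth flagging is that you blur the distinction between Noetherian \emph{operators} $D(\xx,\partial_\zz)$ (which test membership in the module by differentiating and restricting to $V_i$) and Noetherian \emph{multipliers} $D(\xx,\zz)$ (the polynomials that multiply the exponential in \eqref{eq:ehrenpreis-palamodov}); these are dual objects related by a nontrivial correspondence, and the paper's terminology follows the multiplier convention.
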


Following the terminology in \cite{homs21primary,cidruiz2021primary}, we call the polynomials $D_{i,j}(\xx,\zz)$ \emph{Noetherian multipliers}.
The Noetherian multipliers $D_{i,j}$ and varieties $V_i$ appearing in \Cref{thm:ehrenpreis-palamodov} can be computed algebraically; they are the higher-dimensional analogue of the roots and multiplicities of the characteristic polynomial.
An algorithm for computing $D_{i,j}$ and $V_i$ is implemented under the command \texttt{solvePDE} in the Macaulay2 \cite{grayson2002macaulay2} package \texttt{NotherianOperators} \cite{CCHKL}.
 A modern, algebraic and algorithmic treatment of linear PDEs with constant coefficients can be found in \cite{homs21primary,cidruiz2021primary,chen22primary,manssour21linear}.
 We refer the interested reader to \Cref{app:supporting_function} for questions regarding convergence of the integrals in \eqref{eq:ehrenpreis-palamodov}.

\section{Gaussian Process Priors from the Ehrenpreis-Palamodov Theorem}

We now construct GPs whose samples %
solve a system of linear PDEs $Af=0$, using the Ehrenpreis-Palamodov fundamental principle, \Cref{thm:ehrenpreis-palamodov}, as a blueprint.
We set the mean function to zero, so our task, by \Cref{lemma_pushforward_gaussian}, will be to find a covariance function that satisfies the PDEs in both the $\xx$ and $\xx'$ arguments.
The varieties $V_i$ and polynomials $D_{i,j}$ in \cref{eq:ehrenpreis-palamodov} can be computed algorithmically \cite{CCHKL,homs21primary,cidruiz2021primary,chen22primary,manssour21linear}, so what remains is to choose the measures $\mu_{i,j}$, each supported on the variety $V_i$.

We propose two approaches for choosing the measures.
In the first one, coined Ehrenpreis-Palamodov Gaussian Process (EPGP), the $\mu_{i,j}$ are chosen to be Gaussian measures supported on the variety, with optional trainable length scale and shift parameters. 
This resembles the construction by \citet{wilson2013gaussian}, but applied to Ehrenpreis-Palamodov integrals as opposed to Fourier transforms.
Our second approach, Sparse EPGP (S-EPGP), chooses $\mu_{i,j}$ to be linear combinations of Dirac delta measures, whose locations and weights are learned.
See \cite{lazaro20sparse} for a similar approach applied to Fourier transforms.

Before describing our covariance functions, we discuss the question of how to integrate over an algebraic variety.
In certain cases our variety has a polynomial parametrization, in which case the integral can easily be computed by substituting the parametrization in.
For example if $V$ is the variety corresponding to the parabola $y = x^2$, we can rewrite an integral $\int_V f(x,y) \, d\mu(x,y)$ over $V$ as $\int_\CC f(x,x^2) \, d\mu'(x)$.

However, most algebraic varieties $V$ do not have a parametrization.
In these cases we construct a parametrization implicitly by solving equations.
If for example the variety $V$ is the set of points $(x,y,z)$ where $x^3-y^2+z^2 = 0$, we could solve for $z$ to get $z = \pm \sqrt{y^2 - x^3}$.
Thus, an integral of the form $\int_V f(x,y,z) \, d\mu(x,y,z)$ can be rewritten as a sum $\int_{\CC^2} f(x,y,\sqrt{y^2-x^3}) \, d\mu_1(x,y) + \int_{\CC^2} f(x,y,-\sqrt{y^2-x^3}) \, d\mu_2(x,y)$ of integrals over $\CC^2$.
This construction works for arbitrary varieties $V \subseteq \CC^n$ and relies on results in algebraic dimension theory.
We now cite the main results and refer to e.g.~ the textbook by \citet[Sec.~13.1]{eisenbud1995commutative} for a comprehensive treatment.

Suppose we denote the coordinates of $\CC^n$ by $z_1,\dotsc,z_n$.
If $V$ has dimension $d$, there is a set of $d$ \emph{independent variables}, say $\zz' = (z_1,\dotsc,z_d)$ after reordering, on which the remaining variables $\zz'' = (z_{d+1},\dotsc,z_{n})$ depend algebraically.
Thus for each choice of $\zz' \in\CC^d$, there is a finite number of $\zz'' \in \CC^{n-d}$ such that $\zz = (\zz', \zz'') \in V$.
We denote this set by $\mathcal{S}_{\zz'} := \{\zz \in V \colon (z_1,\dotsc,z_d) = \zz' \}$.
Using this notation, an integral $\int_V f(\zz) \, d\mu(\zz)$ over $V$ can now be rewritten as an integral $ \int_{\CC^d} \sum_{\zz \in \mathcal{S}_{\zz'}} f(\zz) \, d\mu(\zz')$ over the much easier to handle affine space $\CC^d$, at the cost of changing the measure and splitting our integral into several pieces.

\subsection{Ehrenpreis-Palamodov Gaussian Processes (EPGP)} \label{sec:epgp}
Let $Af=0$ be a system of PDEs whose solutions are, by Ehrenpreis-Palamodov, of the form $\phi(\xx) = \sum_j \int_V D_j(\xx,\zz) e^{\langle \xx, \zz \rangle} \, d\mu_j(\xx)$.
We define the EPGP kernel $\ke(\xx,\xx')$ by combining the Ehrenpreis-Palamodov representation in both inputs $\xx, \xx'$, the above implicit parametrization of the integrals, and a Gaussian measure on the frequency space of the $\zz$.
We construct one covariance kernel for each summand in $\phi(\xx)$ and sum them to get the EPGP kernel $\ke$:
\begin{align}
  \Psi(\xx,\zz') &:= \sum_j \sum_{\zz \in \mathcal{S}_{\zz'}} D_j(\xx,\zz) e^{\langle \xx, \zz \rangle} \nonumber \\
    \ke(\xx,\xx') &:= \label{eq:epgp}\\
                  & \hspace{-2em} \int_{\zz' \in \sqrt{-1}\RR^d} \Psi(\xx,\zz') \Psi(\xx',\zz')^H e^{-\frac{\|\zz'\|^2}{2}} \, d\mathcal{L}(\zz'). \nonumber
\end{align}
Here the superscript $H$ denotes the Hermitian transpose and $\mathcal{L}$ is the usual Lebesgue measure.
We note that the integral may not converge everywhere, but we can introduce a shifting term to enforce convergence in any compact set $\Omega$. See \Cref{app:supporting_function} for details.
It is straightforward to check that $\ke(\xx,\xx')$ satisfies the PDEs in $A$ and the Hermitian transpose ensures that $\ke$ is positive semidefinite.
A strictly real valued GP is obtained by taking the real part of $\ke$.

In \eqref{eq:epgp}, we replaced the integral over the complex space $\zz' \in \CC^d$ by an integral over purely imaginary vectors $\zz' \in \sqrt{-1} \RR^d$.
This leads to more stationary kernels and we further motivate this choice and the choice of the Gaussian measure, in three examples:

\begin{example}[No PDE]\label{ex:no_pde}
If we impose no PDE constraints, we have $A = 0$, one variety $V = \CC^n$ and one Noetherian multiplier $D(\xx,\zz) = 1$. So equation \eqref{eq:epgp} becomes
\begin{align*}
  \ke(\xx,\xx') &= \int_{\zz \in \RR} e^{\sqrt{-1}\langle \xx-\xx', \zz \rangle} e^{-\frac{\|\zz\|^2}{2}} \, d\mathcal{L}(\zz)  \\
                &= \left(\sqrt{2\pi}\right)^n e^{-\frac{\|\xx-\xx'\|^2}{2}}
\end{align*}
Thus, without PDEs, the EPGP kernel is the squared-exponential kernel, up to a constant scaling factor.

The discussion in this example extends to any system of PDEs $A$ whose characteristic variety $V$ is an affine subspace of $\CC^n$.
For details, refer to \Cref{sec:app_affine}, cf.\ also \cite{LH_AlgorithmicLinearlyConstrainedGaussianProcessesBoundaryConditions}.
\end{example}

\begin{example}[Heat equation]\label{ex:heat}
Let $A(\partial_x, \partial_t) = \partial_x^2 - \partial_t$ be the one-dimensional heat equation. If we let $z_1, z_2$ correspond to $\partial_x, \partial_t$ respectively, the variety $V$ is given by $z_1^2 = z_2$ and the sole Noetherian multiplier is $D=1$.
The EPGP kernel is defined when $t + t' > -\frac{1}{2}$, in which case we have
\begin{align*}
  \ke(x,t;x',t') &= \int_{z \in \RR} e^{\sqrt{-1}(x-x')} e^{-z^2(t+t')} e^{-\frac{z^2}{2}} \, d\mathcal{L}(z) \\
                 &= \sqrt{2\pi} \frac{e^{-\frac{(x-x')^2}{2(1+2(t+t'))}}}{\sqrt{1+2(t+t')}}
\end{align*}
Here, integrating over $\sqrt{-1}\RR$ as opposed to $\CC$ removes unphysical solutions to the heat equation, such as $\phi(x,t) = e^{x+t}$ where heat increases exponentially with time.

This covariance function is the squared exponential covariance w.r.t.\ the space dimension at each fixed pair of times $(t,t')$.
With increasing time, the scaling of the covariance shrinks resp.\ the length scales increase.
We interpret this as heat going back to the mean value resp.\ being more smoothly distributed over time.
\end{example}

\begin{example}[Wave equation]\label{ex:wave}
Let $A(\partial_x,\partial_t) = \partial_x^2 - \partial_t^2$ be the 1-dimensional wave equation.
The variety here is the set $z_1^2 - z_2^2 = 0$, which is the union of the lines $z_1 = z_2$ and $-z_1 = z_2$.
Thus we have $V_j = V(z_1+(-1)^jz_2)$ and $D_j = 1$ for $j=1,2$.
The EPGP kernel is equal to
\begin{gather*}
\begin{split}
  \ke(x,t;x',t') = \int_{z \in \RR} \left( e^{\sqrt{-1}z(x-t)} + e^{\sqrt{-1}z(x+t)} \right) \cdot \\ \left( e^{-\sqrt{-1}z(x'-t')} + e^{\sqrt{-1}z(x'+t')} \right) e^{-\frac{z^2}{2}} \, d\mathcal{L}(z)
\end{split} \\
\begin{split}
  = \sqrt{2\pi} \bigl( e^{-\frac{((x-t)-(x'-t'))^2}{2}} + e^{-\frac{((x-t)-(x'+t'))^2}{2}} + \\ e^{-\frac{((x+t)-(x'-t'))^2}{2}} + e^{-\frac{((x+t)-(x'+t'))^2}{2}}\bigr)
\end{split}
\end{gather*}
Here our choice of restricting to integrals over strictly imaginary numbers $\sqrt{-1}\RR$ gets rid of non-stable solutions to the wave equations, such as $e^{x+t}$.

While $\ke$ has four summands, we can also consider kernels with fewer summands.
E.g.\
\begin{align*}
  k_2(x,t;x',t') = e^{-\frac{((x+t)-(x'+t'))^2}{2}} + e^{-\frac{((x-t)-(x'-t'))^2}{2}},
\end{align*}
yields the covariance kernel of the GP $\phi_1(x+t) + \phi_2(x-t)$, where $\phi_1,\phi_2 \sim GP(0, e^{-\frac{(x-x')^2}{2}})$.
This is kernel in fact covers all smooth solutions to the 1-D wave equation, as d'Alembert discovered in 1747 \cite{d1747recherches} that all solutions are superpositions of waves travelling in opposite directions.
\end{example}

Here we have tacitly assumed that the Ehrenpreis-Palamodov integral requires only one variety $V$, i.e.\ $s = 1$.
In the general case $s > 1$, we repeat the above construction for each variety $V_i$ and finally sum the resulting $s$ kernels.

We note that we may also parametrize the Gaussian measure imposed in \cref{eq:epgp}, for example with mean and scale parameters.
By replacing $\exp(-\frac{\|\zz'\|^2}{2})$ by e.g. $\exp(-\sum_{i=1}^d \frac{(z_i - \mu_i)^2}{2\sigma_i^2})$, we obtain a family of EPGP kernels, which we can train on given data to find the parameters $\mu_i, \sigma^2_i$ maximizing the log-marginal likelihood.
In the case of no PDE constraints, we recover exactly the covariance kernels proposed in \cite{wilson2013gaussian}.
In \Cref{fig:face_frames} of \Cref{ssec:heat_eqn} we investigate the effect of a scale parameter in the posterior distribution of a solution to the 2 dimensional heat equation.

\subsection{Sparse Ehrenpreis-Palamodov Gaussian Processes (S-EPGP)}\label{ssec:S-EPGP}
Instead of imposing the measure $e^{-\|\zz'\|^2/2} \,d\mathcal{L}(\zz')$ in our kernel, we outline a computationally efficient method for 
estimating the integral by a weighted sum.
The kernels described in this section resemble the ones in \cite{lazaro20sparse}, but using representations of PDE solutions via the Ehrenpreis-Palamodov fundamental principle as opposed to the Fourier transform.
For notational simplicity we assume that the $D_j(\xx,\zz)$ are scalar valued and there is only one variety in \cref{eq:ehrenpreis-palamodov}, i.e.\ $s=1$; the extension of our analysis to the general case is straight forward and a concrete example of S-EPGP applied to Maxwell's equations can be seen in \Cref{ssec:maxwell}.

The idea is to choose the measures $\mu_{i,j}$ in \cref{eq:ehrenpreis-palamodov} as linear combinations of Dirac ``delta functions''.
Ideally we would define a GP prior with realizations of the form
\begin{align*}
  f(\xx) = \sum_{j=1}^m \sum_{i=1}^r w_{i,j} D_j(\xx, \zz_{i,j}) e^{\langle \xx, \zz_{i,j} \rangle},
\end{align*}
where all $z_{i,j}\in V$.
This is precisely the Ehrenpreis-Palamodov representation of solutions, as in equation~\ref{eq:ehrenpreis-palamodov}, with $r$ Dirac delta measures for each integral.
Given training data, we would then choose $\zz_{i,j} \in V$ as to maximize the log marginal likelihood.
Unfortunately, the requirement for $\zz_{i,j}$ to lie on an algebraic variety makes it challenging to directly use a gradient descent based optimization method.

Instead, we use the implicit parametrization trick from the beginning of this section %
and are looking at a GP with realizations of the form
\begin{align*}
  f(\xx) &= \sum_{j=1}^m \sum_{i=1}^r w_{i,j} \frac{1}{|S_{\zz'_{i,j}}|} \left( \sum_{\zz \in S_{\zz'_{i,j}}} D_j(\xx, \zz) e^{\langle \xx, \zz \rangle} \right) \\
         &=: \mathbf{w}^T \pphi(\xx),
\end{align*}
where now $\zz'_{i,j} \in \CC^d$, $d = \dim V$, and $\mathbf{w}, \pphi(x)$ are both vectors of length $mr$.
For the same reasons as in the previous section we may also choose $\zz'_{i,j} \in \sqrt{-1} \RR^d$.
To turn $f(\xx)$ into a GP, set $w_{i,j} \sim \mathcal{N}\left(0, \frac{1}{mr} \Sigma\right)$, where $\Sigma$ is a diagonal matrix with positive entries $\sigma_{i}^2$ for $i = 1,\dotsc, mr$.
We then get a covariance function of the form
\begin{align}\label{eq:sepgp-kernel}
  \ks(\xx,\xx') = \frac{1}{mr} \pphi(\xx)^H \Sigma \pphi(\xx')
\end{align}
where $\pphi(\xx)^H$ denotes the conjugate transpose of $\pphi(\xx)$.
Refer to \Cref{appendix_SEPGP} for details regarding the S-EPGP objective function and inference.
An example implementation in PyTorch can be found in \Cref{app:code}.

\subsection{Summary}
Below, we summarize concrete steps required to constuct (S-)EPGP kernels.
We emphasize every step can be implemented algorithmically, given a system of PDEs as input.
\begin{enumerate}
  \item Use the Macaulay2 command \texttt{solvePDE} to compute the varieties $V_i$ and Noetherian multipliers.
  \item Find nice parametrizations $\mathbb{C}^d \to V_i$ for each variety. If such a parametrization does not exist, use a combination of a random linear change of coordinates and a univariate polynomial solver. %
  \item Compute the resulting integral \eqref{eq:epgp}. If a closed form solution exists, use it as-is to obtain an EPGP kernel. If not, use S-EPGP \eqref{eq:sepgp-kernel}.
\end{enumerate}
Note that EPGP can be Monte-Carlo approximated using the S-EPGP kernel with frozen, randomly selected $z_{i,j}$ parameters.

\section{Comparison to the Literature}

Physics informed methods are a central research topic in machine learning.
The PINN approach adds additional loss terms for a deep neural network from the differential equations at collocation points, sometimes combined with feature engineering, specific network structures, usage of symmetries and similar techniques, see e.g.\ \cite{milligen1995neural, lagaris2000neural, raissi2019physics, cuomo2022scientific, drygala2022generative}.
Such techniques also include GPs as a tool, e.g.\ 
\cite{zhang2022pagp} uses GPs to estimate solutions of a single PDE where the derivatives of the GPs are used in the loss function and \cite{chen2022apik} uses GPs to model a single function constrained by a single linear PDE.
Another recent approach uses GPs and collocation points to solve linear PDE systems with constant coefficients \cite{pfortner2022physics}.

There are several other deep learning approaches to systems of PDEs.
As an example, weak adversarial networks \cite{zang2020weak} strive for a Nash equilibrium between a neural network that minimizes a weak formulation for a PDE and a second neural network modeling the test function in this weak formulation.
Alternatively, when given a variational formulation of a PDE, where the solution of the PDEs minimizes an integral, the deep Ritz method \cite{yu2018deep} approximates solutions of PDEs via a neural network such that a discrete approximation of the integral is minimized.
Ordinary differential equations have been used to construct deep neural networks \cite{chen2018neural}, which has in turn being used to learn differential equations \cite{saemundsson2020variational}.
See also the review \cite{tanyu2022deep} on similar deep learning methods.

The approaches in \cite{MacedoCastro2008,scheuerer2012covariance,Wahlstrom13modelingmagnetic,MagneticFieldGP,jidling2018probabilistic,sarkka2011linear,LinearlyConstrainedGP,LH_AlgorithmicLinearlyConstrainedGaussianProcesses,dong1989kriging,van2001kriging,albert2019gaussian} construct GPs for controllable systems of linear PDEs with constant coefficients using parametrizations and \cref{lemma_pushforward_gaussian}.
In the language of our paper, the controllable systems are the systems with characteristic variety equal to the full space of frequencies, see Appendix~\ref{sec:app_affine}.
In particular, the approaches in all of the above papers are special cases of our EPGPs.
When we would apply EPGPs to the differential equations treated in these papers, we would get precisely the same results.
However, none of these approaches can treat the three examples that we demonstrate in Section~\ref{sec:examples}, as these examples all have proper characteristic varieties.
\citet{besginow2022constraining} construct priors for all systems of linear ODEs with constant coefficients by splitting apart the embedded components in the characteristic variety and model them via linear regression, whereas the controllable components are again parametrized.
Again, this approach is a special case of EPGPs.

Several paper deal with special cases of controllable systems.
The papers \cite{pmlr-v5-alvarez09a,hartikainen2012sequential,alvarez2013linear,reece2014efficient,alvarado2014latent,ghosh2015modeling,raissi2017machine,camps2018physics,sarkka2018gaussian,nayek2019gaussian,pang2019neural,rogers2020application,gahungu2022adjoint} constructs priors for linear ODE or PDE systems with forcing terms, which are also controllable.
Notably, \cite{ward2020black} used these methods in the context of linearization.
Furthermore, \citet{ranftl2022connection} uses GPs to construct neural networks which only allow approximate solutions to given PDEs as trained functions. 

\begin{figure*}[t]
  \vskip 0.2in
  \begin{center}
    \includegraphics[width=0.5\linewidth]{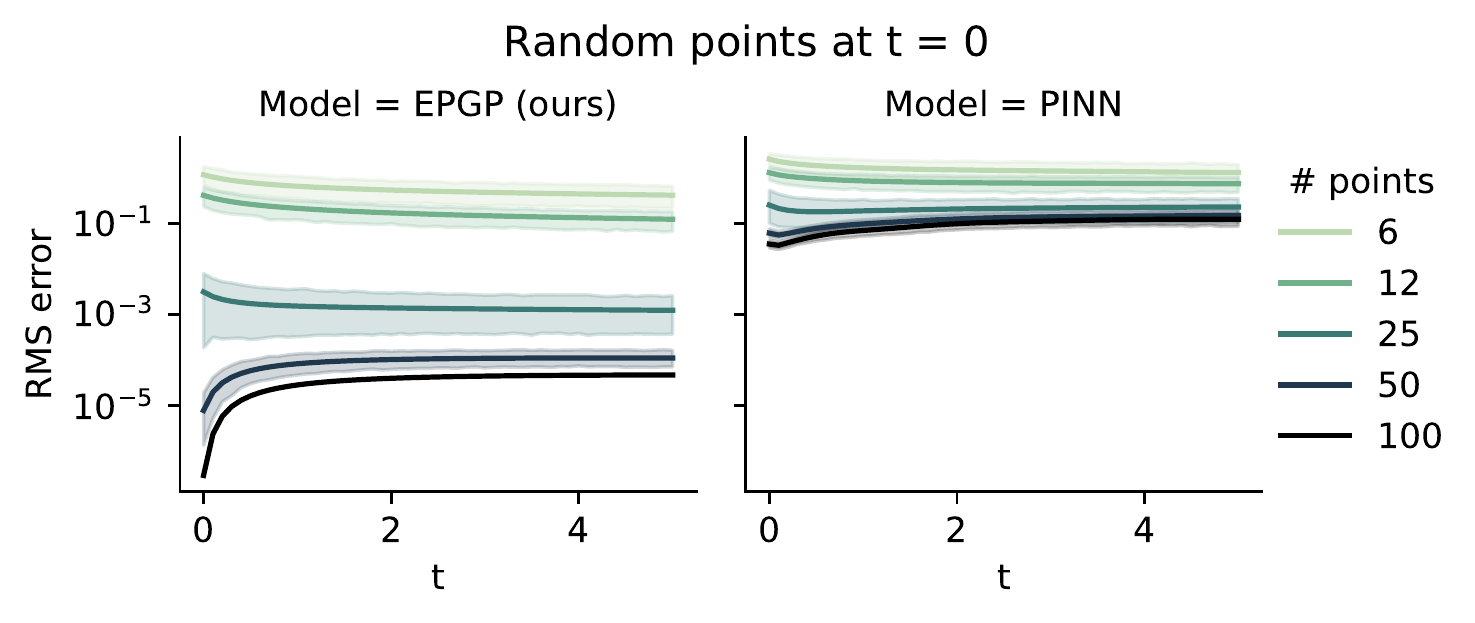}\hfill
    \includegraphics[width=0.5\linewidth]{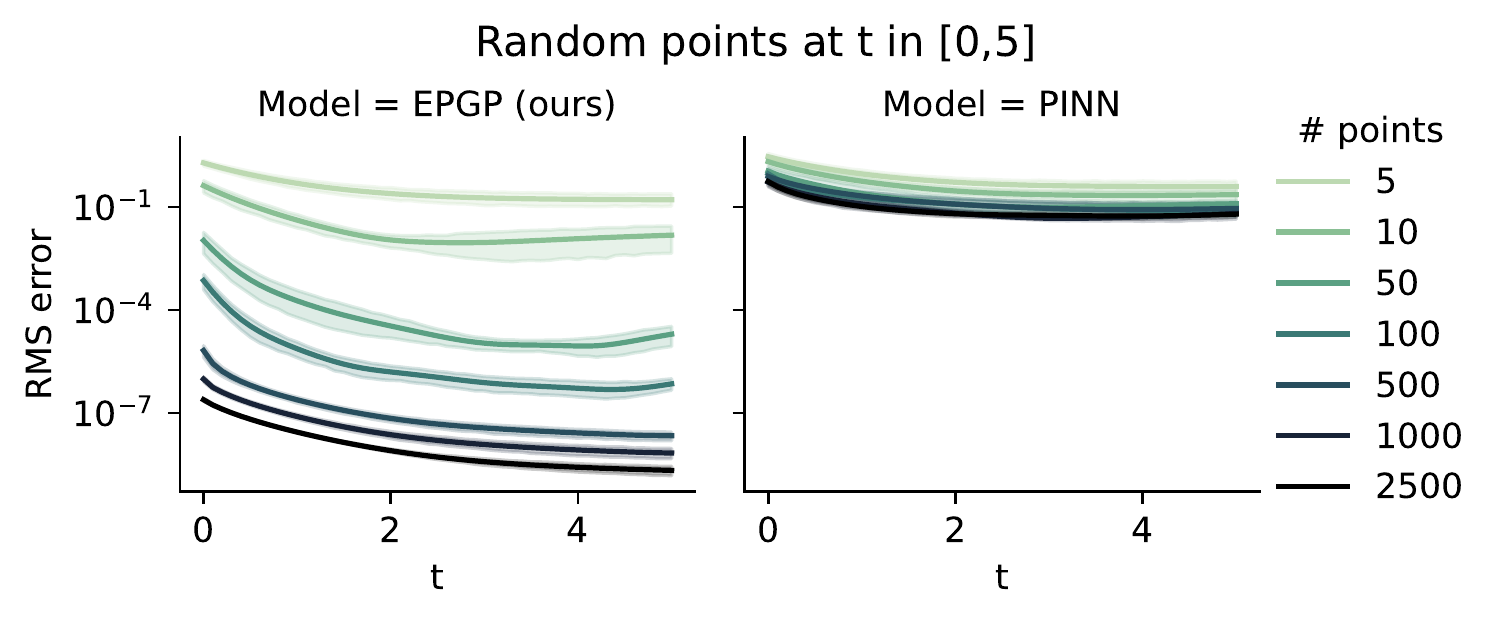}
    \caption{%
      Comparison of the error between EPGP (ours) and PINN for learning a solution to a 1D heat equation.
      On the left, training data is constrained to $t = 0$, which tests PDE solving capabilities based on initial data.
      On the right, training data is spread over the entire interval $t\in[0,5]$ to test interpolation performance.
      EPGP yields considerably better results over a wide range of the amount of training data.
      The error regions stem from training the model on 10 different instances.
    }
    \label{fig:heat1D}
  \end{center}
  \vskip -0.2in
\end{figure*}

\begin{figure*}[t]
  \vskip 0.2in
  \centering
  \begin{subfigure}[b]{0.18\textwidth}
    \centering
    \includegraphics[width=\textwidth]{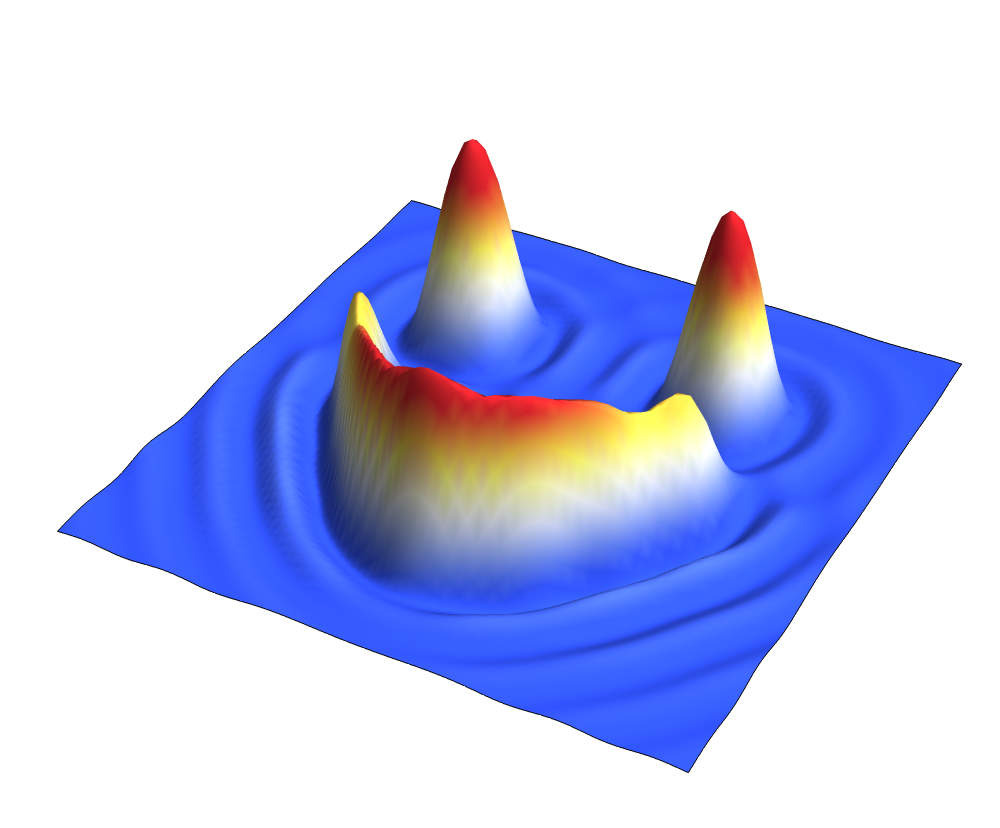}
    \caption{\tiny $t=0.000, \sigma^2 =2$}
  \end{subfigure}
  \hfill
  \begin{subfigure}[b]{0.18\textwidth}
    \centering
    \includegraphics[width=\textwidth]{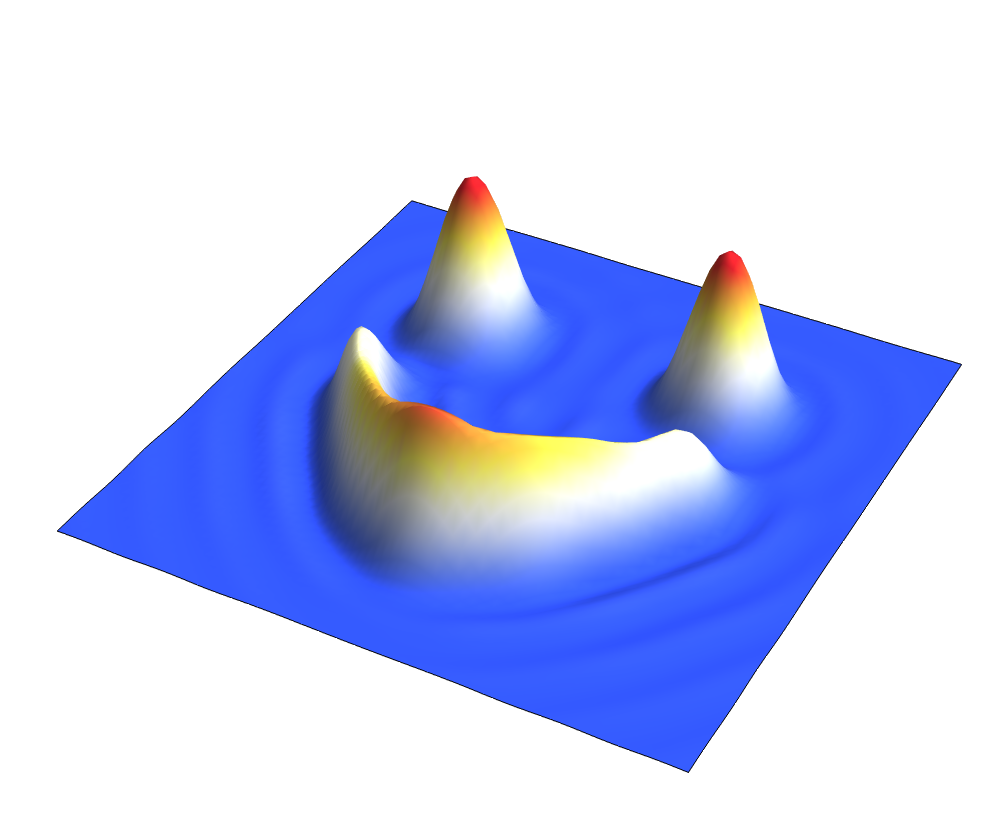}
    \caption{\tiny $t=0.015, \sigma^2 =2$}
  \end{subfigure}
  \hfill
  \begin{subfigure}[b]{0.18\textwidth}
    \centering
    \includegraphics[width=\textwidth]{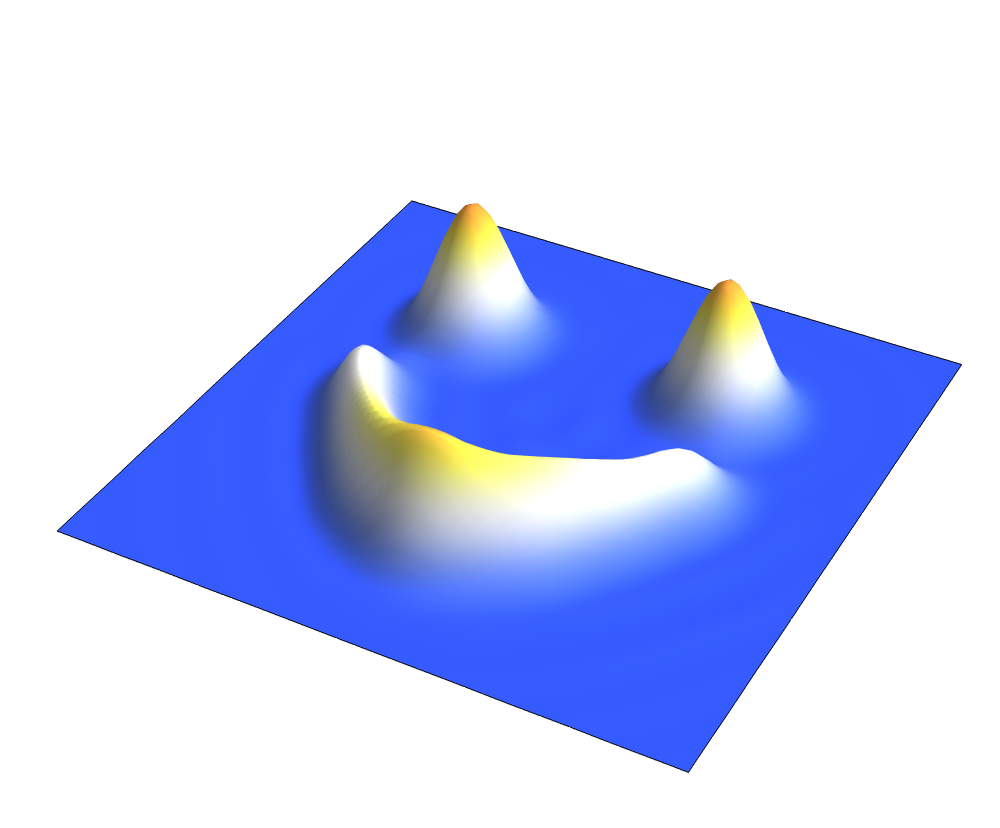}
    \caption{\tiny $t=0.03,\sigma^2 =2$}
  \end{subfigure}
  \hfill
  \begin{subfigure}[b]{0.18\textwidth}
    \centering
    \includegraphics[width=\textwidth]{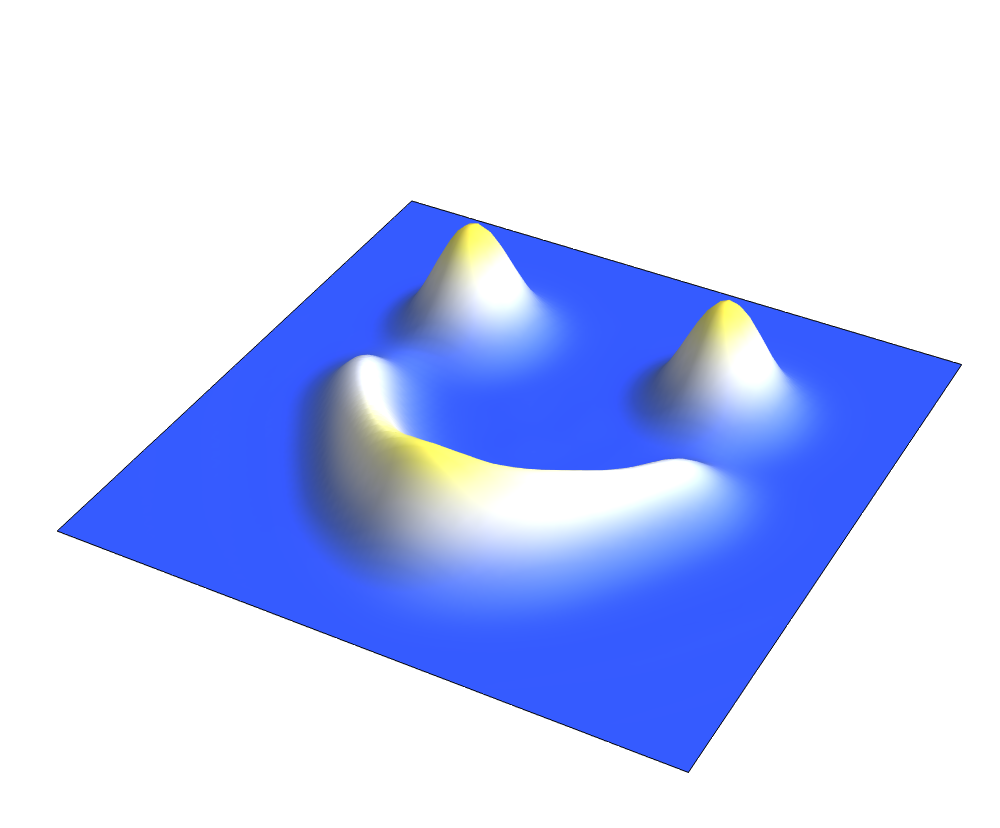}
    \caption{\tiny $t=0.045, \sigma^2 =2$}
  \end{subfigure}
  \hfill
  \begin{subfigure}[b]{0.18\textwidth}
    \centering
    \includegraphics[width=\textwidth]{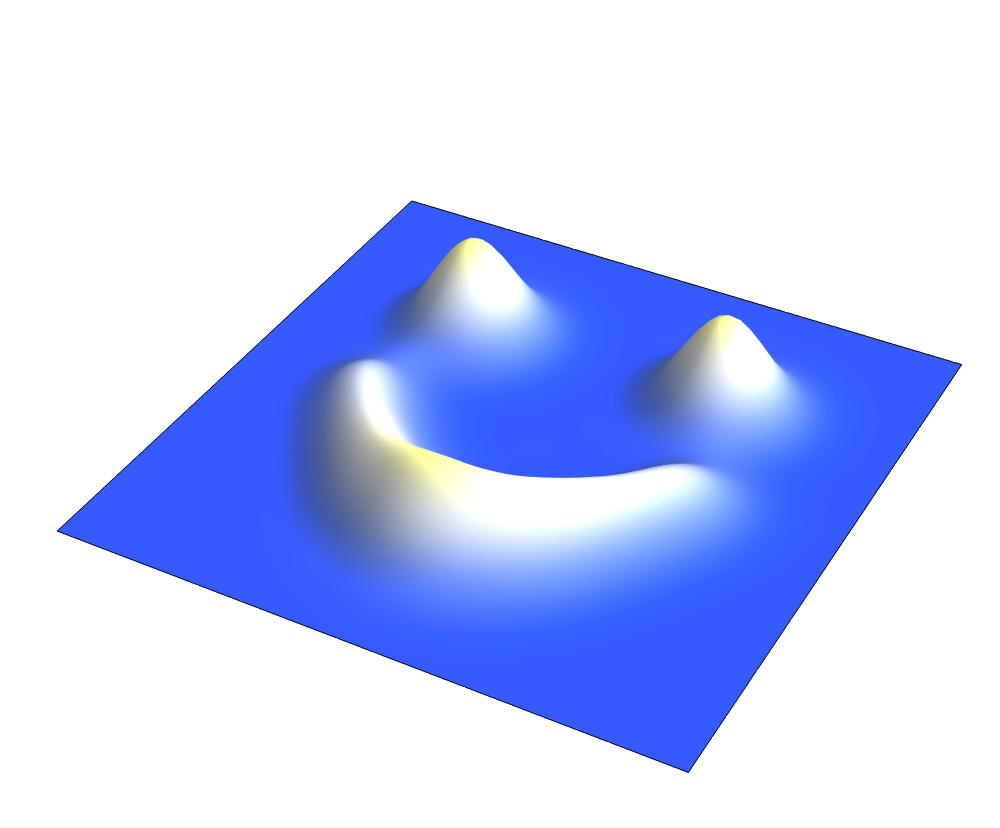}
    \caption{\tiny $t=0.06, \sigma^2 =2$}
  \end{subfigure}
  \\
  \begin{subfigure}[b]{0.18\textwidth}
    \centering
    \includegraphics[width=\textwidth]{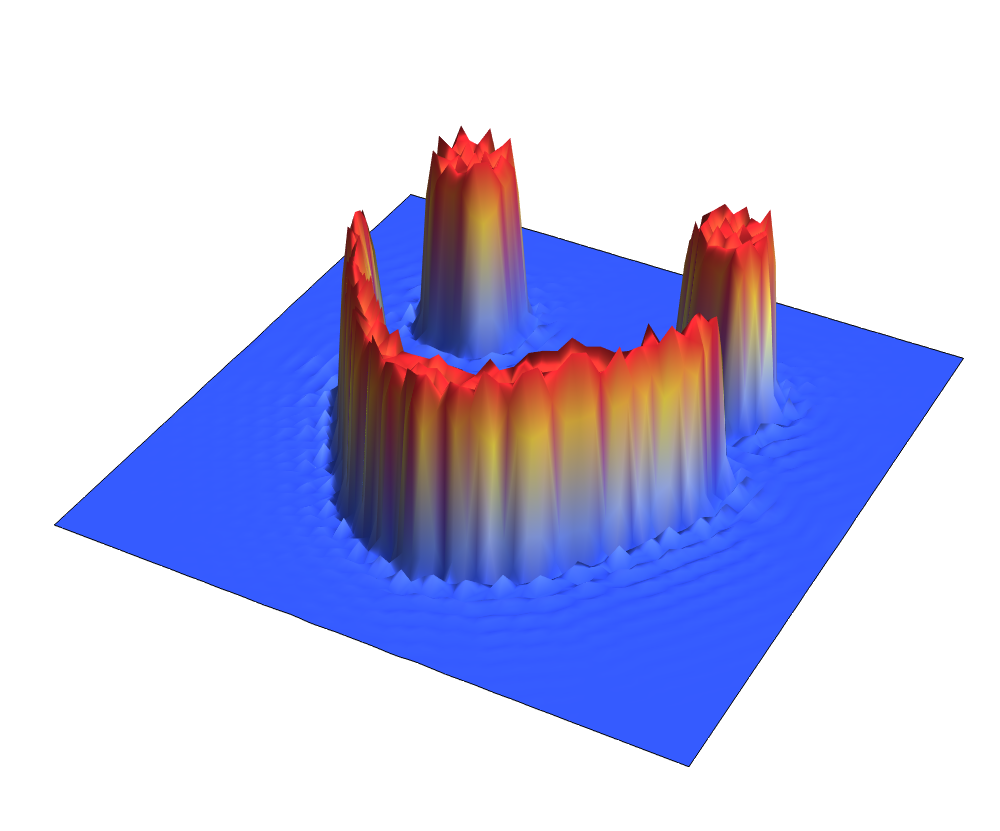}
    \caption{\tiny $t=0.000, \sigma^2 =20$}
  \end{subfigure}
  \hfill
  \begin{subfigure}[b]{0.18\textwidth}
    \centering
    \includegraphics[width=\textwidth]{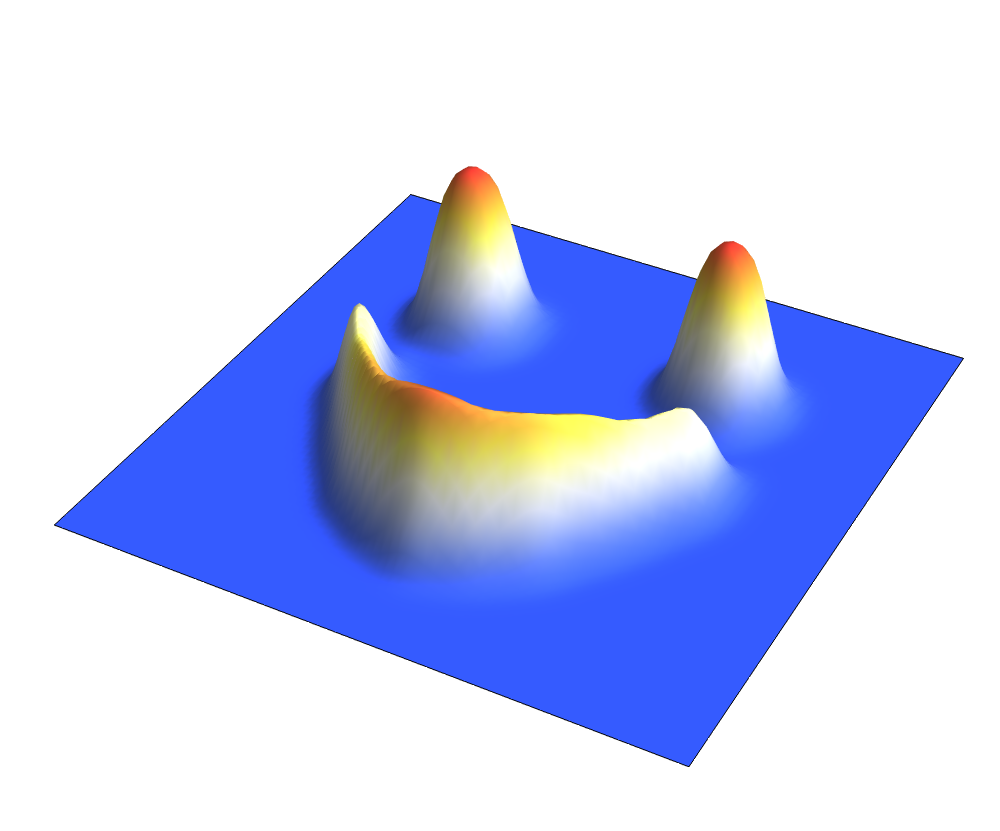}
    \caption{\tiny $t=0.015, \sigma^2 =20$}
  \end{subfigure}
  \hfill
  \begin{subfigure}[b]{0.18\textwidth}
    \centering
    \includegraphics[width=\textwidth]{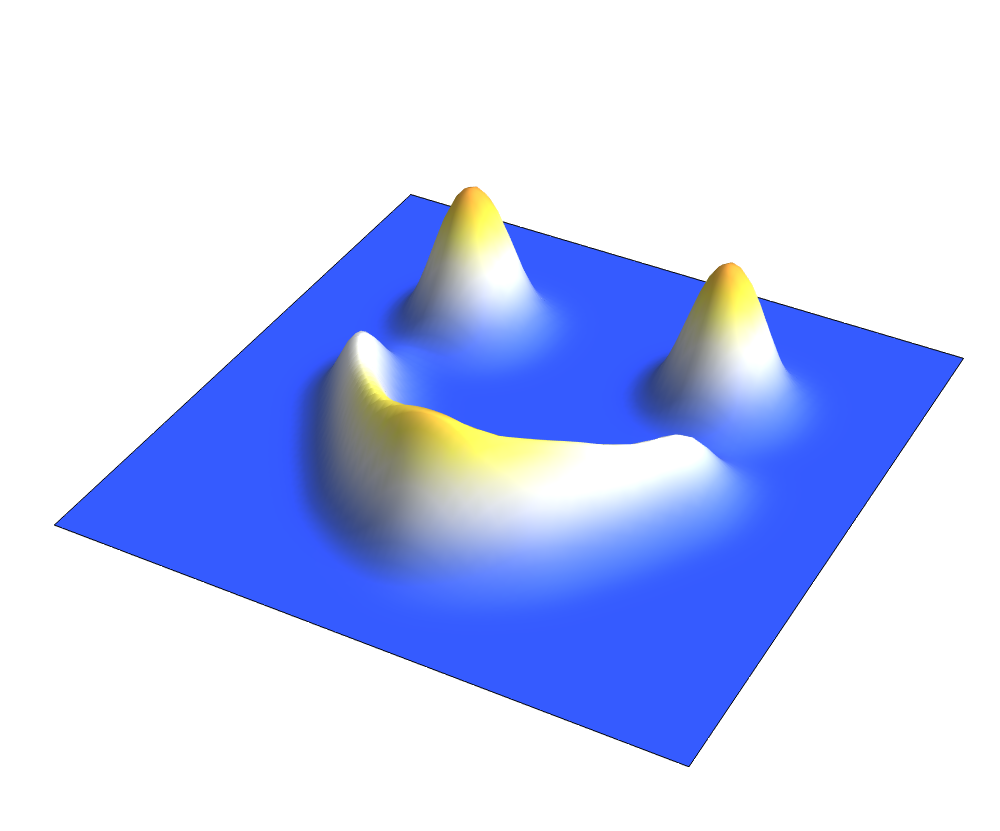}
    \caption{\tiny $t=0.03,\sigma^2 =20$}
  \end{subfigure}
  \hfill
  \begin{subfigure}[b]{0.18\textwidth}
    \centering
    \includegraphics[width=\textwidth]{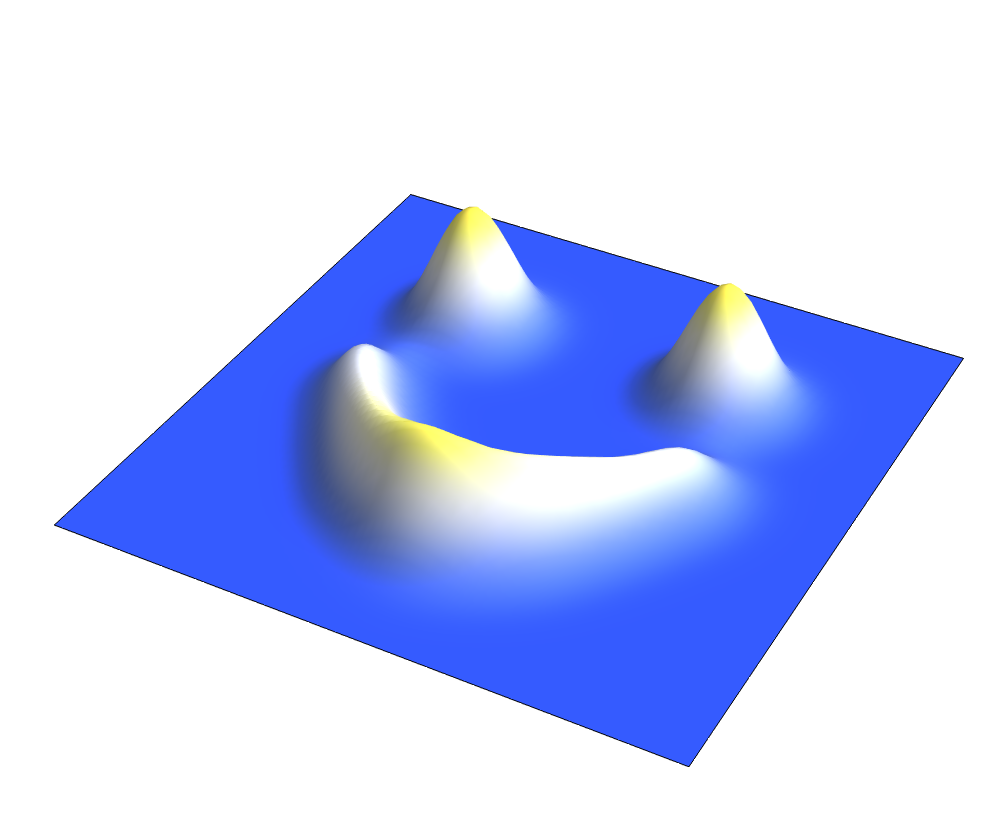}
    \caption{\tiny $t=0.045, \sigma^2 =20$}
  \end{subfigure}
  \hfill
  \begin{subfigure}[b]{0.18\textwidth}
    \centering
    \includegraphics[width=\textwidth]{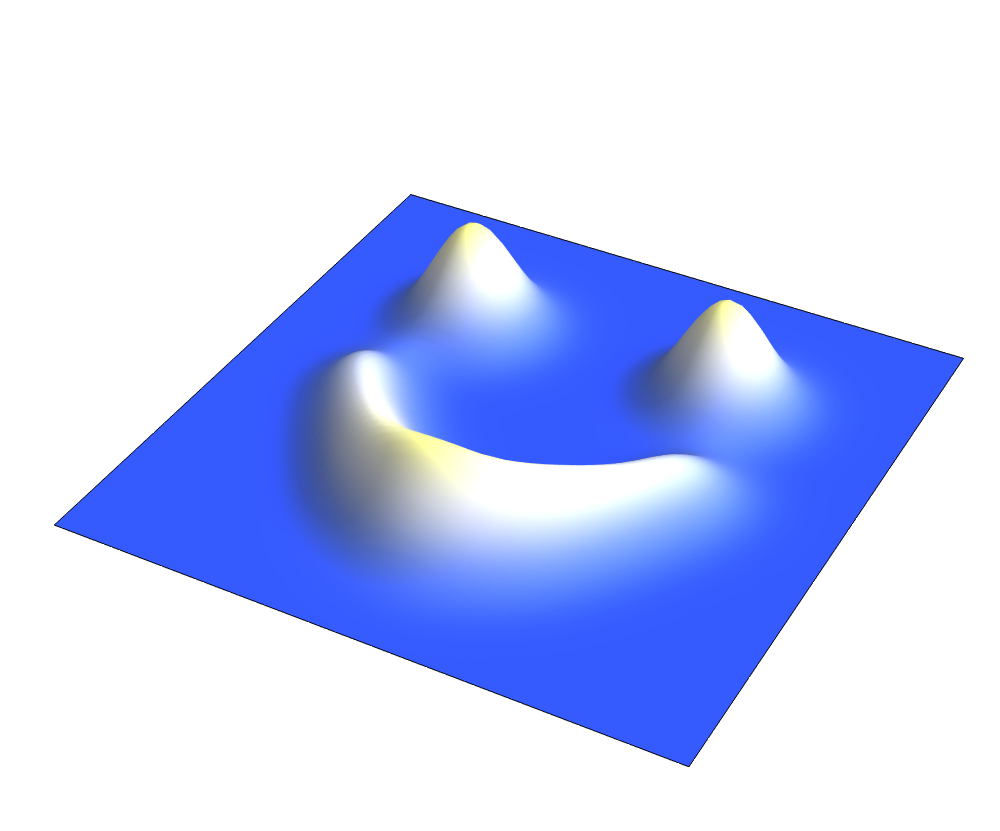}
    \caption{\tiny $t=0.06, \sigma^2 =20$}
  \end{subfigure}
  \caption{Heat dissipation in 2D at 5 timepoints, with scale parameters $\sigma^2 = 2$ and $20$.
  The parameter $\sigma^2$ in the Gaussian measure regulates how strongly the learned function follows the initial data.
  Animations can be found in the ancillary files, under the filenames \texttt{2Dheat\_2.mp4} and \texttt{2Dheat\_20.mp4}.}
  \label{fig:face_frames}
  \vskip -0.2in
\end{figure*}

GPs are a classical tool for purely data based simulation models.
Hence, they appear regularly with their standard covariance functions as an approximate model inside models connected to differential equations
\cite{chai2008multi, zhao2011pde, bilionis2013multi, klenske2015gaussian, ulaganathan2016performance, rai2019gaussian, chen2021solving}.
Furthermore, a huge class of probabilistic ODE solvers \cite{calderhead2009accelerating,schober2014probabilistic,marco2015automatic, schober2019probabilistic,kramer2021linear,tronarp2021bayesian,bosch2021calibrated,schmidt2021probabilistic} and a smaller class of probabilistic PDE solvers \cite{bilionis2016probabilistic, cockayne2017probabilistic, kramer2022probabilistic} make use of GPs when dealing with non-linear differential equations, without constructing new covariance functions.
For systems of PDEs, solutions can be propagated forward in time using numerical discretization combined with GPs \cite{raissi2018numerical}.

Differential equations are often used together with boundary conditions.
There is recent interest in constructing GP priors encoding such boundary conditions \cite{tan2018gaussian,solin2019know,gulian2020gaussian,nicholson2022kernel} and even work constructing GP priors combining differential equations with boundary conditions \cite{LH_AlgorithmicLinearlyConstrainedGaussianProcessesBoundaryConditions,langehegermann2022boundary}. %

\section{Examples}\label{sec:examples}

We demonstrate (S-)EPGPs on three systems of PDEs and a fourth one in \Cref{app:code}.
While the systems presented below are simple, they are all fundamental physical systems still subject to active research, in particular in the field of finite element methods \cite{steinbach2019stabilized,gopalakrishnan2017mapped,perugia2020tent}.
Due to their algebraic simplicity, we omit here details regarding the computation of their corresponding Noetherian multipliers and characteristic varieties.

We compare our method with a version of PINN \cite{raissi2019physics}, implementing some of the recent improvements in the review paper by \citet{cuomo2022scientific}.

We note that there were no previously known GP priors for any of these systems, as none of them are controllable.

The code used to generate figures and tables is available at
\begin{center}
  \begin{small}
    \url{https://github.com/haerski/EPGP}
  \end{small}
\end{center}

Animated versions of some of the figures can be found on the expository website
\begin{center}
  \begin{small}
    \url{https://mathrepo.mis.mpg.de/EPGP/}
  \end{small}
\end{center}
Copies of the codebase and animations can also be found in the ancillary files.

\subsection{Heat equation}\label{ssec:heat_eqn}

\begin{figure*}[b]
  \vskip 0.2in
  \begin{center}
    \includegraphics[width=\linewidth]{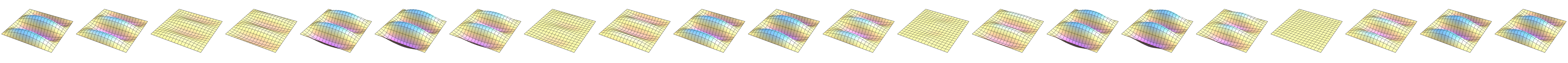}
    \includegraphics[width=\linewidth]{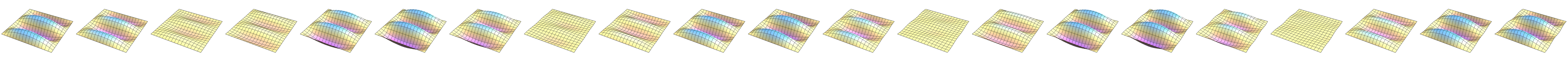}
    \includegraphics[width=\linewidth]{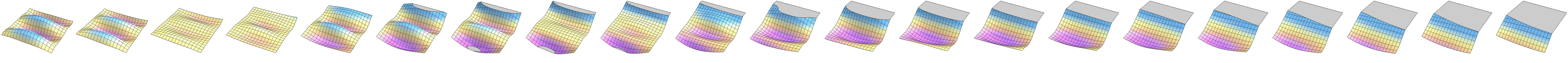}
    \caption{Solutions to wave equations for $t \in [0, 1]$. 
      The top row shows a numerical solution, the first three frames of which serve as the training data.
      The second row is the mean from EPGP (ours) and the third row is the solution from by PINN.
    The gray regions in the PINN solutions are values $>0.3$.
    The ancillary files \texttt{wave\_numerical.mp4}, \texttt{wave\_s-epgp.mp4}, and \texttt{wave\_pinn.mp4} contain animated versions of the above frame captures.}
    \label{fig:waves}
  \end{center}
  \vskip -0.2in
\end{figure*}

The one-dimensional heat equation is given by the PDE $\partial_x^2 u(x,t) = \partial_t u(x,t)$.
Our first goal is to infer an exact solution
purely from sampled data points, without any knowledge about boundary conditions.
Consider the domain $(x,t) \in [-5,5] \times [0,5]$ on a $101 \times 51$ grid of equally spaced points.
These 5151 corresponding function values serve as our ``underlying truth''.

We compare our method with PINN \cite{raissi2019physics} in two setups.
First, we test the ability of the model to solve the heat equation given initial data.
Therefore, we train on different numbers of randomly chosen points at $t = 0$ and study the mean square error over all time points $t \in [0,5]$.
In our second setup, we test the  ability of the model to interpolate the underlying true solution from a limited set of data scattered throughout time.
We train on different numbers of points chosen uniformly at random over the $101 \times 51$ grid.
The results are depicted in \Cref{fig:heat1D}.
The GP achieves an error several orders of magnitude smaller than the errors of PINN, even with fewer data points.
In addition, there is a drastic difference in total computation time between EPGP (10s) and PINN (2h) using an Nvidia A100 GPU.

Next, we apply the EPGP to the 2D heat equation, with an added scale parameter on the Gaussian measure as discussed in the end of \Cref{sec:epgp}.
The initial data is given at $t=0$, on a $101 \times 101$ grid in the square $[-5,5]^2$, where every value is equal to 0, except for a region depicting a smiling face where we set the value to 1.
In this case, the scaling factor $\sigma^2$ in the covariance kernel determines how strongly the initial data is respected.
\Cref{fig:face_frames} compares the posterior mean at fives timepoints.
When $\sigma^2 = 20$, the prior allows abrupt changes and the inferred function conforms to the jagged edges in the data.
In contrast, for $\sigma^2 = 2$ the prior prefers smooth interpretations of the initial data.
In both cases we show the instantaneous smoothing behavior at times $t>0$, which is characteristic to solutions of the heat equation.
For details (covariance functions, experimental setup, etc.) and additional comparisons about the heat equation see Appendix~\ref{appendix_heat}.

\subsection{2D wave equation}
Consider the 2D wave equation, given by $\frac{\partial^2 z}{\partial x^2} + \frac{\partial^2 z}{\partial y^2} = \frac{\partial^2 z}{\partial t^2}$.
The solution we are trying to learn is obtained by solving the wave equation numerically, subject to boundary conditions
  $z(0,y,t) = z(1,y,t) = z(x,0,t) = z(x,1,t) = 0$,
and initial conditions
  $z(x,y,0) = \sin(4\pi x) y (1-y)$,  and $\frac{\partial z}{\partial t}(x,y,0) = 0$.
A plot of the numerical solution can be found on the top row of \Cref{fig:waves}.
The recent theoretical papers \cite{henderson2023characterization,henderson2023wave} construct and study a covariance function for the 3D wave equation with initial conditions at $t=0$.

To learn the numerical solution, we split the domain $(x,y,t) \in [0,1]^3$ into a $21 \times 21 \times 21$ grid and use the data at $t = 0, 0.05, 0.1$ for training.
For S-EPGP, we use a sum of 16 Dirac delta kernels, whose positions we learn.
A PINN model, with 15 hidden layers of size 200, was also trained on the same data, but  failed to get adequate extrapolation performance. The bottom row of \Cref{fig:waves} contains a PINN instance trained for 200,000 epochs.
Technical details about wave equation, our experimental setup, and an additional comparison of (S-)EPGP models can be found in \Cref{appendix_wave}.
An example animation of colliding 2-dimensional wavefronts can be found in the file \texttt{crashing\_waves.mp4}, provided as an ancillary file.

\begin{table*}[t]
\caption{Root mean square errors learning an exact solution to Maxwell's equations, using different number of datapoints for training. Top: S-EPGP, with a varying number of Dirac delta measures.
Bottom: PINN. Here HLW stands for ``hidden layer width''.
Each experiment is repeated 10 times}
\label{tbl:maxwell_PINN}
\vskip 0.15in
\begin{center}
\begin{small}
\begin{tabular}{llllll}
\toprule
Deltas & 5 datapoints & 10 datapoints & 50 datapoints & 100 datapoints & 1000 datapoints \\
\midrule
$24$    &  $6.08 \pm 0.797$ &   $8.92 \pm 1.87$ &   $1.38 \pm 0.698$ &      $0.981 \pm 0.361$ &       $0.884 \pm 0.347$ \\
$48$    &  $4.31 \pm 0.431$ &   $6.98 \pm 1.37$ &  $0.356 \pm 0.392$ &       $0.11 \pm 0.101$ &     $0.0298 \pm 0.0295$ \\
$96$   &  $4.21 \pm 0.387$ &  $3.81 \pm 0.747$ &  $0.173 \pm 0.169$ &  $0.00521 \pm 0.00221$ &   $0.00192 \pm 0.00203$ \\
$192$   &   $3.9 \pm 0.302$ &  $3.21 \pm 0.706$ &   $1.22 \pm 0.696$ &     $0.027 \pm 0.0291$ &   $0.00239 \pm 0.00155$ \\
$384$   &  $\mathbf{3.45 \pm 0.364}$ &   $\mathbf{2.4 \pm 0.796}$ &  $\mathbf{0.192 \pm 0.193}$ &   $\mathbf{0.00974 \pm 0.0113}$ &  $\mathbf{0.000469 \pm 0.00017}$ \\
\bottomrule

\toprule
HLW & 5 datapoints    & 10 datapoints   & 50 datapoints   & 100 datapoints  & 1000 datapoints \\
\midrule
$50$        &  $4.71 \pm 0.403$ &  $4.09 \pm 0.781$ &  $1.05 \pm 0.304$ &  $0.742 \pm 0.36$ &     $0.1 \pm 0.0415$ \\
$100$   &  $4.63 \pm 0.469$ &  $4.12 \pm 0.783$ &  $1.03 \pm 0.278$ &  $0.693 \pm 0.31$ &  $0.0948 \pm 0.0272$ \\
$200$   &   $\mathbf{4.72 \pm 0.42}$ &   $\mathbf{4.1 \pm 0.789}$ &  $\mathbf{1.06 \pm 0.281}$ &  $\mathbf{0.73 \pm 0.296}$ &  $\mathbf{0.0924 \pm 0.0237}$ \\
\bottomrule
\end{tabular}
\end{small}
\end{center}
\vskip -0.1in
\end{table*}

\subsection{Maxwell's equations}\label{ssec:maxwell}

The homogeneous Maxwell equations in a vacuum are
\begin{align*}
  \nabla \cdot \mathbf{E} &= 0 & \nabla \times \mathbf{E} &= -\frac{\partial \mathbf{B}}{\partial t} \\
  \nabla \cdot \mathbf{B} &= 0 & \qquad\nabla \times \mathbf{B} &= \frac{\partial \mathbf{E}}{\partial t},
\end{align*}
where $\mathbf{E} = (E_x(x,y,z,t), E_y(x,y,z,t), E_z(x,y,z,t))^T$ is the vector field corresponding to the electric field and $\mathbf{B} = (B_x(x,y,z,t), B_y(x,y,z,t), B_z(x,y,z,t))^T$ is the vector field corresponding to the magnetic field.

We run the S-EPGP algorithm using $m = 4,8,16,32$, and $64$ Dirac delta measures for each of the six multipliers.
For comparison, we repeat the experiment with PINN, where we base hyperparameters on \cite{mathews21uncovering} and  report the results in \Cref{tbl:maxwell_PINN}.
For details about Noetherian multipliers, the characteristic variety and implementation of S-EPGP and PINN see \Cref{sec:app_maxwell}.

The S-EPGP method learns the true underlying solution much better than PINN, achieving errors several orders of magnitude smaller even with a relatively small number of Dirac delta measures.
Runtimes for S-EPGP scale well and outperform PINN.
Our fastest S-EPGP model, with 24 Dirac deltas trained on only 5 points, completes 10000 training epochs in about 60 seconds on an Nvidia A100 GPU, whereas the slowest one, with 384 Dirac deltas trained on 1000 points, takes about 70 seconds to complete 10000 epochs.
In comparison, each PINN model took about 200 seconds to complete 10000 epochs on the same GPU.

For an example using EPGP for generating solutions to Maxwell's equations, see the ancillary files \texttt{maxwell\_E.mp4} and \texttt{maxwell\_B.mp4}.

\section{Discussion}

Our method takes a starkly different approach to solving and learning PDEs compared to other physics informed machine learning methods such as PINN.
As is common in applied non-linear algebra \cite{michalek21invitation}, our philosophy is to remain in the exact setting as much as possible.
This is evidenced by the application of exact symbolic algebraic techniques and the Ehrenpreis-Palamodov Fundamental Principle in the construction of our kernels.
Thus we say that (S-)EPGP is physics \emph{constrained}, as all realizations from our GPs are, by construction, exact solutions to the PDE system.
Being constrained to solutions is not necessarily a disadvantage when dealing with noisy data, or even data that does not properly follow the PDEs.
The usual GP approaches account for such scenarios by including additional components to the covariance function.
Our experiments show the exact approach to be the superior and scalable, in both interpolation (learning) and extrapolation (solving) tasks.
Non-exactness in the form of noise is only introduced at the very last step as we formulate the GP, which ---
unlike PINN--- makes the (S-)EPGP training objective statistically well motivated and enables the usage of well-established techniques for sparse, variational, and approximate GPs.
Furthermore, our method removes the hyperparameter required for tuning PINN's multiple loss functions.

Compared to other methods of learning and solving PDEs, our method is also completely data-driven and algorithmic.
We do not for example distinguish the time dimension from other spacial dimensions, as is often done in numerical methods.
Our method also does not require explicit initial and boundary conditions: data points can be given anywhere in the domain and can consist of function values, derivatives, or any combination thereof.
For vector-valued functions, we can also learn on partial representations of the data, for example using just electric field data to learn a solution to Maxwell's equations in order to infer the corresponding magnetic field.
This makes (S-)EPGP extremely flexible and applicable with minimal domain expertise.

\section*{Acknowledgements}

The second author acknowledges the support of the SAIL network, funded by the Ministerium für Kultur und Wissenschaft of the state Nordrhein-Westfalen in Germany.

The authors thank the organizers of the ``Workshop on Differential Algebra'' in June 2022 at MPI MiS, Leipzig Germany, for providing the perfect atmosphere to start this research.

The authors thank the anonymous referees for carefully reading the manuscript and offering insightful comments and suggestions which improved the paper overall.

\bibliography{references}

\begin{thebibliography}{103}
\providecommand{\natexlab}[1]{#1}
\providecommand{\url}[1]{\texttt{#1}}
\expandafter\ifx\csname urlstyle\endcsname\relax
  \providecommand{\doi}[1]{doi: #1}\else
  \providecommand{\doi}{doi: \begingroup \urlstyle{rm}\Url}\fi

\bibitem[Ait El~Manssour et~al.(2021)Ait El~Manssour, Härkönen, and
  Sturmfels]{manssour21linear}
Ait El~Manssour, R., Härkönen, M., and Sturmfels, B.
\newblock Linear {PDE} with constant coefficients.
\newblock \emph{Glasgow Mathematical Journal}, 2021.

\bibitem[Albert(2019)]{albert2019gaussian}
Albert, C.~G.
\newblock Gaussian processes for data fulfilling linear differential equations.
\newblock \emph{Multidisciplinary Digital Publishing Institute Proceedings},
  2019.

\bibitem[Alvarado et~al.(2014)Alvarado, Alvarez, Daza-Santacoloma, Orozco, and
  Castellanos-Dominguez]{alvarado2014latent}
Alvarado, P.~A., Alvarez, M.~A., Daza-Santacoloma, G., Orozco, A., and
  Castellanos-Dominguez, G.
\newblock A latent force model for describing electric propagation in deep
  brain stimulation: A simulation study.
\newblock In \emph{2014 International Conference of the IEEE Engineering in
  Medicine and Biology Society}, 2014.

\bibitem[Alvarez et~al.(2009)Alvarez, Luengo, and Lawrence]{pmlr-v5-alvarez09a}
Alvarez, M., Luengo, D., and Lawrence, N.~D.
\newblock Latent force models.
\newblock In \emph{AISTATS}, 2009.

\bibitem[Alvarez et~al.(2013)Alvarez, Luengo, and Lawrence]{alvarez2013linear}
Alvarez, M.~A., Luengo, D., and Lawrence, N.~D.
\newblock Linear latent force models using gaussian processes.
\newblock \emph{IEEE transactions on pattern analysis and machine
  intelligence}, 2013.

\bibitem[Berns et~al.(2020)Berns, Lange-Hegermann, and
  Beecks]{BernsTowards2020}
Berns, F., Lange-Hegermann, M., and Beecks, C.
\newblock Towards {G}aussian processes for automatic and interpretable anomaly
  detection in industry 4.0.
\newblock In \emph{Proceedings of the International Conference on Innovative
  Intelligent Industrial Production and Logistics -- IN4PL}, 2020.

\bibitem[Besginow \& Lange-Hegermann(2022)Besginow and
  Lange-Hegermann]{besginow2022constraining}
Besginow, A. and Lange-Hegermann, M.
\newblock Constraining {G}aussian {P}rocesses to {S}ystems of {L}inear
  {O}rdinary {D}ifferential {E}quations.
\newblock In \emph{NeurIPS}. 2022.

\bibitem[Bilionis(2016)]{bilionis2016probabilistic}
Bilionis, I.
\newblock Probabilistic solvers for partial differential equations.
\newblock \emph{\href{https://arxiv.org/abs/1607.03526}{arXiv:1607.03526}},
  2016.

\bibitem[Bilionis et~al.(2013)Bilionis, Zabaras, Konomi, and
  Lin]{bilionis2013multi}
Bilionis, I., Zabaras, N., Konomi, B.~A., and Lin, G.
\newblock Multi-output separable {G}aussian process: Towards an efficient,
  fully {B}ayesian paradigm for uncertainty quantification.
\newblock \emph{Journal of Computational Physics}, 2013.

\bibitem[Bj\"{o}rk(1979)]{BJORK}
Bj\"{o}rk, J.-E.
\newblock \emph{Rings of differential operators}.
\newblock North-Holland Mathematical Library. 1979.

\bibitem[Bogachev(1998)]{bogachev1998gaussian}
Bogachev, V.~I.
\newblock \emph{Gaussian measures}.
\newblock Number~62. American Mathematical Soc., 1998.

\bibitem[Bosch et~al.(2021)Bosch, Hennig, and Tronarp]{bosch2021calibrated}
Bosch, N., Hennig, P., and Tronarp, F.
\newblock Calibrated adaptive probabilistic {ODE} solvers.
\newblock In \emph{AISTATS}, 2021.

\bibitem[Calderhead et~al.(2009)Calderhead, Girolami, and
  Lawrence]{calderhead2009accelerating}
Calderhead, B., Girolami, M., and Lawrence, N.~D.
\newblock Accelerating {B}ayesian inference over nonlinear differential
  equations with {G}aussian processes.
\newblock \emph{NeurIPS}, 2009.

\bibitem[Camps-Valls et~al.(2018)Camps-Valls, Martino, Svendsen,
  Campos-Taberner, Mu{\~n}oz-Mar{\'\i}, Laparra, Luengo, and
  Garc{\'\i}a-Haro]{camps2018physics}
Camps-Valls, G., Martino, L., Svendsen, D.~H., Campos-Taberner, M.,
  Mu{\~n}oz-Mar{\'\i}, J., Laparra, V., Luengo, D., and Garc{\'\i}a-Haro, F.~J.
\newblock Physics-aware gaussian processes in remote sensing.
\newblock \emph{Applied Soft Computing}, 2018.

\bibitem[Chai et~al.(2008)Chai, Williams, Klanke, and Sethu]{chai2008multi}
Chai, K., Williams, C., Klanke, S., and Sethu, V.
\newblock Multi-task {G}aussian process learning of robot inverse dynamics.
\newblock \emph{NeurIPS}, 2008.

\bibitem[Chen \& Cid-Ruiz(2022)Chen and Cid-Ruiz]{chen22primary}
Chen, J. and Cid-Ruiz, Y.
\newblock Primary decomposition of modules: a computational differential
  approach.
\newblock \emph{Journal of Pure and Applied Algebra}, 2022.

\bibitem[Chen et~al.(2022{\natexlab{a}})Chen, Chen, Zhang, and
  Jeff~Wu]{chen2022apik}
Chen, J., Chen, Z., Zhang, C., and Jeff~Wu, C.
\newblock Apik: Active physics-informed kriging model with partial differential
  equations.
\newblock \emph{SIAM/ASA Journal on Uncertainty Quantification},
  2022{\natexlab{a}}.

\bibitem[Chen et~al.(2022{\natexlab{b}})Chen, Cid-Ruiz, Härkönen, Krone, and
  Leykin]{CCHKL}
Chen, J., Cid-Ruiz, Y., Härkönen, M., Krone, R., and Leykin, A.
\newblock Noetherian operators in {Macaulay2}.
\newblock \emph{J. Softw. Algebra Geom.}, 2022{\natexlab{b}}.

\bibitem[Chen et~al.(2018)Chen, Rubanova, Bettencourt, and
  Duvenaud]{chen2018neural}
Chen, R.~T., Rubanova, Y., Bettencourt, J., and Duvenaud, D.~K.
\newblock Neural ordinary differential equations.
\newblock \emph{NeurIPS}, 31, 2018.

\bibitem[Chen et~al.(2021)Chen, Hosseini, Owhadi, and Stuart]{chen2021solving}
Chen, Y., Hosseini, B., Owhadi, H., and Stuart, A.~M.
\newblock Solving and learning nonlinear pdes with gaussian processes.
\newblock \emph{Journal of Computational Physics}, 2021.

\bibitem[Cid-Ruiz \& Sturmfels(2021)Cid-Ruiz and Sturmfels]{cidruiz2021primary}
Cid-Ruiz, Y. and Sturmfels, B.
\newblock Primary decomposition with differential operators.
\newblock \emph{\href{https://arxiv.org/abs/2101.03643}{arXiv:2101.03643}},
  2021.

\bibitem[Cid-Ruiz et~al.(2021)Cid-Ruiz, Homs, and Sturmfels]{homs21primary}
Cid-Ruiz, Y., Homs, R., and Sturmfels, B.
\newblock Primary ideals and their differential equations.
\newblock \emph{Found. Comput. Math.}, 2021.

\bibitem[Cockayne et~al.(2017)Cockayne, Oates, Sullivan, and
  Girolami]{cockayne2017probabilistic}
Cockayne, J., Oates, C., Sullivan, T., and Girolami, M.
\newblock Probabilistic numerical methods for pde-constrained bayesian inverse
  problems.
\newblock In \emph{AIP Conference Proceedings}, volume 1853, 2017.

\bibitem[Cuomo et~al.(2022)Cuomo, Di~Cola, Giampaolo, Rozza, Raissi, and
  Piccialli]{cuomo2022scientific}
Cuomo, S., Di~Cola, V.~S., Giampaolo, F., Rozza, G., Raissi, M., and Piccialli,
  F.
\newblock Scientific machine learning through physics-informed neural networks:
  Where we are and what's next.
\newblock \emph{\href{https://arxiv.org/abs/2201.05624}{arXiv:2201.05624}},
  2022.

\bibitem[d'Alembert(1747)]{d1747recherches}
d'Alembert, J. l.~R.
\newblock Recherches sur la courbe que forme une corde tendue mise en
  vibration.
\newblock \emph{Memoires de l'Academie royale des sciences et belles lettres.
  Classe de mathematique.}, 3:\penalty0 214--219, 1747.

\bibitem[Dong(1989)]{dong1989kriging}
Dong, A.
\newblock Kriging variables that satisfy the partial differential equation
  $\delta$z= y.
\newblock In \emph{Geostatistics}. 1989.

\bibitem[Drygala et~al.(2022)Drygala, Winhart, di~Mare, and
  Gottschalk]{drygala2022generative}
Drygala, C., Winhart, B., di~Mare, F., and Gottschalk, H.
\newblock Generative modeling of turbulence.
\newblock \emph{Physics of Fluids}, 2022.

\bibitem[Duvenaud(2014)]{duvenaud2014automatic}
Duvenaud, D.
\newblock \emph{Automatic model construction with Gaussian processes}.
\newblock PhD thesis, University of Cambridge, 2014.

\bibitem[Ehrenpreis(1970)]{EHRENPREIS}
Ehrenpreis, L.
\newblock \emph{Fourier analysis in several complex variables}.
\newblock Wiley, 1970.

\bibitem[Eisenbud(1995)]{eisenbud1995commutative}
Eisenbud, D.
\newblock \emph{Commutative algebra}.
\newblock Springer, 1995.

\bibitem[Gahungu et~al.(2022)Gahungu, Lanyon, Alvarez, Bainomugisha, Smith, and
  Wilkinson]{gahungu2022adjoint}
Gahungu, P., Lanyon, C.~W., Alvarez, M.~A., Bainomugisha, E., Smith, M., and
  Wilkinson, R.~D.
\newblock Adjoint-aided inference of gaussian process driven differential
  equations.
\newblock \emph{NeurIPS}, 2022.

\bibitem[Ghosh et~al.(2015)Ghosh, Reece, Rogers, Roberts, Malibari, and
  Jennings]{ghosh2015modeling}
Ghosh, S., Reece, S., Rogers, A., Roberts, S., Malibari, A., and Jennings,
  N.~R.
\newblock Modeling the thermal dynamics of buildings: A
  latent-force-model-based approach.
\newblock \emph{ACM Transactions on Intelligent Systems and Technology}, 2015.

\bibitem[Gopalakrishnan et~al.(2017)Gopalakrishnan, Sch{\"o}berl, and
  Wintersteiger]{gopalakrishnan2017mapped}
Gopalakrishnan, J., Sch{\"o}berl, J., and Wintersteiger, C.
\newblock Mapped tent pitching schemes for hyperbolic systems.
\newblock \emph{SIAM Journal on Scientific Computing}, 39\penalty0
  (6):\penalty0 B1043--B1063, 2017.

\bibitem[Grayson \& Stillman()Grayson and Stillman]{grayson2002macaulay2}
Grayson, D.~R. and Stillman, M.~E.
\newblock Macaulay2, a software system for research in algebraic geometry.
\newblock Available at \url{http://www.math.uiuc.edu/Macaulay2/}.

\bibitem[Gulian et~al.(2022)Gulian, Frankel, and Swiler]{gulian2020gaussian}
Gulian, M., Frankel, A., and Swiler, L.
\newblock Gaussian process regression constrained by boundary value problems.
\newblock \emph{Comput.\ Methods Appl.\ Mech.\ Engrg.}, 2022.

\bibitem[H\"ark\"onen(2022)]{harkonen22thesis}
H\"ark\"onen, M.
\newblock \emph{Dual representations of polynomial modules with applications to
  partial differential equations}.
\newblock PhD thesis, Georgia Institute of Technology, 2022.

\bibitem[Harrington et~al.(2016)Harrington, Ho, and
  Meshkat]{harrington2016differential}
Harrington, H.~A., Ho, K.~L., and Meshkat, N.
\newblock Differential algebra for model comparison.
\newblock 2016.
\newblock \href{https://arxiv.org/abs/1603.09730}{arXiv:1603.09730}.

\bibitem[Hartikainen \& Sarkka(2012)Hartikainen and
  Sarkka]{hartikainen2012sequential}
Hartikainen, J. and Sarkka, S.
\newblock Sequential inference for latent force models.
\newblock \emph{\href{https://arxiv.org/abs/1202.3730}{arXiv:1202.3730}}, 2012.

\bibitem[Henderson et~al.(2023{\natexlab{a}})Henderson, Noble, and
  Roustant]{henderson2023characterization}
Henderson, I., Noble, P., and Roustant, O.
\newblock Characterization of the second order random fields subject to linear
  distributional pde constraints.
\newblock \emph{\href{https://arxiv.org/abs/2301.06895}{arXiv:2301.06895}},
  2023{\natexlab{a}}.

\bibitem[Henderson et~al.(2023{\natexlab{b}})Henderson, Noble, and
  Roustant]{henderson2023wave}
Henderson, I., Noble, P., and Roustant, O.
\newblock {Wave equation-tailored Gaussian process regression with applications
  to related inverse problems}.
\newblock \emph{\href{https://hal.science/hal-03941939}{hal-03941939}},
  2023{\natexlab{b}}.

\bibitem[Hern{\'a}ndez~Rodr{\'\i}guez et~al.(2022)Hern{\'a}ndez~Rodr{\'\i}guez,
  Sekulic, Lange-Hegermann, and Frahm]{hernandez2022designing}
Hern{\'a}ndez~Rodr{\'\i}guez, T., Sekulic, A., Lange-Hegermann, M., and Frahm,
  B.
\newblock Designing robust biotechnological processes regarding variabilities
  using multi-objective optimization applied to a biopharmaceutical seed train
  design.
\newblock \emph{Processes}, 2022.

\bibitem[H\"{o}rmander(1990)]{HORMANDER}
H\"{o}rmander, L.
\newblock \emph{An introduction to complex analysis in several variables}.
\newblock North-Holland Mathematical Library. Third edition, 1990.

\bibitem[Jidling et~al.(2017)Jidling, Wahlstr\"{o}m, Wills, and
  Sch\"{o}n]{LinearlyConstrainedGP}
Jidling, C., Wahlstr\"{o}m, N., Wills, A., and Sch\"{o}n, T.~B.
\newblock Linearly {C}onstrained {G}aussian {P}rocesses.
\newblock In \emph{NeurIPS}, 2017.

\bibitem[Jidling et~al.(2018)Jidling, Hendriks, Wahlstr{\"o}m, Gregg,
  Sch{\"o}n, Wensrich, and Wills]{jidling2018probabilistic}
Jidling, C., Hendriks, J., Wahlstr{\"o}m, N., Gregg, A., Sch{\"o}n, T.~B.,
  Wensrich, C., and Wills, A.
\newblock Probabilistic {M}odelling and {R}econstruction of {S}train.
\newblock \emph{Nuclear Instruments and Methods in Physics Research Section B:
  Beam Interactions with Materials and Atoms}, 2018.

\bibitem[Klenske et~al.(2015)Klenske, Zeilinger, Sch{\"o}lkopf, and
  Hennig]{klenske2015gaussian}
Klenske, E.~D., Zeilinger, M.~N., Sch{\"o}lkopf, B., and Hennig, P.
\newblock {G}aussian process-based predictive control for periodic error
  correction.
\newblock \emph{IEEE Transactions on Control Systems Technology}, 2015.

\bibitem[Kr{\"a}mer \& Hennig(2021)Kr{\"a}mer and Hennig]{kramer2021linear}
Kr{\"a}mer, N. and Hennig, P.
\newblock Linear-time probabilistic solution of boundary value problems.
\newblock \emph{NeurIPS}, 2021.

\bibitem[Kr{\"a}mer et~al.(2022)Kr{\"a}mer, Schmidt, and
  Hennig]{kramer2022probabilistic}
Kr{\"a}mer, N., Schmidt, J., and Hennig, P.
\newblock Probabilistic numerical method of lines for time-dependent partial
  differential equations.
\newblock In \emph{AISTATS}, 2022.

\bibitem[Lagaris et~al.(2000)Lagaris, Likas, and
  Papageorgiou]{lagaris2000neural}
Lagaris, I.~E., Likas, A.~C., and Papageorgiou, D.~G.
\newblock Neural-network methods for boundary value problems with irregular
  boundaries.
\newblock \emph{IEEE Transactions on Neural Networks}, 2000.

\bibitem[Lange-Hegermann(2018)]{LH_AlgorithmicLinearlyConstrainedGaussianProcesses}
Lange-Hegermann, M.
\newblock Algorithmic {L}inearly {C}onstrained {G}aussian {P}rocesses.
\newblock In \emph{NeurIPS}. 2018.

\bibitem[Lange-Hegermann(2021)]{LH_AlgorithmicLinearlyConstrainedGaussianProcessesBoundaryConditions}
Lange-Hegermann, M.
\newblock Linearly constrained {G}aussian processes with boundary conditions.
\newblock In \emph{AISTATS}, 2021.

\bibitem[Lange-Hegermann \& Robertz(2022)Lange-Hegermann and
  Robertz]{langehegermann2022boundary}
Lange-Hegermann, M. and Robertz, D.
\newblock On boundary conditions parametrized by analytic functions.
\newblock In \emph{Computer Algebra in Scientific Computing}, 2022.

\bibitem[L{{\'a}}zaro-Gredilla et~al.(2010)L{{\'a}}zaro-Gredilla,
  Qui{{\~n}}nero-Candela, Rasmussen, and Figueiras-Vidal]{lazaro20sparse}
L{{\'a}}zaro-Gredilla, M., Qui{{\~n}}nero-Candela, J., Rasmussen, C.~E., and
  Figueiras-Vidal, A.~R.
\newblock Sparse spectrum gaussian process regression.
\newblock \emph{JMLR}, 2010.

\bibitem[Loshchilov \& Hutter(2016)Loshchilov and Hutter]{loshchilov2016sgdr}
Loshchilov, I. and Hutter, F.
\newblock {SGDR}: Stochastic gradient descent with warm restarts.
\newblock \emph{\href{https://arxiv.org/abs/1608.03983}{ arXiv:1608.03983}},
  2016.

\bibitem[Mac{\^e}do \& Castro(2008)Mac{\^e}do and Castro]{MacedoCastro2008}
Mac{\^e}do, I. and Castro, R.
\newblock Learning {D}ivergence-free and {C}url-free {V}ector {F}ields with
  {M}atrix-valued {K}ernels.
\newblock \emph{Instituto Nacional de Matematica Pura e Aplicada, Brasil, Tech.
  Rep}, 2008.

\bibitem[Marco et~al.(2015)Marco, Hennig, Bohg, Schaal, and
  Trimpe]{marco2015automatic}
Marco, A., Hennig, P., Bohg, J., Schaal, S., and Trimpe, S.
\newblock Automatic {LQR} tuning based on {G}aussian process optimization:
  Early experimental results.
\newblock In \emph{Second Machine Learning in Planning and Control of Robot
  Motion Workshop at International Conference on Intelligent Robots and
  Systems}, 2015.

\bibitem[Mathews et~al.(2021)Mathews, Francisquez, Hughes, Hatch, Zhu, and
  Rogers]{mathews21uncovering}
Mathews, A., Francisquez, M., Hughes, J.~W., Hatch, D.~R., Zhu, B., and Rogers,
  B.~N.
\newblock Uncovering turbulent plasma dynamics via deep learning from partial
  observations.
\newblock \emph{Phys. Rev. E}, 2021.

\bibitem[Michalek \& Sturmfels(2021)Michalek and
  Sturmfels]{michalek21invitation}
Michalek, M. and Sturmfels, B.
\newblock \emph{Invitation to nonlinear algebra}.
\newblock Springer, 2021.

\bibitem[Nayek et~al.(2019)Nayek, Chakraborty, and
  Narasimhan]{nayek2019gaussian}
Nayek, R., Chakraborty, S., and Narasimhan, S.
\newblock A gaussian process latent force model for joint input-state
  estimation in linear structural systems.
\newblock \emph{Mechanical Systems and Signal Processing}, 2019.

\bibitem[Nicholson et~al.(2022)Nicholson, Kiessler, and
  Brown]{nicholson2022kernel}
Nicholson, J., Kiessler, P., and Brown, D.~A.
\newblock A kernel-based approach for modelling {G}aussian processes with
  functional information.
\newblock \emph{\href{https://arxiv.org/abs/2201.11023}{ arXiv:2201.11023}},
  2022.

\bibitem[Palamodov(1970)]{PALAMODOV}
Palamodov, V.~P.
\newblock \emph{Linear differential operators with constant coefficients}.
\newblock Springer, 1970.

\bibitem[Pang et~al.(2019)Pang, Yang, and Karniadakis]{pang2019neural}
Pang, G., Yang, L., and Karniadakis, G.~E.
\newblock Neural-net-induced gaussian process regression for function
  approximation and pde solution.
\newblock \emph{Journal of Computational Physics}, 2019.

\bibitem[Perugia et~al.(2020)Perugia, Sch{\"o}berl, Stocker, and
  Wintersteiger]{perugia2020tent}
Perugia, I., Sch{\"o}berl, J., Stocker, P., and Wintersteiger, C.
\newblock Tent pitching and trefftz-dg method for the acoustic wave equation.
\newblock \emph{Computers \& Mathematics with Applications}, 79\penalty0
  (10):\penalty0 2987--3000, 2020.

\bibitem[Pf{\"o}rtner et~al.(2022)Pf{\"o}rtner, Steinwart, Hennig, and
  Wenger]{pfortner2022physics}
Pf{\"o}rtner, M., Steinwart, I., Hennig, P., and Wenger, J.
\newblock Physics-informed gaussian process regression generalizes linear pde
  solvers.
\newblock \emph{\href{https://arxiv.org/abs/2212.12474}{arXiv:2212.12474}},
  2022.

\bibitem[Rai \& Tripathi(2019)Rai and Tripathi]{rai2019gaussian}
Rai, P.~K. and Tripathi, S.
\newblock Gaussian process for estimating parameters of partial differential
  equations and its application to the richards equation.
\newblock \emph{Stochastic Environmental Research and Risk Assessment}, 2019.

\bibitem[Raissi et~al.(2017)Raissi, Perdikaris, and
  Karniadakis]{raissi2017machine}
Raissi, M., Perdikaris, P., and Karniadakis, G.~E.
\newblock Machine learning of linear differential equations using {G}aussian
  processes.
\newblock \emph{Journal of Computational Physics}, 2017.

\bibitem[Raissi et~al.(2018)Raissi, Perdikaris, and
  Karniadakis]{raissi2018numerical}
Raissi, M., Perdikaris, P., and Karniadakis, G.~E.
\newblock Numerical gaussian processes for time-dependent and nonlinear partial
  differential equations.
\newblock \emph{SIAM Journal on Scientific Computing}, 2018.

\bibitem[Raissi et~al.(2019)Raissi, Perdikaris, and
  Karniadakis]{raissi2019physics}
Raissi, M., Perdikaris, P., and Karniadakis, G.~E.
\newblock Physics-informed neural networks: A deep learning framework for
  solving forward and inverse problems involving nonlinear partial differential
  equations.
\newblock \emph{Journal of Computational physics}, 2019.

\bibitem[Rajput(1972{\natexlab{a}})]{rajput1972gaussianb}
Rajput, B.~S.
\newblock Gaussian measures on $\textup{L}^p$ spaces, $1\leq p<\infty$.
\newblock \emph{Journal of Multivariate Analysis}, 2\penalty0 (4):\penalty0
  382--403, 1972{\natexlab{a}}.

\bibitem[Rajput(1972{\natexlab{b}})]{rajput1972gaussianc}
Rajput, B.~S.
\newblock On gaussian measures in certain locally convex spaces.
\newblock \emph{Journal of Multivariate Analysis}, 2\penalty0 (3):\penalty0
  282--306, 1972{\natexlab{b}}.

\bibitem[Rajput \& Cambanis(1972)Rajput and Cambanis]{rajput1972gaussiana}
Rajput, B.~S. and Cambanis, S.
\newblock Gaussian processes and gaussian measures.
\newblock \emph{The Annals of Mathematical Statistics}, 43\penalty0
  (6):\penalty0 1944--1952, 1972.

\bibitem[Ranftl(2022)]{ranftl2022connection}
Ranftl, S.
\newblock A connection between probability, physics and neural networks.
\newblock In \emph{Physical Sciences Forum}. MDPI, 2022.

\bibitem[Rasmussen \& Williams(2006)Rasmussen and Williams]{RW}
Rasmussen, C.~E. and Williams, C. K.~I.
\newblock \emph{Gaussian processes for machine learning}.
\newblock MIT Press, Cambridge, MA, 2006.

\bibitem[Reece et~al.(2014)Reece, Roberts, Ghosh, Rogers, and
  Jennings]{reece2014efficient}
Reece, S., Roberts, S., Ghosh, S., Rogers, A., and Jennings, N.~R.
\newblock Efficient state-space inference of periodic latent force models.
\newblock \emph{JMLR}, 2014.

\bibitem[Rogers et~al.(2020)Rogers, Worden, and Cross]{rogers2020application}
Rogers, T., Worden, K., and Cross, E.
\newblock On the application of gaussian process latent force models for joint
  input-state-parameter estimation: With a view to bayesian operational
  identification.
\newblock \emph{Mechanical Systems and Signal Processing}, 2020.

\bibitem[Saemundsson et~al.(2020)Saemundsson, Terenin, Hofmann, and
  Deisenroth]{saemundsson2020variational}
Saemundsson, S., Terenin, A., Hofmann, K., and Deisenroth, M.
\newblock Variational integrator networks for physically structured embeddings.
\newblock In \emph{AISTATS}, 2020.

\bibitem[S{\"a}rkk{\"a}(2011)]{sarkka2011linear}
S{\"a}rkk{\"a}, S.
\newblock Linear {O}perators and {S}tochastic {P}artial {D}ifferential
  {E}quations in {G}aussian {P}rocess {R}egression.
\newblock In \emph{ICANN}. Springer, 2011.

\bibitem[S{\"a}rkk{\"a} et~al.(2018)S{\"a}rkk{\"a}, Alvarez, and
  Lawrence]{sarkka2018gaussian}
S{\"a}rkk{\"a}, S., Alvarez, M.~A., and Lawrence, N.~D.
\newblock Gaussian process latent force models for learning and stochastic
  control of physical systems.
\newblock \emph{IEEE Transactions on Automatic Control}, 2018.

\bibitem[Scheuerer \& Schlather(2012)Scheuerer and
  Schlather]{scheuerer2012covariance}
Scheuerer, M. and Schlather, M.
\newblock Covariance models for divergence-free and curl-free random vector
  fields.
\newblock \emph{Stoch.\ Models}, 2012.

\bibitem[Schmidt et~al.(2021)Schmidt, Kr{\"a}mer, and
  Hennig]{schmidt2021probabilistic}
Schmidt, J., Kr{\"a}mer, N., and Hennig, P.
\newblock A probabilistic state space model for joint inference from
  differential equations and data.
\newblock \emph{NeurIPS}, 2021.

\bibitem[Schober et~al.(2014)Schober, Duvenaud, and
  Hennig]{schober2014probabilistic}
Schober, M., Duvenaud, D.~K., and Hennig, P.
\newblock Probabilistic {ODE} solvers with {R}unge-{K}utta means.
\newblock \emph{NeurIPS}, 2014.

\bibitem[Schober et~al.(2019)Schober, S{\"a}rkk{\"a}, and
  Hennig]{schober2019probabilistic}
Schober, M., S{\"a}rkk{\"a}, S., and Hennig, P.
\newblock A probabilistic model for the numerical solution of initial value
  problems.
\newblock \emph{Statistics and Computing}, 2019.

\bibitem[Shankar(2019)]{shankar_notes}
Shankar, S.
\newblock Controllability and vector potential: Six lectures at {S}teklov.
\newblock \emph{\href{https://arxiv.org/abs/1911.01238}{arXiv:1911.01238}},
  2019.

\bibitem[Solin \& Kok(2019)Solin and Kok]{solin2019know}
Solin, A. and Kok, M.
\newblock Know your boundaries: Constraining {G}aussian processes by
  variational harmonic features.
\newblock In \emph{AISTATS}. 2019.

\bibitem[Solin et~al.(2018)Solin, Kok, Wahlstr{\"o}m, Sch{\"o}n, and
  S{\"a}rkk{\"a}]{MagneticFieldGP}
Solin, A., Kok, M., Wahlstr{\"o}m, N., Sch{\"o}n, T.~B., and S{\"a}rkk{\"a}, S.
\newblock Modeling and {I}nterpolation of the {A}mbient {M}agnetic {F}ield by
  {G}aussian {P}rocesses.
\newblock \emph{IEEE Transactions on Robotics}, 2018.

\bibitem[Steinbach \& Zank(2019)Steinbach and Zank]{steinbach2019stabilized}
Steinbach, O. and Zank, M.
\newblock A stabilized space--time finite element method for the wave equation.
\newblock In \emph{Advanced Finite Element Methods with Applications: Selected
  Papers from the 30th Chemnitz Finite Element Symposium 2017 30}, pp.\
  341--370. Springer, 2019.

\bibitem[Steinruecken et~al.(2019)Steinruecken, Smith, Janz, Lloyd, and
  Ghahramani]{steinruecken2019automatic}
Steinruecken, C., Smith, E., Janz, D., Lloyd, J., and Ghahramani, Z.
\newblock The automatic statistician.
\newblock In \emph{Automated Machine Learning}. 2019.

\bibitem[Swain et~al.(2016)Swain, Stevenson, Leary, Montano-Gutierrez, Clark,
  Vogel, and Pilizota]{swain2016inferring}
Swain, P.~S., Stevenson, K., Leary, A., Montano-Gutierrez, L.~F., Clark, I.~B.,
  Vogel, J., and Pilizota, T.
\newblock Inferring time derivatives including cell growth rates using gaussian
  processes.
\newblock \emph{Nature communications}, 2016.

\bibitem[Tan(2018)]{tan2018gaussian}
Tan, M. H.~Y.
\newblock Gaussian process modeling with boundary information.
\newblock \emph{Statist.\ Sinica}, 2018.

\bibitem[Tanyu et~al.(2022)Tanyu, Ning, Freudenberg, Heilenk{\"o}tter,
  Rademacher, Iben, and Maass]{tanyu2022deep}
Tanyu, D.~N., Ning, J., Freudenberg, T., Heilenk{\"o}tter, N., Rademacher, A.,
  Iben, U., and Maass, P.
\newblock Deep learning methods for partial differential equations and related
  parameter identification problems.
\newblock \emph{\href{https://arxiv.org/abs/2212.03130}{arXiv:2212.03130}},
  2022.

\bibitem[Thewes et~al.(2015)Thewes, Lange-Hegermann, Reuber, and Beck]{TLRB_gp}
Thewes, S., Lange-Hegermann, M., Reuber, C., and Beck, R.
\newblock {A}dvanced {G}aussian {P}rocess {M}odeling {T}echniques.
\newblock In \emph{Design of Experiments (DoE) in Powertrain Development},
  2015.

\bibitem[Titsias(2009)]{titsias09variational}
Titsias, M.
\newblock Variational learning of inducing variables in sparse gaussian
  processes.
\newblock In \emph{AISTATS}, 2009.

\bibitem[Tronarp et~al.(2021)Tronarp, S{\"a}rkk{\"a}, and
  Hennig]{tronarp2021bayesian}
Tronarp, F., S{\"a}rkk{\"a}, S., and Hennig, P.
\newblock Bayesian {ODE} solvers: The maximum a posteriori estimate.
\newblock \emph{Statistics and Computing}, 2021.

\bibitem[Ulaganathan et~al.(2016)Ulaganathan, Couckuyt, Dhaene, Degroote, and
  Laermans]{ulaganathan2016performance}
Ulaganathan, S., Couckuyt, I., Dhaene, T., Degroote, J., and Laermans, E.
\newblock Performance study of gradient-enhanced {K}riging.
\newblock \emph{Engineering with computers}, 2016.

\bibitem[van~den Boogaart(2001)]{van2001kriging}
van~den Boogaart, K.~G.
\newblock Kriging for processes solving partial differential equations.
\newblock In \emph{Proceedings of the Annual Conference of the International
  Association for Mathematical Geology}, 2001.

\bibitem[van Milligen et~al.(1995)van Milligen, Tribaldos, and
  Jim{\'e}nez]{milligen1995neural}
van Milligen, B.~P., Tribaldos, V., and Jim{\'e}nez, J.
\newblock Neural network differential equation and plasma equilibrium solver.
\newblock \emph{Physical review letters}, 1995.

\bibitem[Wahlstr{\"o}m et~al.(2013)Wahlstr{\"o}m, Kok, Sch{\"o}n, and
  Gustafsson]{Wahlstrom13modelingmagnetic}
Wahlstr{\"o}m, N., Kok, M., Sch{\"o}n, T.~B., and Gustafsson, F.
\newblock Modeling {M}agnetic {F}ields using {G}aussian {P}rocesses.
\newblock In \emph{Proceedings of the 38th International Conference on
  Acoustics, Speech, and Signal Processing (ICASSP)}, 2013.

\bibitem[Ward et~al.(2020)Ward, Ryder, Prangle, and Alvarez]{ward2020black}
Ward, W., Ryder, T., Prangle, D., and Alvarez, M.
\newblock Black-box inference for non-linear latent force models.
\newblock In \emph{AISTATS}, 2020.

\bibitem[Wilson \& Adams(2013)Wilson and Adams]{wilson2013gaussian}
Wilson, A. and Adams, R.
\newblock Gaussian process kernels for pattern discovery and extrapolation.
\newblock In \emph{ICML}, 2013.

\bibitem[Yu et~al.(2018)]{yu2018deep}
Yu, B. et~al.
\newblock The deep ritz method: a deep learning-based numerical algorithm for
  solving variational problems.
\newblock \emph{Communications in Mathematics and Statistics}, 2018.

\bibitem[Zang et~al.(2020)Zang, Bao, Ye, and Zhou]{zang2020weak}
Zang, Y., Bao, G., Ye, X., and Zhou, H.
\newblock Weak adversarial networks for high-dimensional partial differential
  equations.
\newblock \emph{Journal of Computational Physics}, 2020.

\bibitem[Zhang et~al.(2022)Zhang, Zhang, and Lin]{zhang2022pagp}
Zhang, J., Zhang, S., and Lin, G.
\newblock Pagp: A physics-assisted gaussian process framework with active
  learning for forward and inverse problems of partial differential equations.
\newblock \emph{\href{https://arxiv.org/abs/2204.02583}{arXiv:2204.02583}},
  2022.

\bibitem[Zhao et~al.(2011)Zhao, Jin, Wu, and Shi]{zhao2011pde}
Zhao, H., Jin, R., Wu, S., and Shi, J.
\newblock Pde-constrained gaussian process model on material removal rate of
  wire saw slicing process.
\newblock \emph{Journal of Manufacturing Science and Engineering}, 2011.

\bibitem[Zimmer et~al.(2018)Zimmer, Meister, and Nguyen-Tuong]{zimmer2018safe}
Zimmer, C., Meister, M., and Nguyen-Tuong, D.
\newblock Safe active learning for time-series modeling with {G}aussian
  processes.
\newblock In \emph{NeurIPS}, 2018.

\end{thebibliography}
\bibliographystyle{icml2023}

\newpage
\appendix
\onecolumn

\section{Proof of Lemma \ref{lemma_pushforward_gaussian}}\label{appendix_proof_lemma_pushforward_gaussian}
Our proof hinges on the fact that Gaussian processes and Gaussian measures are essentially the same concepts when defined on reasonable function spaces. The pushforward property is then easy to check for Gaussian measures.

We will work with a space $\F$ of functions, which can be a general locally convex topological vector space (LCS), but for our purposes, we work with $\F=C^\infty(\Omega)$ endowed with the topology induced by the semi norms $p_\alpha(f)=\sup_{x\in\Omega}|\partial^\alpha f(x)|$ (recall here that we consider $\Omega$ to be the closure of an open, convex, bounded set). This topology induces the structure of a \emph{separable} Fr\'echet space on $\F$. We write $\F^*$ for the topological dual of $\F$ and $\delta_x\colon f\in\F\mapsto f(x)$ for the evaluation functionals. It turns out that $\delta_x$ span a dense set in $\F^*$, a property which will be crucial in the sequel. 
\begin{lemma}\label{lem:density_F*}
    Let $\F=C^\infty(\Omega)$. Then the linear span of $\{\delta_x\}_{x\in\Omega}$ is weakly-* dense in $\F^*$.
\end{lemma}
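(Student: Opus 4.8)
The plan is to deduce this from the standard duality between weak-$*$ closure and pre-annihilators, so that the statement reduces to the trivial observation that a smooth function vanishing at every point of $\Omega$ is identically zero. The only structural input I need is that $\F = C^\infty(\Omega)$ is a Hausdorff locally convex space (indeed a separable Fr\'echet space, as noted), which guarantees that the topological dual of $(\F^*, \sigma(\F^*,\F))$ is canonically $\F$ itself: every weak-$*$ continuous linear functional on $\F^*$ is of the form $\phi \mapsto \phi(f)$ for a unique $f \in \F$.

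First I would record the general principle. For a subspace $M \subseteq \F^*$, the Hahn--Banach separation theorem in the locally convex space $(\F^*, \sigma(\F^*,\F))$ says that $M$ fails to be weak-$*$ dense precisely when it is contained in the kernel of some nonzero weak-$*$ continuous functional. By the duality identification above, such a functional is evaluation against some $f \in \F$. Hence $M$ is weak-$*$ dense if and only if its pre-annihilator is trivial, i.e.
\begin{align*}
  \bigl\{\, f \in \F \;:\; \phi(f) = 0 \text{ for all } \phi \in M \,\bigr\} = \{0\}.
\end{align*}

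Then I would apply this with $M = \Span\{\delta_x\}_{x \in \Omega}$. Since $M$ is the linear span of the $\delta_x$, the condition $\phi(f) = 0$ for all $\phi \in M$ is equivalent to $\delta_x(f) = f(x) = 0$ for every $x \in \Omega$. A smooth (in particular continuous) function on $\Omega$ vanishing at all points is the zero function, so the pre-annihilator of $M$ is $\{0\}$, and the general principle yields that $\Span\{\delta_x\}_{x\in\Omega}$ is weak-$*$ dense in $\F^*$.

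There is no real obstacle in the computation: the content lives entirely in the two functional-analytic facts being correctly invoked. The point that deserves a line of care is the identification of the weak-$*$ dual of $\F^*$ with $\F$; this is where the hypothesis that $\F$ is Hausdorff and locally convex (so that distinct points of $\F$ are separated by $\F^*$, and no extra functionals appear) is used. Once that is in place, the argument is a one-step Hahn--Banach separation, and the density of the $\delta_x$ is simply the statement that pointwise evaluation is a separating family on $C^\infty(\Omega)$.
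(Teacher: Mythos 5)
Your argument is correct and is essentially the paper's own proof: both reduce weak-$*$ density to a Hahn--Banach separation in $(\F^*,\sigma(\F^*,\F))$, use that every weak-$*$ continuous linear functional on $\F^*$ is evaluation at some $f\in\F$, and conclude because a smooth function vanishing at every point of $\Omega$ is zero. The only step the paper includes that you omit is the one-line verification that each $\delta_x$ actually lies in $\F^*$ (convergence in the seminorms $p_\alpha$ implies uniform, hence pointwise, convergence), which is worth recording since the statement presupposes it.
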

\begin{proof}
    First, we must check that $\delta_x\in\F^*$ for a fixed $x\in\Omega$, which means that $\delta_x$ is linear on $\F$ and continuous with respect to the topology induced by the semi norms. Linearity is obvious. To check continuity, let $f_j\rightarrow f$ with respect to all semi norms, so, in particular, $f_j\rightarrow f$ uniformly in $\Omega$. This implies that $f_j(x)\rightarrow f(x)$, so $\delta_x(f_j)\rightarrow \delta_x(f)$.

    To prove density, we use the Hahn-Banach extension theorem in locally convex spaces -- in our case $\F^* $ equipped with the weakly-* topology. Denote by $\mathcal H$ the weakly-* closure of the linear span of   $\{\delta_x\}_{x\in\Omega}$ and assume for contradiction that there exists $\phi\in \F^*\setminus \mathcal H$. By the extension theorem, there exists $f\in\F$ such that $L(f)=0$ for all $L\in \mathcal H$ and $\phi(f)=1$. This implies, in particular, that $\delta_x(f)=0$, so $f(x)=0$ for all $x\in\Omega$, so that $f=0$. Finally, this implies that $\phi(f)=0$, which is a contradiction. The lemma is proved.
\end{proof}
To discuss Gaussian processes/measures, we need to turn $\F$ into the right measure space. Fortunately, it turns out that ``all reasonable sigma algebras'' coincide:
\begin{lemma}
    Let $\F=C^\infty(\Omega)$. The following sigma algebras on $\F$ coincide:
    \begin{enumerate}
        \item The Borel sigma algebra (generated by all open sets),
        \item The sigma algebra which makes all linear functionals in $\F^*$ measurable,
        \item The sigma algebra which makes all evaluation functionals measurable.
    \end{enumerate}
\end{lemma}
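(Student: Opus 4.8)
The plan is to prove the three $\sigma$-algebras coincide by setting up a chain of inclusions that closes on itself. Write $\mathcal B$ for the Borel $\sigma$-algebra, $\mathcal A_{\F^*}$ for the $\sigma$-algebra making all $L\in\F^*$ measurable, and $\mathcal A_\delta$ for the one making all evaluations $\{\delta_x\}_{x\in\Omega}$ measurable. Two inclusions are immediate: every $\delta_x$ lies in $\F^*$, so $\mathcal A_\delta\subseteq\mathcal A_{\F^*}$; and every $L\in\F^*$ is continuous, hence Borel measurable, so $\mathcal A_{\F^*}\subseteq\mathcal B$. Thus the whole content of the lemma is the reverse inclusion $\mathcal B\subseteq\mathcal A_\delta$, which I expect to be the main obstacle.

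For this I would exploit that $\F=C^\infty(\Omega)$ is not merely Fréchet but, being separable and complete metrizable (the defining seminorms $p_\alpha$ are countably many, indexed by multi-indices), a \emph{Polish} space. First I pick a countable dense set $\{x_k\}_{k\in\NN}\subseteq\Omega$ and observe that $\{\delta_{x_k}\}_k$ already separates points of $\F$: if $f(x_k)=g(x_k)$ for all $k$, then continuity of $f,g$ together with density of $\{x_k\}$ forces $f=g$. Hence the countable family of continuous functionals $\{\delta_{x_k}\}_k$ is point-separating.

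Next I assemble these into the map $G\colon\F\to\RR^{\NN}$, $f\mapsto(\delta_{x_k}(f))_k=(f(x_k))_k$, which is continuous (each coordinate is) and injective (by point-separation). Both $\F$ and $\RR^{\NN}$ are Polish, so by the Lusin--Souslin theorem a continuous injection between Polish spaces is a Borel isomorphism onto its image, which is itself Borel. Consequently $\mathcal B=G^{-1}\bigl(\mathcal B(\RR^\NN)\bigr)$; since $\mathcal B(\RR^\NN)$ is generated by the coordinate projections $\pi_k$ and $\delta_{x_k}=\pi_k\circ G$, this reads $\mathcal B=\sigma(\{\delta_{x_k}\}_k)\subseteq\mathcal A_\delta$.

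Combining the steps yields the sandwich $\mathcal A_\delta\subseteq\mathcal A_{\F^*}\subseteq\mathcal B\subseteq\mathcal A_\delta$, forcing equality throughout and proving the lemma; note this route needs no separate appeal to the weak-$*$ density of $\Span\{\delta_x\}$ from Lemma~\ref{lem:density_F*}, the sandwich alone doing the work. The delicate point is genuinely the descriptive-set-theoretic input: one cannot in general deduce $\mathcal B\subseteq\mathcal A_\delta$ from point-separation by continuous functions without a completeness/Polishness hypothesis, and the derivative-laden seminorms defining the topology of $\F$ make a direct ``open balls lie in $\mathcal A_\delta$'' argument awkward, since the $\delta_x$ do not generate the topology. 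It is precisely injectivity plus completeness that let Lusin--Souslin bridge this gap.
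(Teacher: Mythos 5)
Your proof is correct, but it takes a genuinely different route from the paper's. The paper splits the statement into two equalities: it cites Rajput (1972, p.~1945) for the identity between the Borel $\sigma$-algebra and the one generated by $\F^*$ (using only separability of $\F$), and then deduces the identity between the $\sigma$-algebras generated by $\F^*$ and by the evaluations from the weak-$*$ density of $\Span\{\delta_x\}_{x\in\Omega}$ in $\F^*$ (the paper's Lemma~\ref{lem:density_F*}). You instead close a sandwich $\mathcal A_\delta\subseteq\mathcal A_{\F^*}\subseteq\mathcal B\subseteq\mathcal A_\delta$, where the only nontrivial inclusion $\mathcal B\subseteq\mathcal A_\delta$ is obtained by embedding the Polish space $\F=C^\infty(\Omega)$ into $\RR^{\NN}$ via countably many point evaluations at a dense set $\{x_k\}\subseteq\Omega$ and invoking Lusin--Souslin. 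Each approach has its merits: the paper's is shorter given the external reference and reuses the density lemma it needs anyway for the GP/Gaussian-measure correspondence, whereas yours is self-contained modulo a standard descriptive-set-theoretic theorem, yields the strictly stronger conclusion that $\mathcal B$ is \emph{countably} generated by evaluations, and sidesteps the point that weak-$*$ density a priori provides only nets, not the sequences one would want to transfer measurability from $\Span\{\delta_x\}$ to all of $\F^*$. Your closing remark that point-separation by continuous functionals alone would not suffice without completeness is also well taken; the Polishness of the separable Fr\'echet space $\F$ is doing real work here, just as separability does in the reference the paper cites.
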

\begin{proof}
    For the equality of the first two sigma algebras see p.1945 in \cite{rajput1972gaussiana}, which only uses the separability of $\F$. The equality of the second and third sigma algebras follows at once from the  Lemma \ref{lem:density_F*}.
\end{proof}
Henceforth, we will simply denote any of the three sigma algebras above by $\Sigma$. We define \textbf{Gaussian measures} on $(\F,\Sigma)$ as measures such that every $L\in\F^*$ is a (one dimensional) Gaussian random variable on $(\F,\Sigma,\gamma)$. We have the following Fourier transform type characterization, see Theorem 2.2.4 in \cite{bogachev1998gaussian},
\begin{align}\label{eq:GM}
\EE[\exp(\sqrt{-1}L)]=\int_{\F}\exp(\sqrt{-1}L(f))d\gamma(f)=\exp\left(\sqrt{-1}A(L)-\tfrac{1}{2}V(L)\right)\quad\text{for }L\in\F{^*},
\end{align}
where $A$ and $V$ stand for the \emph{average} and \emph{variance} of $L$, which are
$$
A(L)=\EE[L]=\int_{\F}L(f) d\gamma(f),\quad V(L)=C(L,L),\text{ where }C(L,M)=\int_{\F}(L(f)-A(L))(M(f)-A(M))d\gamma(f).
$$
The linear map $A$ is the \emph{mean} of $\gamma$ and  the bilinear form $C$ is its \emph{covariance}.

With these in mind, it is very easy to prove the assertion of Lemma \ref{lemma_pushforward_gaussian} in the case of Gaussian measures:
\begin{lemma}\label{lem:PF-GM}
    Let $\gamma$ be a Gaussian measure on $(\F^\ell,\Sigma)$ and $B\colon \F^{\ell}\rightarrow\F^{\ell''}$ be a linear continuous map. Then $B_*\gamma$ is a Gaussian measure on $(\F^{\ell''},\Sigma)$ with {mean $L\mapsto A(L\circ B)$ and covariance $(L,M)\mapsto C(L\circ B,M\circ B)$} for $L,\,M\in(\F^{\ell''})^*$.

    Here we recall the definition of the pushforward $B_*\gamma$, i.e.\ the measure on $\F^{\ell''}$ such that
    $$
    \int_{\F^{\ell''}}F(f)dB_*\gamma(f)=\int_{\F^\ell }F(Bh)d\gamma(h)
    $$
    for all measurable $F\colon\F^{\ell}\rightarrow[0,\infty)$ such that the right hand side is finite. Equivalently, $B_*\gamma(S)=\gamma(B^{-1}(S))$ for all Borel sets $S\subset\F{^{\ell''}}$.
\end{lemma}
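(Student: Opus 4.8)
The plan is to prove that the pushforward $B_*\gamma$ is Gaussian by checking the defining property directly: I must verify that every functional $L \in (\F^{\ell''})^*$ becomes a one-dimensional Gaussian random variable on $(\F^{\ell''}, \Sigma, B_*\gamma)$, and then identify its mean and variance. The key observation that makes everything work is the change-of-variables formula built into the definition of the pushforward: for any measurable $F$, integrating $F$ against $B_*\gamma$ equals integrating $F \circ B$ against $\gamma$. Applied to $F = L$, this says that $L$ viewed as a random variable on $(\F^{\ell''}, B_*\gamma)$ has exactly the same distribution as $L \circ B$ viewed as a random variable on $(\F^{\ell}, \gamma)$.

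\textbf{First} I would note that since $B$ is linear and continuous and $L \in (\F^{\ell''})^*$ is linear and continuous, the composition $L \circ B$ is linear and continuous on $\F^\ell$, i.e.\ $L \circ B \in (\F^\ell)^*$. \textbf{Second}, because $\gamma$ is Gaussian on $(\F^\ell, \Sigma)$, the functional $L \circ B$ is by definition a one-dimensional Gaussian random variable on $(\F^\ell, \Sigma, \gamma)$. \textbf{Third}, I would invoke the change-of-variables identity: for any real $t$,
\begin{align*}
  \EE_{B_*\gamma}[\exp(\sqrt{-1}\,t L)]
  &= \int_{\F^{\ell''}} \exp(\sqrt{-1}\,t L(f)) \, dB_*\gamma(f) \\
  &= \int_{\F^{\ell}} \exp(\sqrt{-1}\,t L(Bh)) \, d\gamma(h)
   = \EE_\gamma[\exp(\sqrt{-1}\,t (L\circ B))].
\end{align*}
Thus the characteristic function of $L$ under $B_*\gamma$ coincides with that of the Gaussian random variable $L \circ B$ under $\gamma$, so $L$ is itself Gaussian under $B_*\gamma$. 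Since $L \in (\F^{\ell''})^*$ was arbitrary, $B_*\gamma$ is a Gaussian measure.

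\textbf{To read off the mean and covariance}, I would apply the formula \eqref{eq:GM} characterizing Gaussian measures to the Fourier transform just computed. Matching $\EE_{B_*\gamma}[\exp(\sqrt{-1}L)] = \exp(\sqrt{-1}\,A(L\circ B) - \tfrac{1}{2} V(L\circ B))$ against the template $\exp(\sqrt{-1}\,A_{B_*\gamma}(L) - \tfrac{1}{2}V_{B_*\gamma}(L))$ forces $A_{B_*\gamma}(L) = A(L \circ B)$ and $V_{B_*\gamma}(L) = V(L\circ B) = C(L\circ B, L\circ B)$, whence the covariance is $(L,M) \mapsto C(L\circ B, M\circ B)$ by the usual polarization identity for bilinear forms. \textbf{The only technical point that requires care} is ensuring the change-of-variables step is legitimate: the identity $\int F \, dB_*\gamma = \int F\circ B \, d\gamma$ is stated for nonnegative measurable $F$, so to apply it to the bounded complex integrand $\exp(\sqrt{-1}\,tL)$ I would split into real and imaginary parts and use that these are bounded (hence integrable) and differences of nonnegative measurable functions; this is routine but should be acknowledged rather than glossed over. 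Everything else is a direct substitution, so I expect no substantial obstacle.
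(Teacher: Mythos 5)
Your proposal is correct and follows essentially the same route as the paper's proof: both compute the characteristic functional $\EE_{B_*\gamma}[\exp(\sqrt{-1}L)]$ via the change-of-variables identity, recognize it as the characteristic functional of $L\circ B$ under $\gamma$, and match against the Gaussian template \eqref{eq:GM} to read off the mean $A(L\circ B)$ and covariance $C(L\circ B, M\circ B)$. Your added remarks --- that $L\circ B\in(\F^\ell)^*$ by continuity of $B$, and that the pushforward formula extends from nonnegative to bounded complex integrands --- are points the paper leaves implicit, and they only strengthen the argument.
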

The statement clearly holds for linear continuous maps between locally convex spaces.
\begin{proof}
    We examine for $L\in (\F^{\ell''})^*$
    \begin{align*}
        \EE_{B_*\gamma}[\sqrt{-1}L]&=\int_{\F^{\ell''}}\exp(\sqrt{-1}L(f))dB_*\gamma(f)=\int_{\F^{\ell}}\exp(\sqrt{-1} L(B h))d\gamma(h)=\EE_{\gamma}[\sqrt{-1}L\circ B]\\
        &=\exp\left(\sqrt{-1}A(L\circ B)-\tfrac{1}{2}V(L\circ B)\right),
    \end{align*}
    so $B_*\gamma$ is a Gaussian measure as well. The relations between the means and covariances are obvious.
\end{proof}
We also make the definition of {Gaussian processes} that we are working with precise. Following \cite{rajput1972gaussiana}, we say that $g$ is a \textbf{Gaussian process} on $(\F,\Sigma,\PP)$ if  for all $x\in\Omega$, $g(x)\in\F^*$ is a (one dimensional) random variable such that $(x,f)\in\Omega\times \F\mapsto g(x)(f)\in \RR$ is measurable and for all $m$ and $x_1,\ldots,x_m\in\Omega$, we have that $X=[g(x_1),\ldots, g(x_m)]$ is a jointly Gaussian distribution (to be precise, an $m$-dimensional multivariate Gaussian on $\F$). In terms of characteristic functions, this is to say that 
\begin{align}\label{eq:GP}
\EE[\exp(\sqrt{-1}X\cdot \nu)]=\int_{\F}\exp (\sqrt{-1}X(f)\cdot \nu)d\PP(f)=\exp\left(\sqrt{-1}a\cdot \nu-\tfrac{1}{2}\nu\cdot v\nu\right)\quad\text{for }\nu\in\RR{^m},
\end{align}
where $a=({\mu}(x_i))_{i=1\ldots m}$ and $v=(k(x_i,x_j))_{i,j=1\ldots m}$.
Here $\mu(x)=\EE[g(x)]$ and $k(x,x')=\EE[(g(x)-\mu(x))(g(x')-\mu(x'))]$ are the \emph{mean} and \emph{covariance kernel} of the GP $g$.
We will use the abbreviation $g(x,f)=g(x)(f)$ for $x\in\Omega$, $f\in\F$.

Heuristically, we notice that a Gaussian process as in \eqref{eq:GP} is nothing else than a restriction of a Gaussian measure as in \eqref{eq:GM} for the specific choice of
$
L=\sum_{i=1}^m\nu_i\delta_{x_i}\in\F^*
$ if $\PP=\gamma$.
To say it bluntly, Gaussian measures can be evaluated by all $L\in\F^*$, whereas Gaussian processes can only be evaluated by the span of pointwise evaluations $\delta_x$.
For our choice of $\F$ as the set of smooth functions, Lemma \ref{lem:density_F*} makes it clear that all of $\F^*$ can be approximated to an arbitrary degree by the pointwise evaluations $\delta_x$.
Hence, under these circumstances, Gaussian Processes on $\F$ ``coincide'' with Gaussian Measures on $\F$.
In fact, if we reduce the set up to the finite dimensional case, i.e.\ $\Omega=\{1,2,\ldots, m\}$, so $\F\simeq \RR^m\simeq \F^*$, then indeed GPs and Gaussian measures both coincide with a multivariate Gaussian random variable on $\RR^m$.

Therefore, the backbone to prove Lemma \ref{lemma_pushforward_gaussian} is the following correspondence between GPs and Gaussian measures:
\begin{theorem}\label{thm:GP-GM}
    Let $g$ be a GP on $(\F,\Sigma,\PP)$. Then there exists a Gaussian measure $\gamma_g$ on $(\F,\Sigma)$ such that
    $$
    \gamma_g(S)=\PP(\{f\in\F\colon g(\,\cdot\,,f)\in S\})\quad \text{for }S\in\Sigma.
    $$
    Conversely, let $\gamma$ be a Gaussian measure on $(\F,\Sigma)$. Then there exists a GP $g_\gamma$ on $(\F,\Sigma,\gamma)$ such that ${\gamma}_{g_\gamma}=\gamma$. Here $g_\gamma(x,f)=f(x)$.

    Moreover, if $g$ and $\gamma$ are in a correspondence as above, we have that
$$
\mu(x)=\int_{\F}f(x)d\gamma(f),\quad k(x,x')=\int_{\F}(f(x)-\mu(x))(f(x')-\mu(x'))d\gamma(f)\quad\text{for }x,x'\in\Omega.
$$
    where $\mu$ and $k$ are the mean and covariance functions of $g$. In fact, if $A$ and $C$ are the mean and covariance of $\gamma$, then $\mu(x)=A(\delta_x)$ and $k(x,x')=C(\delta_x,\delta_{x'})$.
\end{theorem}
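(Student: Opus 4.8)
The plan is to prove the two directions of the correspondence separately and then read off the moment formulas, with the decisive tool being Lemma \ref{lem:density_F*}: it is what lets us upgrade the \emph{finite-dimensional} Gaussianity built into the notion of a GP to the \emph{functional} Gaussianity demanded of a Gaussian measure. I would start with the easy converse direction, since it fixes the shape of the whole argument. Given a Gaussian measure $\gamma$, set $g_\gamma(x,f) := f(x) = \delta_x(f)$. We already know from the proof of Lemma \ref{lem:density_F*} that $\delta_x\in\F^*$, and joint measurability of $(x,f)\mapsto f(x)$ is immediate once $\Sigma$ is taken to be the $\sigma$-algebra making evaluations measurable. To see that $g_\gamma$ is a GP, observe that for $\nu\in\RR^m$ the combination $\sum_i\nu_i g_\gamma(x_i)=\sum_i\nu_i\delta_{x_i}$ is a \emph{single} element of $\F^*$, hence one-dimensional Gaussian by the very definition \eqref{eq:GM} of $\gamma$; this reproduces the joint characteristic function \eqref{eq:GP}. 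Finally, the sample-path map $f\mapsto g_\gamma(\cdot,f)=(x\mapsto f(x))$ is the identity on $\F$, so $\gamma_{g_\gamma}=\id_*\gamma=\gamma$.

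For the forward direction I would define $\gamma_g:=\Psi_*\PP$ through the sample-path map $\Psi\colon f\mapsto g(\cdot,f)$. The first task is to verify that $\Psi$ takes values in $\F$ and is $\Sigma$-measurable, so that $\gamma_g(S)=\PP(\Psi^{-1}(S))$ is well defined: measurability follows because $\Sigma$ is generated by evaluations and $\delta_x\circ\Psi=g(x)$ is measurable by hypothesis, while the statement $g(\cdot,f)\in\F$ reflects the sample-path regularity implicit in the GP and must be recorded. The substance is then to show $\gamma_g$ is Gaussian, i.e.\ that every $L\in\F^*$ is one-dimensional Gaussian under $\gamma_g$. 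For $L=\sum_i\nu_i\delta_{x_i}$ in the span of evaluations, change of variables gives that $L$, viewed as a random variable on $(\F,\Sigma,\gamma_g)$, has the same law as $\sum_i\nu_i g(x_i)$ on $(\F,\Sigma,\PP)$, which is Gaussian by \eqref{eq:GP}.

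The main obstacle is promoting this to an arbitrary $L\in\F^*$. Lemma \ref{lem:density_F*} supplies only a \emph{net} $L_\alpha\to L$ in the weak-$*$ topology, giving $L_\alpha(h)\to L(h)$ for every $h\in\F$, hence $\gamma_g$-everywhere; but dominated convergence of the characteristic functions $\int_\F\exp(\sqrt{-1}tL_\alpha)\,d\gamma_g$ is a statement about \emph{sequences}, not nets. To repair this I would invoke Banach--Steinhaus: a weak-$*$ convergent net is pointwise bounded, hence equicontinuous on the barrelled space $\F$, so it lies in a single equicontinuous, weak-$*$ compact set which, because $\F$ is separable, is weak-$*$ metrizable; this lets me extract a \emph{sequence} $L_n\to L$ weak-$*$. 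Dominated convergence now yields $\int_\F\exp(\sqrt{-1}tL_n)\,d\gamma_g\to\int_\F\exp(\sqrt{-1}tL)\,d\gamma_g$, while each left-hand side equals $\exp(\sqrt{-1}t a_n-\tfrac{1}{2}t^2 v_n)$; convergence of Gaussian characteristic functions forces $a_n\to a$ and $v_n\to v$ (allowing a degenerate limit), so $L$ is Gaussian and $\gamma_g$ is a Gaussian measure.

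The moment formulas are then a direct consequence of the pushforward identity $\gamma_g=\Psi_*\PP$ together with $\delta_x\circ\Psi=g(x)$, i.e.\ $\delta_x(\Psi(f))=g(\cdot,f)(x)=g(x,f)$. Indeed $\mu(x)=\EE_\PP[g(x)]=\int_\F g(x,f)\,d\PP(f)=\int_\F h(x)\,d\gamma_g(h)=A(\delta_x)$, and the same change of variables applied to the centered product $(\delta_x-A(\delta_x))(\delta_{x'}-A(\delta_{x'}))$ gives $k(x,x')=C(\delta_x,\delta_{x'})=\int_\F (h(x)-\mu(x))(h(x')-\mu(x'))\,d\gamma_g(h)$, matching the claimed identities.
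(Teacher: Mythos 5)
Your treatment of the converse direction (Cram\'er--Wold applied to $\sum_i\nu_i\delta_{x_i}\in\F^*$) and of the moment formulas (change of variables through the sample-path map $\Psi$, giving $\mu(x)=A(\delta_x)$ and $k(x,x')=C(\delta_x,\delta_{x'})$) matches the paper's own computation essentially verbatim. Where you genuinely diverge is the forward direction: the paper disposes of the GP-to-Gaussian-measure correspondence by citing Remark 1 of \cite{rajput1972gaussiana} together with Lemma~\ref{lem:density_F*}, whereas you attempt a self-contained proof that every $L\in\F^*$ is Gaussian under $\gamma_g=\Psi_*\PP$ by approximating $L$ with finite combinations of evaluations. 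That is a legitimate and more transparent route, and your reduction is sound: once a \emph{sequence} $L_n\to L$ pointwise on $\F$ is available, the characteristic functions are bounded by $1$, dominated convergence applies, and a limit in law of Gaussians is Gaussian. You also correctly flag, without resolving, that $\Psi$ must take values in $\F$ (sample-path regularity); the paper does not resolve this either, delegating it to the same citation, so you are at parity there, but it should be recorded as part of what the cited remark supplies.

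The genuine gap is the step that produces the sequence. You assert that ``a weak-$*$ convergent net is pointwise bounded, hence equicontinuous'' by Banach--Steinhaus. A convergent net need \emph{not} be pointwise bounded: for each fixed $h$ the scalars $L_\alpha(h)$ are only \emph{eventually} bounded, and the index past which they are bounded depends on $h$, so no single tail of the net is pointwise bounded and the uniform boundedness principle cannot be invoked. Since Lemma~\ref{lem:density_F*} only delivers a net, your passage to a metrizable, equicontinuous, weak-$*$ compact set containing it breaks down at exactly this point. The conclusion you need is nevertheless true for $\F=C^\infty(\Omega)$ and is best obtained from the concrete structure of the dual rather than from abstract density: any $L\in\F^*$ is dominated by $C\sum_{|\alpha|\le k}\sup_\Omega|\partial^\alpha\,\cdot\,|$ for some finite $k$, hence has a representation $L(f)=\sum_{|\alpha|\le k}\int_\Omega\partial^\alpha f\,d\mu_\alpha$ with finite signed measures $\mu_\alpha$; replacing each integral by a Riemann sum and each derivative by a difference quotient, both with mesh $1/n$, yields an explicit sequence $L_n$ in the span of the $\delta_x$ with $L_n(f)\to L(f)$ for every $f\in\F$, which is all your dominated-convergence argument requires. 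With that repair your proof is complete and constitutes a genuine unpacking of the citation the paper relies on.
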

Similar statements are true for a large variety of spaces, in particular Lebesgue spaces $L^p,\,1\leq p<\infty$ \cite{rajput1972gaussianb} and other locally convex spaces \cite{rajput1972gaussiana,rajput1972gaussianc}. Here we focus on spaces which embed in continuous functions.
\begin{proof}
    The proof of the statement concerning the correspondence between $g$ and $\gamma$ follows immediately from Remark 1 in \cite{rajput1972gaussiana} and Lemma \ref{lem:density_F*}. 

    The statements about the average and covariance are proved as follows: First
    $$
\mu(x)=\EE[g(x)]=\int_{\F}g(x,f)d\PP(f)=\int_{\F}f(x)d\gamma(f)=\int_{\F}\delta_x(f)d\gamma(f)=A(\delta_x).
    $$
    Then
    \begin{align*}
    k(x,x')&=\EE[(g(x)-\mu(x))(g(x')-\mu(x'))]=\int_{\F}(g(x,f)-\mu(x))(g(x',f)-\mu(x'))d\PP(f)\\
    &=\int_{\F}(f(x)-\mu(x))(f(x')-\mu(x'))d\gamma(f)=\int_{\F}(\delta_x(f)-A(\delta_x))(\delta_{x'}(f)-A(\delta_{x'}))d\gamma(f)=C(\delta_x,\delta_{x'}).
    \end{align*}
    This completes the proof.
\end{proof}
We can now finish our task. The proof consists of putting together Theorem \ref{thm:GP-GM} and Lemma \ref{lem:PF-GM}.
\begin{proof}[Proof of Lemma \ref{lemma_pushforward_gaussian}]
    Let $\gamma_g$ be the Gaussian measure given by Theorem \ref{thm:GP-GM}. Write $A$ and $C$ for its mean and covariance. We can then record by Lemma \ref{lem:PF-GM} that $B_*\gamma_g$ is a Gaussian measure with appropriate mean and variance. We would like to link $B_*g$ with $B_*\gamma_g$. %
    We claim that $B_*\gamma_g=\gamma_{B_*g}$. We have that for $S\in\Sigma$ 
    $$
    \gamma_{B_*g}(S)=\PP(\{f\in\F{^{\ell''}}\colon B_*g(\,\cdot\,,f)\in S\})=\PP(\{h\in\F{^{\ell}}\colon g(\,\cdot\,,Bh)\in S\}).
    $$
    On the other hand,
    $$
B_*\gamma_g(S)=\int_{\F^{\ell''}}\chi_S(f)dB_*\gamma_g(f)=\int_{\F^{\ell}}\chi_S(Bh)d\gamma_g(h)=\gamma_g(\{h\in\F{^\ell}\colon Bh\in S\}).
    $$
    By definition of $\gamma_g$, these two quantities coincide.

    We thus proved that that $B_*g$ is the GP that corresponds to the Gaussian measure $B_*\gamma_g$, in the sense of Theorem \ref{thm:GP-GM}. This takes care of proving that the process $B_*g$ is indeed a GP. It remains to retrieve its mean and covariance. By the first formula in Theorem \ref{thm:GP-GM}, we have that the mean of $B_*g$ at $x\in\Omega$ equals
    $$
    \int_{\F^{\ell''}}f(x)d B_*\gamma_g(f)=\int_{\F^{\ell''}}\delta_x(f)d B_*\gamma_g(f)=\int_{\F^{\ell}}\delta_x(Bh)d \gamma_g(h)=\delta_x\left(B\int_{\F^{\ell}}hd \gamma_g(h)\right)=\delta_x(B\mu)
    =(B\mu)(x).
    $$
   We proved that the mean of $B_*g$ is $B\mu$. To deal with the covariance, we introduce the notation $f\otimes h\colon (y,y')\mapsto f(y)h(y')^{T}$ for $f,\,h\in\F^{\ell}$. We crucially note that $\delta_{(x,x')}f\otimes h=f(x)h(x')^T$. 
  Suppressing the subscript from $\gamma_g$, we have
    \begin{align*}
        k_{B_*g}(x,x')&=C_{B_*\gamma}(\delta_x,\delta_{x'})=C_\gamma (\delta_x\circ B,\delta_{x'}\circ B)=\int_{\F^{\ell}}\delta_{(x,x')}[ B(f-\mu)]\otimes [ B(f-\mu)]d\gamma(f)\\
        &=\delta_{(x,x')}\int_{\F^{\ell}}[ B(f-\mu)]\otimes [ B(f-\mu)]d\gamma(f)=\delta_{(x,x')}\left(B\int_{\F^{\ell}}(f-\mu)\otimes (f-\mu)d\gamma(f)(B')^T\right)\\
        &= \delta_{(x,x')}[B k_g (B')^T]=[B k_g (B')^T](x,x'),
    \end{align*}
so  $k_{B_*g}=B k_g (B')^T$. The proof is complete.
\end{proof}

Comparing to \citep[Lemma~2.2]{LH_AlgorithmicLinearlyConstrainedGaussianProcessesBoundaryConditions}, this proof does not need a compatibility assumption between probabilities and operators.

\section{Convergence of Ehrenpreis-Palamodov integrals}\label{app:supporting_function}

In general, integrals of the form
\begin{align*}
 \phi(\xx)= \int_V D(\xx,\zz) e^{\langle \xx, \zz \rangle} \, d\mu(\zz), && k(\xx,\xx')= \int_V D(\xx,\zz) D(\xx',\zz)^H e^{\langle \xx, \zz \rangle} \, d\mu(\zz)
\end{align*}
appearing in the Ehrenpreis-Palamodov theorem and EPGP kernels need not converge.
This makes the choice of measure $\mu$ slightly delicate.
We propose three solutions.
\begin{enumerate}
  \item\label{itm:1} If the variety $V \subset \CC^n$ is the affine cone of a projective variety, i.e.\ $x \in V \iff \lambda x \in V$ for all $\lambda \in \CC$, we can restrict the measure to be supported on purely imaginary points, which equates to replacing $V$ by $V \cap \sqrt{-1}\RR^n$.
    This fact guarantees the convergence of the EPGP kernel when the characteristic variety is such an affine cone.
  \item\label{itm:2} In some cases, the integral may only converge on a subset of the domain of the solution.
    In such cases we can restrict the domain of our solution.
    A concrete example is with the heat equation $\partial_x^2 = \partial_t$, whose Ehrenpreis-Palamodov representation is $\int_\RR e^{\sqrt{-1} ax - a^2t} \, d\mu(a)$ after restricting and parametrizing the measure.
    Here, the relatively benign restriction to $t \geq 0$ makes the integrand bounded.
    This observation makes the heat equation EPGP kernel $\ke(x,t;x',t')$ well defined whenever $t,t' > 0$.
  \item\label{itm:3} A more general approach is to translate the exponent using the so-called \emph{supporting function} of a convex, compact set $\Omega \subseteq \RR^n$, defined as
    \begin{align*}
      H_{\Omega}(\mathrm{w}) = \max_{\mathrm{w} \in \Omega} \langle \zz, \yy \rangle, \quad\text{for }\mathrm{w}\in\RR^n.
    \end{align*}
    The integral is then modified to be
    \begin{align*}
      \phi(\xx) = \int_V D(\xx,\zz) e^{\langle \xx,\zz \rangle} e^{-H_\Omega(\operatorname{Re}(\zz))} \, d\mu(\zz).
    \end{align*}
    Of course, such a modification does change the Ehrenpreis-Palamodov measure to another allowed Ehrenpreis-Palamodov measure.
    This modification makes the real part of the exponent negative, which bounds the magnitude of the integrand and makes $\phi(\xx)$ defined whenever $\xx \in \Omega$.
    The same modification can be carried over to EPGP by substituting $\mathbf{\Psi(\xx,\zz')}$ in \cref{eq:epgp} by
    \begin{align*}
      \mathbf{\Psi}(\xx,\zz) = \sum_j \sum_{\zz \in \mathcal{S}_{\zz'}} D_j{(\xx,\zz)} e^{\langle \xx,\zz \rangle} e^{-H_{\Omega}(\operatorname{Re}(\zz))}
    \end{align*}
    
\end{enumerate}

We  prove that we have convergence of the Ehrenpreis-Palamodov integral for two explicit classes of  bounded measures $\mu$
\begin{itemize}
    \item with compact support, i.e.\ there exists a relatively compact subset $K$ of $V$ such that $\mu(V\setminus K)=0$ and $\mu(K)<\infty$ (e.g. S-EPGP with its Dirac delta measures),
    \item when $d\mu(\zz')=e^{-\tfrac{\|\zz'\|^2}{2}}d\mathcal{L}(\zz')$ and $V=\sqrt{-1}\RR^d$ (e.g. EPGP).
\end{itemize}
The proofs for convergence of EPGP kernels are analogous.

In case \ref{itm:1}, since $\zz$ is purely imaginary and $\xx$ is real, we have that $|e^{\langle \xx, \zz \rangle}|=1$ and, if $\mu$ has compact support,
\begin{align*}
 \int_V |D(\xx,\zz) e^{\langle \xx, \zz \rangle}| \, d\mu(\zz)=\int_K |D(\xx,\zz)|| e^{\langle \xx, \zz \rangle}| \, d\mu(\zz)\leq\int_K \sup_{\Omega\times K}|D| \, d\mu(\zz) =\mu(K)\sup_{\Omega\times K}|D|,
\end{align*}
which is finite and proves the required convergence of the integral. In the case of the squared exponential weight, the only difference is to notice that
$$
\int_{V}|D(\xx,\zz)|e^{-\tfrac{\|\zz'\|^2}{2}}d\mathcal{L}(\zz')<\infty,
$$
which follows since the squared exponential is a rapidly decreasing function.

In the case \ref{itm:2}, we need not assume compact support, just $\mu(\RR)<\infty$, to get
\begin{align*}
    \int_\RR |e^{\sqrt{-1} ax - a^2t}| \, d\mu(a)=\int_\RR e^{- a^2t} \, d\mu(a)\leq \mu(\RR),
\end{align*}
since $e^{- a^2t}\leq 1$ for $t\geq 0$. This then directly includes the case when $\mu$ is a squared exponential.

Finally, we examine the integrand in case \ref{itm:3} which equals
$$
|e^{\langle \xx,\zz \rangle} e^{-H_\Omega(\operatorname{Re}(\zz))}| =|e^{\langle \xx,\zz \rangle-H_\Omega(\operatorname{Re}(\zz))}|=e^{\langle \xx,\operatorname{Re}\zz \rangle-H_\Omega(\operatorname{Re}(\zz))}\leq1,
$$
where the last inequality follows from the definition of $H_\Omega$. We then conclude that
$$
\int_V |D(\xx,\zz) e^{\langle \xx,\zz \rangle} e^{-H_\Omega(\operatorname{Re}(\zz))}| \, d\mu(\zz)\leq \int_V |D(\xx,\zz)| \, d\mu(\zz)\leq \mu(K)\sup_{\Omega\times K}|D|,
$$
where the last inequality follows like in the analysis of case \ref{itm:1}. The case when $\mu$ is squared exponential follows similarly.

\section{Details about S-EPGP}\label{appendix_SEPGP}
In this section we derive the posterior distribution and training objective for S-EPGP.
Recall that our latent functions are of the form
\begin{align*}
  f(\xx) = \sum_{j=1}^m \sum_{i=1}^r w_{i,j} \frac{1}{|S_{\zz'_{i,j}}|} \left( \sum_{\zz \in S_{\zz'_{i,j}}} D_j(\xx, \zz) e^{\langle \xx, \zz \rangle} \right) =: \mathbf{w}^T \pphi(\xx),
\end{align*}
where we assume $\mathbf{w} \sim \mathcal{N}(0, \frac{1}{mr}\Sigma)$.

Given $n$ noisy observations $Y = f(X) + \epsilon$, where $\epsilon \sim \mathcal{N}(0, \sigma_0^2 I)$, the predictive distribution of $F_* = f(X_*)$ at new points $X_*$ is given by
\begin{align*}
  p(F_* \mid Y) = \mathcal{N}(\phi_*^H A^{-1} \Phi Y, \sigma_n^2 \phi_*^H A^{-1} \phi_*),
\end{align*}
where $A = mr\sigma_0^2 \Sigma^{-1} + \Phi \Phi^H$, $\Phi$ is the $mr \times n$ matrix with columns $\phi(\xx)$ for all $n$ training points $\xx$, and $\phi_*$ is the $mr \times n_*$ matrix with columns $\phi(\xx_*)$ for all $n_*$ prediction points $\xx_*$.
Similarly, the log marginal likelihood is, up to a constant summand $C$
\begin{align*}
  \log p(Y \mid \xx, \theta) = -\frac{1}{2 \sigma_0^2} (Y^T Y - Y^T \Phi^H A^{-1} \Phi Y) - \frac{n-mr}{2} \log \sigma_0^2 - \frac{1}{2} \log |\Sigma| - \frac{1}{2} \log |A| + C
\end{align*}
Training the model means finding $\zz'_{i,j} \in \CC^d$, $\sigma_0 > 0$, and $\Sigma = \operatorname{diag}(\sigma_1^2,\dotsc,\sigma_{mr}^2) \in \RR_{\succeq0}$ such that the log-marginal likelihood is maximized.

Note that the main bottleneck in the above computation is the inversion of $A$. Since usually $mr \ll n$, writing the training objective in this form is computationally efficient, since the matrix $A$ has only size $mr \times mr$.
Instead of inverting $A$, we compute a Cholesky decomposition, which also yields the determinant $|A|$.
An example implementation is presented in \Cref{app:code}.

\section{Details on the heat equation}\label{appendix_heat}

The variety $V$ corresponding to the heat equation is the parabola, which we will denote $z_x^2 = z_t$.
The only multiplier $D$ is 1.
This can be confirmed using the Macaulay2 command \texttt{solvePDE}
\begin{verbatim}
i1 : needsPackage "NoetherianOperators";
i2 : R = QQ[x,t];
i3 : solvePDE ideal (x^2-t)

              2
o3 = {{ideal(x  - t), {| 1 |}}}
\end{verbatim}

We may choose $z_x$ as the independent variable, which turns \cref{eq:epgp} into the covariance function
\begin{align}
\begin{split}\label{eq:heat_covar}
  \ke(x,t, x', t') &= \int_\RR e^{i x z_x - t z_x^2} e^{-i x' z_x - t' z_x^2} e^{-\frac{z_x^2}{2}} \, d\mathcal{L}(z_x)\\
                 &= \int_\RR e^{i z_x (x-x') - (t+t' + 1/2) z_x^2} e^{-i x' z_x - t' z_x^2} \, d\mathcal{L}(z_x)
\end{split}
\end{align}
The integral converges whenever $t + t' +1/2 > 0$, which is always the case in our domain.
Of course, changing the scale parameters in $e^{-\frac{z_x^2}{2}}$ changes the area of convergence.  
For the 1D heat equation, we approximate the integral using Monte-Carlo samples.

In the example of \Cref{ssec:heat_eqn}, the exact solution we want to learn is the function
\begin{align*}
  u(x,t) = \frac{\sqrt 5 (64 t^3 + 125 (-3 + x) (-1 + x) (2 + x) - 50 t (-2 + x) (13 + 4 x) + 40 t^2 (16 + 5 x))}{e^{x^2/(5 + 4 t)} (5 + 4 t)^{7/2}} 
\end{align*}
Our experimental setup for PINN is modeled after the one in \cite{mathews21uncovering}.
We use 5 hidden layers, each of dimension 100, and a $\tanh$ activation function.
The loss function is the sum of the mean squared error incurred from the training data and the mean of the square of the value of the heat equation sampled at 100 random points on the domain $(x,t) \in [-5,5] \times [0,5]$.
The neural network is trained for 10,000 epochs and parameters are optimized using Adam, with learning rate $10^{-4}$.

For one particular instance, with the difference between the underlying truth see \Cref{fig:heat_instance}.
The benefit of using carefully crafted covariance functions is also clearly visible in \Cref{fig:heat_numpts_totalerr}, which gathers total errors from all initial point setups in a comparison between EPGP and PINN.

\begin{figure}
  \vskip 0.2in
  \begin{center}
    \includegraphics[width=\linewidth]{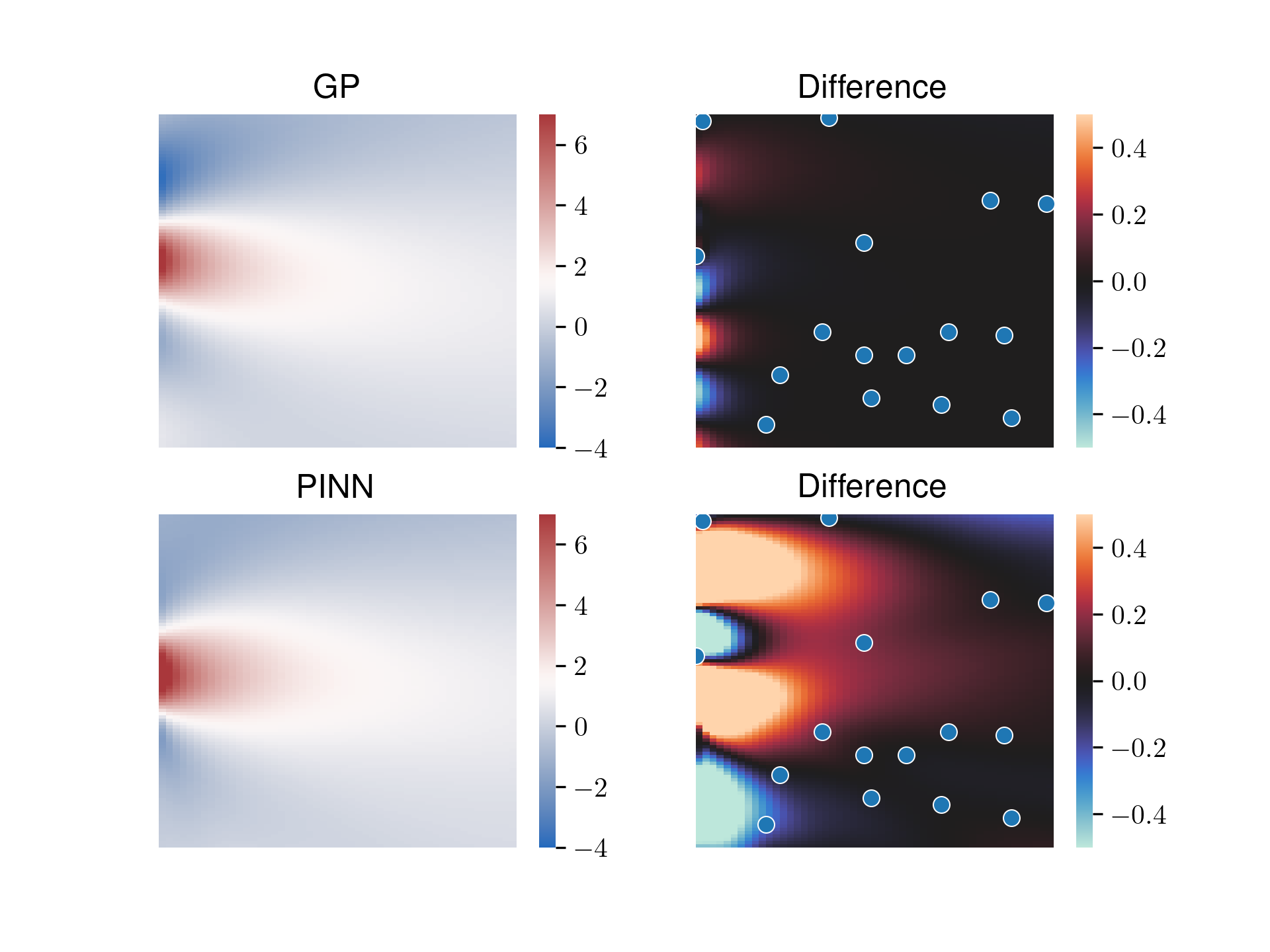}
    \caption{One set of trained instances with 16 randomly chosen datapoints. The vertical axis corresponds to $x$ and the horizontal axis corresponds to $t$, where $t = 0$ on the left and $t = 5$ on the right. The left plots describe the learned heat profile and the right plot denotes the difference between the exact and the learned solution. We observe that PINN performs well in regions where the density of data points is high, such as in the bottom right of the picture, but its error is relatively large at small values of $t$.}
    \label{fig:heat_instance}
\end{center}
\vskip -0.2in
\end{figure}

\begin{figure}
  \vskip 0.2in
  \begin{center}
    \includegraphics[width=\linewidth]{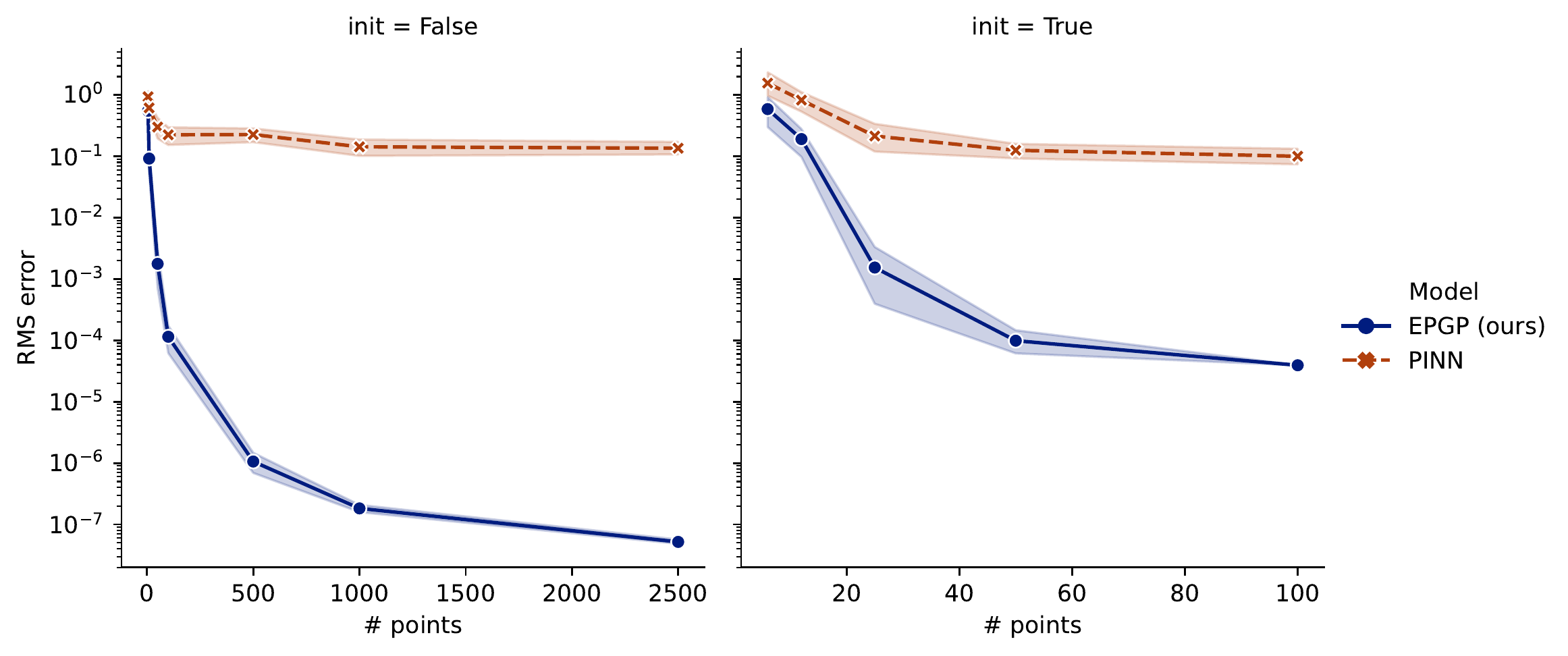}
    \caption{Aggregate plot of total errors with different choices of initial points. %
      On the left (init = False), we choose training points at random in $t \in [0,5]$.
      On the right (init = True), training points are chosen in $t = 0$.
    }
    \label{fig:heat_numpts_totalerr}
  \end{center}
  \vskip -0.2in
\end{figure}

For EPGP, the full $5151 \times 5151$ covariance matrix over 10,000 MC samples takes about one second to compute with an Nvidia A100 GPU.
This is the main computational task in vanilla EPGP, which once completed, allows essentially instantaneous inference by posterior mean.
This means that the entire experimental setup (10 repeats, 12 sets of initial points) takes about 10 seconds in total.
In contrast, each PINN model takes about a minute to complete 10,000 epochs and the computation has to be restarted from scratch for each set of initial points.
The total time used to run the entire experimental setup using PINN is thus about two hours.

For the 2D heat equation, we use the EPGP kernel whose Gaussian measure includes a scale parameter $\sigma^2$.
In this case the EPGP covariance kernel becomes
\begin{align*}
  k_{\sigma}(x,y,t; x', y', t') = \frac{1}{\frac{1}{\sigma^2} + 2(t + t')} e^{-\frac{(x-x')^2 + (y-y')^2}{2(\frac{1}{\sigma^2}+2(t+t'))}}.
\end{align*}
The initial data used in \Cref{fig:face_frames} consists of the values $\{0,1\}$ in the pattern shown in \Cref{fig:smile_initial}.
The initial data is given on a $101 \times 101$ grid of points in the range $(x,y) \in [-5,5]^2$ and at time $t=0$.

\begin{figure}
  \vskip 0.2in
  \centering
  \includegraphics[width=0.3\textwidth]{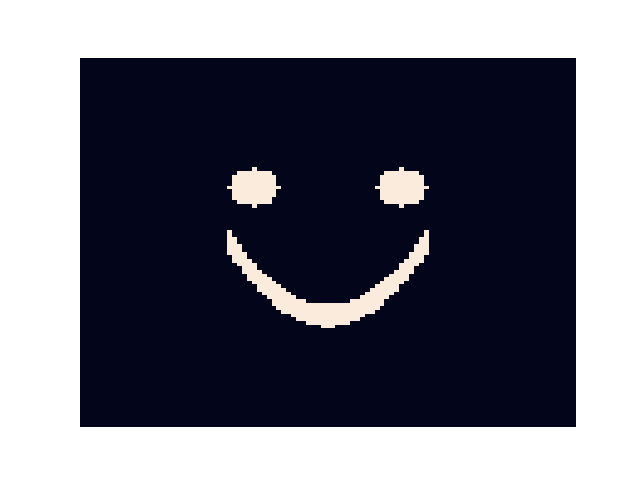}
  \caption{Initial data for learning a solution to the heat equation in 2D. The dark data points correspond to a heat value of 0 and the light points correspond to a heat value of 1.}
  \label{fig:smile_initial}
  \vskip -0.2in
\end{figure}

\section{Details about the wave equation}\label{appendix_wave}
The Macaulay2 command \texttt{solvePDE} reveals that the characteristic variety for the 2D wave equation $\partial_x^2 + \partial_y^2 - \partial_t^2 = 0$ is the cone $x^2 + y^2 - t^2 = 0$, so each entry of $\phi$ in the S-EPGP kernel will have the form
\begin{align*}
  \phi_j(x,y,t) = \frac{1}{2}\left(e^{\sqrt{-1}(xa_j + yb_j + t \sqrt{a_j^2 + b_j^2})} + e^{\sqrt{-1}(xa_j + yb_j - t \sqrt{a_j^2 + b_j^2})}\right)
\end{align*}
where $j = 1,\dotsc,16$. The spectral parameters $a_j, b_j \in \RR$ are learned from the data.

We initialize the 16 pairs $(a_i,b_i) \in \RR^2$ randomly from a standard normal distribution.
The initial noise coefficient $\sigma_0$ is set to $10^{-2}$ and the diagonal matrix $\Sigma$ is initialized to $\frac{1}{16} I$.
Optimization is done using Adam, with a learning rate of $0.1$ for both the $(a_i,b_i)$ and the logarithm of the diagonal entries of $\Sigma$.
Learning rate for the logarithm of $\sigma_0^2$ was set to $0.01$.
The frames visible in the middle row of \Cref{fig:waves} were obtained after 10000 epochs.

For PINN, we again follow a similar schedule to \cite{mathews21uncovering}, with some fine-tuning.
After a few attempts, we settled on 15 hidden layers, each of size 200.
The neural network was trained using the Adam optimizer with learning rate $10^{-4}$ on 200,000 epochs.
In our first attempts, we observed that PINN would converge to a constant solution, which almost certainly is a local optimum of the PINN loss function: a constant surely satisfies the wave equation equation exactly, but does not fit the data very well.
This led us to reweight the PINN objective so that data fit was given a weight 1000 times larger than PDE fit.
Despite our best efforts, we did not manage to get satisfactory extrapolation performance using PINN.

We also ran a comparison of different methods on a solution of the wave equation.
Our underlying true solution $u(x,y,t)$ to the wave equation was computed from the initial values $u(x,y,0) = \exp(-10((x-0.35)^2+(y-0.25)^2))$, $\partial_t u(x,y,0) = 0$, and boundary conditions $u(x,-1,t) = u(x,1,t) = u(-1,y,t) = u(1,y,t) = 0$.
We discretize the domain $(x,y,t) \in [-1,1]^2 \times [0,3]$ by using a $64 \times 64 \times 31$ grid.
Our data consists of triples $(x,y,t)$ chosen uniformly at random on the grid and the corresponding function values $u(x,y,t)$.
We compare the performance of five models: two flavors of S-EPGP, two flavors of (Monte-Carlo approximated) EPGP, and PINN.
For the wave equation, the S-EPGP kernel takes the forms $\ks(x,y,t;x',y',t') = \frac{1}{r} \phi(x,y,t)^H \Sigma \phi(x',y',t')$, where $\phi$ is an $r$-vector with entries
\begin{align*}
  \phi_i(x,y,t) = \frac{1}{2} \left( e^{a_ix + b_iy + \sqrt{a_i^2 + b_i^2} t} + e^{a_ix + b_iy - \sqrt{a_i^2 + b_i^2} t}\right),
\end{align*}
and $\Sigma$ is a diagonal matrix with positive entries.
Our five models are described as follows
\begin{description}
  \item[Complex S-EPGP] This is the implementation of S-EPGP as described in \Cref{ssec:S-EPGP}, where $a_i, b_i$ can take any complex values.
    This model corresponds to rows labeled ``$\CC$ S-EPGP ($r$)'' in \Cref{tbl:wave_comparison}, where $r$ refers to the number of Dirac delta measures used.
  \item[Imaginary S-EPGP] This model restricts $a_i, b_i$ to be purely imaginary numbers, i.e.\ in $\sqrt{-1} \RR$.
    This choice is motivated by the discussion in \Cref{ex:wave}.
    This model corresponds to rows labeled ``$i$ S-EPGP ($r$)'' in \Cref{tbl:wave_comparison}, where $r$ refers to the number of Dirac delta measures used.
  \item[Vanilla EPGP] This is the EPGP model, using a Gaussian measure with variance 2, i.e.\ proportional to $e^{-\frac{a^2+b^2}{2 \cdot 2}}$.
    Since the integral defining the EPGP kernel does not have a closed form solution, we can use a Monte-Carlo approximation.
    This can be implemented with a slight modification of S-EPGP: if $r$ is number of Monte-Carlo points, we can sample $2r$ real values $(a_i, b_i)_{i=1}^r$ randomly from a normal distribution $\mathcal{N}(0,2)$ and set $\Sigma$ to the identity matrix.
    To guarantee convergence, we substitute $(a_i, b_i)$ by $\sqrt{-1}(a_i, b_i)$.
    When learning, we disable optimization of $(a_i, b_i)$ and $\Sigma$ and only optimize for $\sigma_0^2$.
    This model corresponds to rows labeled ``EPGP ($r$)'' in \Cref{tbl:wave_comparison}, where $r$ refers to the number of Monte-Carlo points used.
  \item[Length-Scale EPGP] In this EPGP model, we parametrize our Gaussian weight with a variance parameter.
    The underlying measure is thus proportional to $e^{-\frac{a^2+b^2}{2 \cdot \ell^2}}$.
    Here too we implement the same Monte-Carlo and learning scheme as above, with the addition of the optimization parameter $\ell^2 > 0$.
    This model corresponds to rows labeled ``$\ell^2$ EPGP ($r$)'' in \Cref{tbl:wave_comparison}, where $r$ refers to the number of Monte-Carlo points used.
  \item[PINN] We include PINN mostly for completeness, as despite our best efforts we were unable to get it to converge to anything reasonable.
    We use a neural network with the $\tanh$ activation function.
    At each epoch we use 500 random collocation points in the range $[-1,1]^2 \times [0,3]$ for measuring PDE fit and weigh the data fit summand by a factor of 1000 to avoid converging to a constant solution.
    This model corresponds to rows labeled ``PINN ($h,w$)'' in \Cref{tbl:wave_comparison}, where $h$ and $w$ are respectively the number and width of hidden layers.
\end{description}

For the (S-)EPGP models, we initialize the points $(a_i,b_i)$ from a normal distribution with covariance $2I$.
The matrix $\Sigma$ is initialized to the identity matrix and $\sigma_0^2$ is initially set to $10^{-4}$.
The Adam optimizer is used, with the following learning rates: $0.1$ for $(a_i,b_i)$ (S-EPGP); $0.001$ for $\sigma_0^2$ and the diagonal entries of $\Sigma$ (S-EPGP only); $0.01$ for $\ell^2$ (Length-Scale EPGP only).
The learning rates decay to 0 following a cosine annealing scheduler with warm restarts every 500 epochs \cite{loshchilov2016sgdr}.
Each (S-)EPGP model is trained for 3000 epochs in total.
The PINN model is optimized for 3000 epochs as well using the Adam optimizer with learning rate $10^{-4}$.

Results of the comparison are recorded in \Cref{tbl:wave_comparison}, along with total runtimes for 3000 epochs in \Cref{tbl:wave_time_comparison}.
With very few training points, performance across all flavors of (S-)EPGP are similar.
As the size of the training dataset increases, the added flexibility obtained by increasing the number of Dirac delta measures (S-EPGP) or Monte-Carlo points (EPGP) becomes apparent.
Furthermore, runtimes scale extremely well with respect to the size of the training set.
On the other hand, the PINN models fail to capture the desired solution to the wave equation, despite a large and diverse training set, and they exhibit much higher runtimes in general.

We also note that all of our (S-)EPGP kernels are compatible with standard methods to approximate Gaussian Processes, such as sparse variational Gaussian Processes \citep{titsias09variational}, which may further improve runtime and performance.

\begin{table*}[h]
  \caption{Comparison of root mean squared errors for different models applied to the 2-dimensional wave equation.
  For each model, we predict a $64 \times 64 \times 31$ grid of function values based on a varying number of training points and compare it to the $64 \times 64 \times 31$ grid of ``true'' values, computed numerically.
  The means and standard deviations of the RMS errors are recorded below, with best values in bold.}
\label{tbl:wave_comparison}
\vskip 0.15in
\begin{center}
\begin{small}
\begin{tabular}{lcccc}
\toprule
{} & \multicolumn{4}{c}{Root mean square (RMS) error} \\
\cmidrule(r){2-5}
Training data points &               32   &               128  &               512  &               2048 \\
\midrule
$\CC$ S-EPGP (32)    &  $\mathbf{0.176 \pm 0.032}$ &  $0.103 \pm 0.021$ &  $0.040 \pm 0.003$ &  $0.036 \pm 0.001$ \\
$\CC$ S-EPGP (64)    &  $\mathbf{0.159 \pm 0.036}$ &  $0.100 \pm 0.019$ &  $0.028 \pm 0.003$ &  $0.017 \pm 0.002$ \\
$\CC$ S-EPGP (128)   &  $\mathbf{0.157 \pm 0.025}$ &  $\mathbf{0.089 \pm 0.013}$ &  $\mathbf{0.016 \pm 0.002}$ &  $\mathbf{0.007 \pm 0.000}$ \\
$i$ S-EPGP (32)      &  $\mathbf{0.158 \pm 0.020}$ &  $\mathbf{0.087 \pm 0.012}$ &  $0.056 \pm 0.006$ &  $0.053 \pm 0.004$ \\
$i$ S-EPGP (64)      &  $\mathbf{0.149 \pm 0.022}$ &  $\mathbf{0.072 \pm 0.008}$ &  $0.036 \pm 0.002$ &  $0.032 \pm 0.002$ \\
$i$ S-EPGP (128)     &  $\mathbf{0.143 \pm 0.016}$ &  $\mathbf{0.068 \pm 0.011}$ &  $\mathbf{0.022 \pm 0.004}$ &  $\mathbf{0.010 \pm 0.002}$ \\
EPGP (100)           &  $\mathbf{0.188 \pm 0.067}$ &  $0.099 \pm 0.004$ &  $0.078 \pm 0.008$ &  $0.070 \pm 0.004$ \\
EPGP (1000)          &  $\mathbf{0.133 \pm 0.021}$ &  $\mathbf{0.082 \pm 0.011}$ &  $0.046 \pm 0.004$ &  $0.040 \pm 0.003$ \\
$\ell^2$ EPGP (100)  &  $0.231 \pm 0.053$ &  $\mathbf{0.084 \pm 0.014}$ &  $0.054 \pm 0.009$ &  $0.057 \pm 0.022$ \\
$\ell^2$ EPGP (1000) &  $0.201 \pm 0.033$ &  $\mathbf{0.075 \pm 0.011}$ &  $\mathbf{0.017 \pm 0.003}$ &  $\mathbf{0.008 \pm 0.003}$ \\
PINN (7,100)         &  $0.207 \pm 0.038$ &  $0.133 \pm 0.018$ &  $0.113 \pm 0.004$ &  $0.107 \pm 0.004$ \\
PINN (15,200)        &  $0.192 \pm 0.017$ &  $0.130 \pm 0.018$ &  $0.109 \pm 0.003$ &  $0.107 \pm 0.004$ \\
\bottomrule
\end{tabular}
\end{small}
\end{center}
\vskip -0.1in
\end{table*}

\begin{table*}[h]
  \caption{Comparison of runtimes (in seconds) for different models applied to the 2-dimensional wave equation. 
    Each model is trained for 3000 epochs in total and we record the mean and standard deviation of 10 repetitions for each model, with best values in bold.}
\label{tbl:wave_time_comparison}
\vskip 0.15in
\begin{center}
\begin{small}
\begin{tabular}{lcccc}
\toprule
{} & \multicolumn{4}{c}{Runtime (s)} \\
\cmidrule(r){2-5}
Training data points &            32   &            128  &            512  &            2048 \\
\midrule
$\CC$ S-EPGP (32)    &   $8.4 \pm 0.0$ &   $8.4 \pm 0.1$ &   $8.4 \pm 0.1$ &   $8.7 \pm 0.0$ \\
$\CC$ S-EPGP (64)    &   $8.2 \pm 0.1$ &   $8.3 \pm 0.1$ &   $8.5 \pm 0.0$ &   $8.5 \pm 0.0$ \\
$\CC$ S-EPGP (128)   &   $8.5 \pm 0.1$ &   $8.7 \pm 0.0$ &   $8.8 \pm 0.0$ &   $8.8 \pm 0.0$ \\
$i$ S-EPGP (32)      &   $8.3 \pm 0.0$ &   $8.3 \pm 0.0$ &   $8.3 \pm 0.0$ &   $8.5 \pm 0.1$ \\
$i$ S-EPGP (64)      &   $8.2 \pm 0.0$ &   $8.2 \pm 0.0$ &   $8.4 \pm 0.0$ &   $8.4 \pm 0.0$ \\
$i$ S-EPGP (128)     &   $8.4 \pm 0.0$ &   $8.6 \pm 0.0$ &   $8.6 \pm 0.0$ &   $8.7 \pm 0.0$ \\
EPGP (100)           &   $\mathbf{5.6 \pm 0.1}$ &   $\mathbf{5.6 \pm 0.1}$ &   $\mathbf{5.7 \pm 0.1}$ &   $\mathbf{5.8 \pm 0.1}$ \\
EPGP (1000)          &  $14.1 \pm 0.1$ &  $14.3 \pm 0.1$ &  $15.0 \pm 0.1$ &  $17.6 \pm 0.0$ \\
$\ell^2$ EPGP (100)  &   $7.9 \pm 0.1$ &   $7.9 \pm 0.0$ &   $8.2 \pm 0.0$ &   $8.2 \pm 0.0$ \\
$\ell^2$ EPGP (1000) &  $16.6 \pm 0.0$ &  $17.0 \pm 0.2$ &  $19.0 \pm 0.0$ &  $27.0 \pm 0.1$ \\
PINN (7,100)         &  $41.4 \pm 0.4$ &  $41.4 \pm 0.3$ &  $41.7 \pm 0.3$ &  $42.3 \pm 0.3$ \\
PINN (15,200)        &  $93.8 \pm 1.0$ &  $94.2 \pm 1.0$ &  $94.5 \pm 1.4$ &  $94.7 \pm 0.9$ \\
\bottomrule
\end{tabular}
\end{small}
\end{center}
\vskip -0.1in
\end{table*}

\section{Details about Maxwell's equation}\label{sec:app_maxwell}

If we set $\mathbf{\psi} = (E_x,E_y,E_z,B_x,B_y,B_z)^T$, Maxwell's equations correspond to the following eight linear equations with constant coefficients:
\begin{align*}
  \begin{bmatrix}
    \partial_x&\partial_y&\partial_z&0&0&0\\
    0&-\partial_z&\partial_y&\partial_t&0&0\\
    \partial_z&0&-\partial_x&0&\partial_t&0\\
    -\partial_y&\partial_x&0&0&0&\partial_t\\
    0&0&0&\partial_x&\partial_y&\partial_z\\
    -\partial_t&0&0&0&-\partial_z&\partial_y\\
    0&-\partial_t&0&\partial_z&0&-\partial_x\\
    0&0&-\partial_t&-\partial_y&\partial_x&0
  \end{bmatrix} \mathbf{\psi} = 0.
\end{align*}

The output of the Macaulay2 command \texttt{solvePDE} returns two Noetherian multipliers and one variety, namely an affine cone of spheres.

\begin{verbatim}
i1 : needsPackage "NoetherianOperators"
i2 : R = QQ[x,y,z,t];
i3 : M = matrix {
       {x,y,z,0,0,0},
       {0,-z,y,t,0,0},
       {z,0,-x,0,t,0},
       {-y,x,0,0,0,t},
       {0,0,0,x,y,z},
       {-t,0,0,0,-z,y},
       {0,-t,0,z,0,-x},
       {0,0,-t,-y,x,0}
     };
i4 : solvePDE transpose M

              2    2    2    2
o4 = {{ideal(x  + y  + z  - t ), {|   -xz  |, |   xy  |}}}
                                  |   -yz  |  | y2-t2 |
                                  | -z2+t2 |  |   yz  |
                                  |   -yt  |  |  -zt  |
                                  |   xt   |  |   0   |
                                  |    0   |  |   xt  |
\end{verbatim}

We note that while the two operators are independent and generate the excess dual space \cite{harkonen22thesis}, they are slightly "unbalanced", in the sense that the last two coordinates alone uniquely determine the two summands in the Ehrenpreis-Palamodov representation of the solution.
Thus any potential noise in the $y$ and $z$ coordinates of the magnetic field will have a stronger effect on the quality of the inference procedure.
We solve this imbalance by considering the kernel of the matrix as a map between free $R/P$ modules, where $R = \CC[x,y,z,t]$ is a polynomial ring and $P = \langle x^2 + y^2 + z^2 - t^2 \rangle$ is the prime ideal corresponding to our characteristic variety.
 Since the generators of the kernel as an $R/P$-module maps to a set of $\operatorname{frac}(R/P)$-vector space generators, this procedure indeed yields a valid set of Noetherian multipliers \cite{harkonen22thesis}.
This computation can also be carried out using Macaulay2.

\begin{verbatim}
i5 : N = coker transpose M;
i6 : P = first associatedPrimes N

            2    2    2    2
o6 = ideal(x  + y  + z  - t )

o6 : Ideal of R

i7 : kernel sub(M, R/P)

o7 = image {1} | xz    -y2-z2 xy    -yt   zt    0      |
           {1} | yz    xy     y2-t2 xt    0     -zt    |
           {1} | z2-t2 xz     yz    0     -xt   yt     |
           {1} | yt    0      -zt   xz    xy    -y2-z2 |
           {1} | -xt   zt     0     yz    y2-t2 xy     |
           {1} | 0     -yt    xt    z2-t2 yz    xz     |
\end{verbatim}

We recognize our two Noetherian multipliers in the columns of the above matrix, as well as four extra operators.
The six columns above will serve as our Noetherian multipliers $D_1,\dotsc,D_6$ in the S-EPGP method.
This yields a slightly overparametrized, but also more balanced set of Noetherian multipliers, as every operator has a single zero in a distinct entry.

In order to avoid excessive subscripts, we will depart from our convention denoting primal (space-time) variables by the symbol $\xx$ and dual (spectral) variables by the symbol $\zz$.
Instead, we will use $x,y,z,t$ for the space-time variables and $a,b,c,d$ for the corresponding spectral variables.
Note that the symbols $\mathtt{x},\mathtt{y},\mathtt{z},\mathtt{t}$ in the above matrix denoted by \texttt{o7} actually correspond to $\partial_x, \partial_y, \partial_z, \partial_t$ and thus will be evaluated at the spectral points $(a,b,c,d)$ on the variety $V = V(a^2+b^2+c^2-d^2)$.

For the implicit parametrization trick, we let $a,b,c$ be free variables and solve for $d = \pm \sqrt{a^2 + b^2 + c^2}$.
Thus, as described in \Cref{ssec:S-EPGP}, the S-EPGP kernel for Maxwell's equations will have the form
\begin{align*}
  k(x,y,z,t;x',y',z',t') = \frac{1}{6m} \Phi(x,y,z,t)^H \Sigma \Phi(x',y',z',t'),
\end{align*}
where $\Phi(x,y,z,t)$ is the $(6m \times 6)$ matrix whose rows, indexed by $i = 1,\dotsc, m$ and $j = 1,\dotsc 6$ are
\begin{gather*}
  \frac{1}{2} D_j(a_{ij}, b_{ij}, c_{ij}, \sqrt{a_{ij}^2+b_{ij}^2+c_{ij}^2})^T e^{\sqrt{-1} (a_{ij}x + b_{ij}y + c_{ij}z + \sqrt{a_{ij}^2 + b_{ij}^2 + c_{ij}^2} t)} +\\
  \frac{1}{2} D_j(a_{ij}, b_{ij}, c_{ij}, -\sqrt{a_{ij}^2+b_{ij}^2+c_{ij}^2})^T e^{\sqrt{-1} (a_{ij}x + b_{ij}y + c_{ij}z - \sqrt{a_{ij}^2 + b_{ij}^2 + c_{ij}^2} t)}
\end{gather*}

Our goal is to infer an exact solution to Maxwell's equations from a set of 5, 10, 50, 100, and 1000 randomly selected datapoints in the range $(x,y,z,t) \in [-1,1]^3 \times [0,2]$.
The exact solution is a superposition of five plane waves.
Each plane wave is constructed by choosing two orthogonal $3$-vectors $\mathbf{E_{0,i}}$ and $\mathbf{k_i}$.
We then set
\begin{align*}
  \mathbf{E_i}(x,y,z,t) &= \operatorname{Re}\left( \mathbf{E_{0,i}} e^{\sqrt{-1} \langle \mathbf{k_i}, (x,y,z) \rangle - \|\mathbf{k_i}\|t} \right) \\
  \mathbf{B_i}(x,y,z,t) &= \frac{\mathbf{k_i}}{\|\mathbf{k_i}\|} \times \mathbf{E_i}(x,y,z,t) \\
  \mathbf{E}(x,y,z,t) &= \sum_{i=1}^5 \mathbf{E_i}(x,y,z,t) \\
  \mathbf{B}(x,y,z,t) &= \sum_{i=1}^5 \mathbf{B_i}(x,y,z,t).
\end{align*}
In our experiments, we choose
\begin{align*}
  \mathbf{E_{0,1}} &= \begin{bmatrix} -2 \\ 0 \\ 1 \end{bmatrix} &
  \mathbf{E_{0,2}} &= \begin{bmatrix} 1 \\ 1 \\ 0 \end{bmatrix} &
  \mathbf{E_{0,3}} &= \begin{bmatrix} 1 \\ -1 \\ -1 \end{bmatrix} &
  \mathbf{E_{0,4}} &= \begin{bmatrix} 3 \\ 2 \\ 1 \end{bmatrix} &
  \mathbf{E_{0,5}} &= \begin{bmatrix} -7 \\ 2 \\ 3 \end{bmatrix} \\
  \mathbf{k_1} &= \begin{bmatrix} 1 \\ 0 \\ 2 \end{bmatrix} &
  \mathbf{k_2} &= \begin{bmatrix} 0 \\ 0 \\ 1 \end{bmatrix} &
  \mathbf{k_3} &= \begin{bmatrix} 0 \\ -1 \\ 1 \end{bmatrix} &
  \mathbf{k_4} &= \begin{bmatrix} -1 \\ 1 \\ 1 \end{bmatrix} &
  \mathbf{k_5} &= \begin{bmatrix} 0 \\ 3 \\ -2 \end{bmatrix}
\end{align*}

The exact function is then sampled on a uniform $11 \times 11 \times 11 \times 11$ grid in the ranges $(x,y,z) \in [-1,1]^3$ and $t \in [0,2]$.

For S-EPGP, we initialize the spectral points using standard normal random values.
Each S-EPGP run is optimized using the Adam optimizer with learning rate 0.01 over 10000 epochs.

For PINN, we use 5 hidden layers of varying sizes, with the $\tanh$ activation function.
PDE fit is measured using 500 collocation points sampled uniformly in the region $[-1,1]^3 \times [0,2]$.
The loss function is defined as the sum of the mean squared error at the data points and the mean square error of the PDE constraints, similarly to the original PINN paper \cite{raissi2019physics}.
We train the model for 9000 epochs using the Adam optimizer with learning rate $10^{-3}$ and finally 1000 epochs using the L-BFGS optimizer.

\section{Affine subspaces}\label{sec:app_affine}
In this section, we consider the EPGP kernel in the special case where the characteristic variety $V$ is an affine subspace, i.e.\ linear spaces and translations thereof.

We first show that our approach generalizes the approach to \emph{parametrizable} systems of PDEs in \cite{LH_AlgorithmicLinearlyConstrainedGaussianProcessesBoundaryConditions}.
The control theory literature calls such systems \emph{controllable} \cite{shankar_notes}.
Parametrizable systems are characterized by several algebraic conditions, but the one we are interested in is the following: controllable systems are precisely the ones where the \emph{only} characteristic variety is $\CC^n$.
The Ehrenpreis-Palamodov fundamental principle thus implies that all solutions are of the form
\begin{align*}
  f(\xx) = \sum_{i} \int_{\CC^n} D_i(\zz) e^{\langle \xx, \zz \rangle} \, d\mu_i(\zz) = \sum_i D_i(\partial_x) \int_{\CC^n} e^{\langle \xx, \zz \rangle} \, d\mu_i(\zz) =: \sum_i D_i(\partial_\xx) \phi_i(\xx).
\end{align*}
We can omit the $\xx$-variables in the polynomials $D_i$, since every variable is independent over $R/(0)$, where the zero ideal $(0)$ is the prime ideal corresponding to the variety $\CC^n$ \cite{manssour21linear,harkonen22thesis}.
Furthermore, any choice of smooth functions $\phi_i(\xx)$ yields a solution.
In other words, the set of solutions to the PDEs $A(\partial_\xx)f=0$ is the image of the matrix $B(\partial_\xx)$, which is the matrix with columns $D_i(\partial_\xx)$.
Thus the EPGP kernel induces the pushforward GP of $B$, where our latent covariance is the squared exponential kernel, precisely as in \Cref{ex:no_pde}.

We now generalize this to general affine subspaces, i.e.\ translated linear spaces.
Suppose $A$ describes a system of linear PDEs whose only characteristic variety is an affine subspace.
Then there is a parametrization of the variety of the form $\zz \mapsto C\zz + b$ for some $n \times d$ constant matrix $C$ of rank $d$ and a constant vector $b$.
By a change of variables, we may choose the Noetherian operators to be functions of $\zz$ only, so by Ehrenpreis-Palamodov the solution set consists of summands of the form
\begin{align*}
  f_i(\xx) &= \int_{\CC^d} D_i(\zz) e^{\langle \xx, C\zz \rangle + \langle \xx, b \rangle} \, d\mu_i(\zz) \\
           &= e^{\langle \xx, b \rangle} D_i(\partial_{C^T\xx}) \int_{\CC^d} e^{\langle C^T\xx, \zz \rangle} \, d\mu_i(\zz) \\
           &= e^{\langle \xx, b \rangle} (D_i(\partial_y) \phi_i(y))_{y \to C^T\xx},
\end{align*}
where $\phi_i(y)$ is an arbitrary, smooth $d$-variate latent function.
By Ehrenpreis-Palamodov, \emph{every} smooth solution arises this way.

If we gather all $D_i$ inside a matrix $B$, the EPGP kernel (up to a scaling factor) becomes $\ke(\xx,\xx') = e^{\langle \xx, b \rangle} \left( B(\partial_y) \gamma(y,y') B^T(\partial_{y'}) \right)_{y \to C^T\xx} e^{\langle \xx', b \rangle}$, where $\gamma(y, y')$ is the $d$-dimensional squared exponential kernel.
We observe that $\ke(\xx,\xx')$ is (up to scaling) the covariance function of $f(\xx) = e^{\langle \xx, b \rangle} \left( B(\partial_y) g(y) \right)_{y \to C^T\xx}$, where $g(y)$ is a vector of independent latent GPs with squared exponential covariance.
Since $f(\xx)$ is the general form of a solution to the PDEs given by $A$ and realizations of GPs with squared exponential covariance functions are dense in the set of smooth functions, we conclude that our method constructs a kernel for which realizations are dense in the set of smooth solutions to $A$.

\section{Example implementation for Laplace's equation}\label{app:code}
In this section, we present an example implementation of S-EPGP in PyTorch.
Other examples, including code generating all figures and tables in this paper can be found in the repository
\begin{center}
  \begin{small}
    \url{https://github.com/haerski/EPGP}.
  \end{small}
\end{center}

Our aim is to learn a numerically computed solution to Laplace's equation \(\partial_x^2 + \partial_y^2 = 0\) from data.
All input cells will be framed.
We note that the code presented below is self contained, aside from dependencies on \texttt{torch} (version 1.13.1), \texttt{py-pde} (version 0.27.1), and \texttt{numpy} (version 1.23.1), and was tested on Python version 3.9.13.

We start by importing the required packages.
\begin{lstlisting}[language=Python,frame=trBL]
import torch
import matplotlib.pyplot as plt
import numpy as np
from pde import CartesianGrid, solve_laplace_equation

torch.set_default_dtype(torch.float64)
torch.manual_seed(13);
\end{lstlisting}

\subsection{A numerical solution}

We compute a numerical solution to the Laplace equation in 2D, given by the equation \(\partial_x^2 + \partial_y^2 = 0\).
We consider this as the underlying ``true'' solution to the PDE from which we draw training data.
This solution is plotted in \Cref{fig:code_true}.

\begin{lstlisting}[language=Python,frame=trBL]
grid = CartesianGrid([[0, 2 * np.pi]] * 2, 64)
bcs = [{"value": "sin(y)"}, {"value": "sin(x)"}]

res = solve_laplace_equation(grid, bcs)
\end{lstlisting}

\begin{figure*}
  \vskip 0.2in
  \centering
  \begin{subfigure}[b]{0.33\textwidth}
    \centering
    \includegraphics[width=\textwidth]{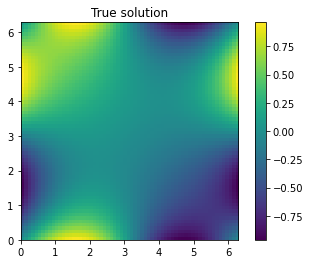}
    \caption{Solution computed numerically.}
    \label{fig:code_true}
  \end{subfigure}
  \hfill
  \begin{subfigure}[b]{0.33\textwidth}
    \centering
    \includegraphics[width=\textwidth]{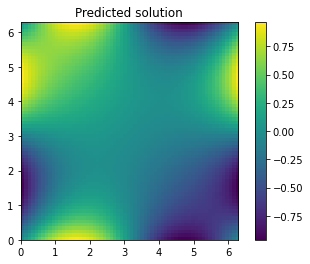}
    \caption{Solution inferred by S-EPGP.}
    \label{fig:code_pred}
  \end{subfigure}
  \hfill
  \begin{subfigure}[b]{0.33\textwidth}
    \centering
    \includegraphics[width=\textwidth]{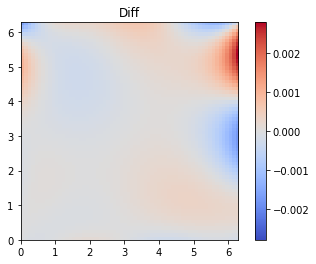}
    \caption{Difference between the solutions.}
    \label{fig:code_diff}
  \end{subfigure}
  \caption{Figures output by code snippets in the example implementation in \Cref{app:code}, depicting solutions to a 2-dimensional Laplace equation, with sinusoidal boundary conditions.}
  \label{fig:code_images}
  \vskip -0.2in
\end{figure*}

We convert the \texttt{py-pde} types to PyTorch tensors

\begin{lstlisting}[language=Python,frame=trBL]
Ps = torch.tensor(grid.cell_coords)
u_true = torch.tensor(res.data)
\end{lstlisting}

The training dataset will consist of 50 randomly sampled points in the numerical
solution

\begin{lstlisting}[language=Python,frame=trBL]
train_pts = 50
train_idx = torch.randperm(len(u_true.flatten()))[:train_pts]
X = Ps.flatten(0,1)[train_idx]
U = u_true.flatten()[train_idx]
\end{lstlisting}

\subsection{Setting up a S-EPGP kernel}

Running the command \texttt{solvePDE} in Macaulay2 reveals two
varieties, namely the lines \( a = i b\) and \(a = - i b\), where
\(a,b\) are spectral variables corresponding to \(x,y\) respectively. For
both lines, there is only one Noetherian multiplier, namely 1. This
means that the Ehrenpreis-Palamodov representation of solutions to
Laplace's equations are of the form \[
\int_{a = ib} e^{ax + by} \, d\mu_1(a,b) + \int_{a=-ib} e^{ax + by} \, d\mu_2(a,b).
\]By parametrizing the two lines, we can rewrite the integrals in a
simpler form. We use the parametrizations
\begin{align*}
  (a,b) &= ((1+i)c, (1-i)c) \\ (a,b) &= ((1-i)c, (1+i)c)
\end{align*}

The integrals then become \[
\int_{\mathbb C} e^{(1+i)cx + (1-i)cy} \, d\mu_1'(c) + \int_{\mathbb C} e^{(1-i)cx + (1+i)cy} \, d\mu_2'(c)
\]We approximate each measure with \(m\) Dirac delta measures. This
translates to the S-EPGP kernel
\[ k(x,y) = \Phi(x,y)^H \Sigma \Phi(x,y), \] where \(\Phi(x,y)\) is the
vector with entries \[ 
\Phi(x,y)_j = \begin{cases} e^{(1+i)c_jx + (1-i)c_jy}, & \text{if } j = 1,\dotsc,m \\ e^{(1-i)c_jx + (1+i)c_jy}, & \text{if } j = m+1, \dotsc, 2m \end{cases},
\] and \(\Sigma\) is a \(2m \times 2m\) diagonal matrix with positive
entries \(\sigma_j^2\). Our goal will be to learn the
\(c_j \in \mathbb C, \sigma_j^2 > 0\) that minimize the log-marginal
likelihood. Given an array \texttt{c} of length \texttt{2m} and a
\texttt{s × 2} matrix \texttt{X} of points \texttt{(x,y)}, the function
\texttt{Phi} returns the \texttt{2m × s} matrix with columns
\(\Phi(x,y)\).

\begin{lstlisting}[language=Python,frame=trBL]
def Phi(c,X):
    c1, c2 = c.chunk(2)
    c1 = c1.unsqueeze(1) * torch.tensor([1+1.j,1-1.j])
    c2 = c2.unsqueeze(1) * torch.tensor([1-1.j,1+1.j])
    cc = torch.cat([c1,c2])
    return cc.inner(X).exp()
\end{lstlisting}

\subsection{Objective function}

Suppose we are trying to learn on \(s\) data points. Let \(X\) be the
\(s \times 2\) matrix with input points \(x_k, y_k\) and \(U\) the
\(s \times 1\) vector with output values \(u_k\). Let \(\Phi\) be the
\(2m \times s\) matrix of features obtained by the function \texttt{Phi}
above.

The negative log-marginal likelihood function is \[
\frac{1}{2\sigma_0^2}(U^TU - U^T\Phi^HA^{-1}\Phi U) + \frac{s - 2m}{2} \log \sigma_0^2 + \frac{1}{2} \log |A| + \frac{1}{2} \log |\Sigma| + \frac{n}{2} \log 2\pi,
\] where \(\sigma_0^2\) is a noise coefficient and
\[ A = \Phi \Phi^H + \sigma_0^2 \Sigma^{-1} \]This can be computed
efficiently using a Cholesky decomposition: \(A = LL^H\). Ignoring
constants and exploiting the structure of \(\Sigma\), we get the
objective function \[
\frac{1}{2\sigma^2}(\|U\|^2 - \|L^{-1}\Phi U\|^2) + \frac{s-2m}{2} \log \sigma_0^2 + \sum_{j=1}^{2m} \log L_{j,j} + \frac{1}{2} \sum_{j=1}^{2m} \log \sigma_j^2
\]

The function below computes the Negative Log-Marginal Likelihood (NLML).
Here we assume that \texttt{Sigma} is a length \(2m\) vector of values
\(\log \sigma_j^2\) and \texttt{sigma0} is \(\log \sigma_0^2\)

\begin{lstlisting}[language=Python,frame=trBL]
def NLML(X,U,c,Sigma,sigma0):
    phi = Phi(c,X)
    A = phi @ phi.H + torch.diag_embed((sigma0-Sigma).exp())
    L = torch.linalg.cholesky(A)
    alpha = torch.linalg.solve_triangular(L, phi @ U, upper=False)
    nlml = 1/(2*sigma0.exp()) * (U.norm().square() - alpha.norm().square())
    nlml += (phi.shape[1] - phi.shape[0])/2 * sigma0
    nlml += L.diag().real.log().sum()
    nlml += 1/2 * Sigma.sum()
    return nlml
\end{lstlisting}

\subsection{Training}

We now set up parameters, initial values, optimizers and the training
routine. We will use \(m = 8\) Dirac delta measures for each integral.

\begin{lstlisting}[language=Python,frame=trBL]
m = 8
Sigma = torch.full((2*m,), -np.log(2*m)).requires_grad_()
sigma0 = torch.tensor(np.log(1e-5)).requires_grad_()
c = (1*torch.randn(2*m, dtype=torch.complex128)).requires_grad_()

U = U.to(torch.complex128).reshape(-1,1)
X = X.to(torch.complex128)
\end{lstlisting}

\begin{lstlisting}[language=Python,frame=trBL]
def train(opt, sched, epoch_max = 1000):
    for epoch in range(epoch_max):
        nlml = NLML(X,U,c,Sigma,sigma0)

        print(f'Epoch {epoch+1}/{epoch_max}\tNLML {nlml.detach():.3f}', end='\r')

        opt.zero_grad()
        nlml.backward()
        opt.step()
        sched.step()
\end{lstlisting}

Here we use a simple Adam optimizer, with learning rate 0.1 and
decaying by a factor of 10 every 1000 steps. We train for 3000 epochs.

\begin{lstlisting}[language=Python,frame=trBL]
opt = torch.optim.Adam([c,Sigma,sigma0], lr = 1e-2)
sched = torch.optim.lr_scheduler.StepLR(opt,3000,gamma=0.1)
train(opt,sched,3000)
\end{lstlisting}

\subsection{Prediction}

Suppose we want to use our trained model to predict the value of the
function at \(r\) points \((x_i,y_i)_{i=1}^r\), organized in the \(r \times 2\)
matrix \(X_*\). We will do inference using the posterior mean, which is
given by \[
\Phi_*^H A^{-1} \Phi U,
\] where \(\Phi_*\) is the \(2m \times r\) matrix, with columns
\(\Phi(x_i,y_i)\) for each row in \(X_*\).

The following function computes the prediction, where the variable
\texttt{X\_} corresponds to \(X_*\).
For numerical stability, we compute a Cholesky decomposition of $A$ instead of inverting.
Since Laplace's equation has real coefficients, the real part of a solution is yet again a solution.
As we are only looking for real valued functions, we will discard the imaginary part of the predicted values.

\begin{lstlisting}[language=Python,frame=trBL]
def predict(X_, X,U,c,Sigma,sigma0):
    with torch.no_grad():
        phi = Phi(c,X)
        A = phi @ phi.H + torch.diag_embed((sigma0-Sigma).exp())
        L = torch.linalg.cholesky(A)

        alpha = torch.linalg.solve_triangular(L, phi @ U, upper=False)
        alpha1 = torch.linalg.solve_triangular(L.H, alpha, upper=True)

        phi_ = Phi(c,X_)
        return (phi_.H @ alpha1).real    
\end{lstlisting}

We compute predicted values on the same points as the numerical solution.
\begin{lstlisting}[language=Python,frame=trBL]
X_ = Ps.flatten(0,1).to(torch.complex128)
u_pred = predict(X_, X, U, c, Sigma, sigma0)
\end{lstlisting}

The root mean square error of our prediction, computed by the code snippet below, is approximately $3.67 \cdot 10^{-4}$.

\begin{lstlisting}[language=Python,frame=trBL]
(u_pred.view_as(u_true) - u_true).square().mean().sqrt().item()
\end{lstlisting}

We can also visually compare the true solution with our prediction.
The following two snippets generate Figures~\ref{fig:code_true} and \ref{fig:code_pred}.

\begin{lstlisting}[language=Python,frame=trBL]
ax = plt.imshow(u_true,extent=2*[0,2*np.pi])
plt.colorbar(ax)
plt.title("True solution");
\end{lstlisting}

\begin{lstlisting}[language=Python,frame=trBL]
ax = plt.imshow(u_pred.view_as(u_true),extent=2*[0,2*np.pi])
plt.colorbar(ax)
plt.title("Predicted solution");
\end{lstlisting}

Finally, we plot the difference between the true and predicted solutions.
The plot is depicted in \Cref{fig:code_diff}.

\begin{lstlisting}[language=Python,frame=trBL]
diff = u_pred.view_as(u_true) - u_true
limit = max(diff.max(), -diff.min())
ax = plt.imshow(diff, extent=2*[0,2*np.pi],
                  cmap='coolwarm', vmin = -limit, vmax = limit)
plt.colorbar(ax)
plt.title("Diff");
\end{lstlisting}

\end{document}